\newtheorem{theorem}{Theorem}
\newtheorem{lemma}{Lemma}
\newtheorem{definition}{Definition}
\newtheorem{example}{Example}
\newcommand{\indicator}[1]{[\![\,{#1}\,]\!]}
\def\ind{\indicator}
\def\sign{{\rm sign}}
\def\Rbb{\mathbb{R}}
\def\Real{\mathbb{R}}
\def\Nbb{\mathbb{N}}
\def\Ybin{\{+1,-1\}}
\def\Xcal{\mathcal{X}}
\def\Ecal{\mathcal{E}}
\def\Fcal{\mathcal{F}}
\def\Gcal{\mathcal{G}}
\def\Hcal{\mathcal{H}}
\def\Lcal{\mathcal{L}}
\def\Ncal{\mathcal{N}}
\def\Rcal{\mathcal{R}}
\def\Scal{\mathcal{S}}
\def\Ucal{\mathcal{U}}
\def\st{\mathrm{\text{subject to }}}
\def\conv{\mathrm{conv}}
\def\Ebb{\mathbb{E}}
\def\<{\langle}
\def\>{\rangle}
\newtheorem{assumption}{Assumption}
\def\x{{\bm x}}
\def\z{{\bm z}}
\def\w{{\bm w}}
\def\0{{\bm 0}}
\def\1{{\bm 1}}
\def\alphabold{{\bm \alpha}}
\def\betabold{{\bm \beta}}
\def\mubold{{\bm \mu}}
\def\xibold{{\bm \xi}}
\begin{document}
\title{A Conjugate Property between Loss Functions and Uncertainty Sets in Classification Problems}
\author{
  Takafumi Kanamori\\ Nagoya University \\ \tt{kanamori@is.nagoya-u.ac.jp}
  \and
   Akiko Takeda\\ Keio University \\ \tt{takeda@ae.keio.ac.jp}
  \and 
   Taiji Suzuki\\ The University of Tokyo\\ \tt{t-suzuki@mist.i.u-tokyo.ac.jp}
}
%
%
%
%
\date{}
\maketitle

\begin{abstract}
 In binary classification problems, mainly two approaches have been proposed; 
 one is loss function approach and the other is uncertainty set approach. 
 The loss function approach is applied to major learning algorithms such as support vector
 machine (SVM) and boosting methods. 
 The loss function represents the penalty of the decision
 function on the training samples. 
 In the learning algorithm, the empirical mean of the 
 loss function is minimized to obtain the classifier. 
 Against a backdrop of the development of mathematical programming, 
 nowadays learning algorithms based on loss functions are widely applied to real-world
 data analysis. In addition, statistical properties of such learning algorithms are
 well-understood based on a lots of theoretical works. 
 On the other hand, the learning method using the so-called uncertainty set is used in 
 hard-margin SVM, mini-max probability machine (MPM) and maximum margin MPM. 
 In the learning algorithm, firstly, the uncertainty set is defined for each binary label 
 based on the training samples. Then, 
 the best separating hyperplane between the two uncertainty sets  is employed as the
 decision function. This is regarded as an extension of the maximum-margin approach. 
 The uncertainty set approach has been studied as an application of robust
 optimization in the field of mathematical programming.  
 The statistical properties of learning algorithms with uncertainty sets have not been 
 intensively studied. 
 In this paper, we consider the relation between the above two approaches. 
 We point out that the uncertainty set is described by using the level set of
 the conjugate of the loss function. Based on such relation, we study statistical
 properties of learning algorithms using uncertainty sets. 
\end{abstract}

\section{Introduction}
\label{sec:Introduction}
In classification problems, the goal is to predict output labels for given input vectors. 
For this purpose, a decision function defined on the input space is estimated from
training samples. The output value of the decision function is used for the label 
prediction. 
In binary classification problems, the label is predicted by the sign of the decision
function. 

Many learning algorithms use loss functions to measure the penalty of misclassifications. 
The decision function minimizing the empirical mean of the loss function over training
samples is employed as the estimator  
\citep{cortes95:_suppor_vector_networ,nc:Schoelkopf+Smola+Williamson:2000,FreundSchapire97,Hastie_etal01}. 
For example, hinge loss, exponential loss and logistic loss are used for 
support vector machine (SVM), Adaboost and logistic regression, respectively. 
Especially in the binary classification tasks, statistical properties of learning
algorithms based on loss functions are well-understood due to intensive recent works. 
See \cite{bartlett06:_convex_class_risk_bound,steinwart05:_consis_of_suppor_vector_machin,
steinwart03:_optim_param_choic_suppor_vector_machin,Schapire_etal98,zhang04:_statis,Vapnik98}
for details. 

As another approach, the maximum-margin criterion is also applied for the statistical
learning. Under the maximum-margin criterion, the best separating hyperplane between the
two output labels is employed as the decision function. 
In hard-margin SVM \citep{Vapnik98}, a convex-hull of input vectors for each binary label
is defined, and the maximum-margin between the two convex-hulls is considered. 
For the non-separable case, $\nu$-SVM provides a similar picture
\citep{nc:Schoelkopf+Smola+Williamson:2000,ICML:Bennett+Bredensteiner:2000}. 
In $\nu$-SVM, the so-called reduced convex-hull which is a subset of the original
convex-hull is used for the learning. A reduced convex-hull is defined for each label, 
and the best separating hyperplane between the two reduced convex-hulls is employed as the 
decision function. Not only polyhedral sets such as the convex-hull of finite input points
but also ellipsoidal sets are applied for classification problems
\citep{Lanckriet:2003:RMA:944919.944934,Nath07}.  
In this paper, the set used in 
the maximum-margin criterion is 
referred to as \emph{uncertainty set}. This term is borrowed from robust optimization
in mathematical programming \citep{book:Ben-Tal+etal:2009}. 

There are some works in which the statistical properties of the learning based on the 
uncertainty set are studied. 
For example, \cite{Lanckriet:2003:RMA:944919.944934} proposed minimax probability machine
(MPM) using the ellipsoidal uncertainty sets, and studied statistical properties under the
worst-case setting. 
In the statistical learning using uncertainty set, the main concern is to develop
optimization algorithms under the maximum margin criterion
\citep{mavroforakis06:_suppor_vector_machin_svm}. 
So far, statistical properties of the learning algorithm using uncertainty sets 
have not been intensively studied compared to the learning using loss functions. 

The main purpose of this paper is to study the learning algorithm using the uncertainty
set. We focus on the relation between the loss function and the uncertainty set. 
We show that the uncertainty set is described by using the conjugate function of the loss
function. 
For given uncertainty set, we construct the corresponding loss function. 
We study the statistical properties of the learning algorithm using the uncertainty set by
applying theoretical results on the loss function approach. 
Then, we establish the statistical consistency of learning algorithms using the uncertainty set. 
We point out that in general the maximum margin criterion for a fixed uncertainty set 
does not provide accurate decision functions. 
We need to introduce a parametrized uncertainty set by the one-dimensional parameter which
specifies the size of the uncertainty set. 
We show that a modified maximum margin criterion with the parametrized uncertainty set
recovers the statistical consistency. 

The paper is organized as follows. 
In Section \ref{sec:Preliminaries}, we introduce the existing method based on the
uncertainty set. 
In Section \ref{sec:Loss_and_Uncertainty}, 
we investigate the relation between loss functions and uncertainty sets. 
Section \ref{sec:Revision_Uncertainty_Set} is devoted to illustrate a way of
revising the uncertainty set to recover nice statistical properties. 
In Section \ref{sec:Learning_Algorithm}, we present a kernel-based learning algorithm
with uncertainty sets. 
In Section \ref{sec:Statistical_Properties}, we prove that the proposed algorithm has the
statistical consistency. 
Numerical experiments are shown in Section \ref{sec:Numerical_Studies}. 
We conclude in section \ref{sec:Conclusion}. 
Some proofs are shown in Appendix. 

We summarize some notations to be used throughout the paper. 
The indicator function is denoted as $\ind{A}$, i.e., $\ind{A}$ equals $1$ if $A$ is true, 
and $0$ otherwise. 
The column vector $\x$ in the Euclidean space is described in bold face. 
The transposition of $\x$ is denoted as $\x^{T}$.  
The Euclidean norm of the vector  $\x$ is expressed as $\|\x\|$. 
For a set $S$ in a linear space, the convex-hull of $S$ is denoted as $\conv{S}$ or $\conv(S)$. 
The number of elements in the set $S$ is denoted as $|S|$. 
The expectation of the random variable $Z$ w.r.t. the probability distribution $P$
is described as $\Ebb_P[Z]$. We will drop the subscript $P$ as $\Ebb[Z]$, when it is clear 
from the context. 
The set of all measurable functions on the set $\Xcal$ is denoted by $L_0(\Xcal)$ or $L_0$
for short. 
The supremum norm of $f\in{L_0}$ is denoted as $\|f\|_\infty$. 
For the reproducing kernel Hilbert space
$\Hcal$, $\|f\|_\mathcal{H}$ is the norm of $f\in\Hcal$ defined from the inner product
$\<\cdot,\cdot\>_\Hcal$ on $\Hcal$.

\section{Preliminaries}
\label{sec:Preliminaries}
We define $\Xcal$ as the input space and $\Ybin$ as the set of binary labels. 
Suppose that the training samples $(x_1,y_1),\ldots,(x_m,y_m)\in\Xcal\times\Ybin$ 
are drawn i.i.d. according to a probability distribution $P$ on $\Xcal\times\Ybin$. 
The goal is to estimate a decision function $f:\Xcal\rightarrow\Rbb$ from a set of
functions $\Fcal$, such that the sign of $f(x)$ provides an accurate prediction of the
unknown binary label associated with the input $x$ under the probability distribution $P$. 
In other word, for the estimated decision function $f$,
the probability of $\sign(f(x))\neq{y}$ is expected to be as small as possible. 
In this article, the composite function of the sign function and the decision function, 
$\sign(f(x))$, is referred to as classifier. 

\subsection{Learning with loss functions}
\label{subsec:Loss_function_approach}
In binary classification problems, 
the prediction accuracy of the decision function $f$ is
measured by the 0-1 loss $\ind{yf(x)\leq0}$ which equals 
$1$ when the sign of $f(x)$ is different from $y$ and $0$ otherwise. 
The average prediction performance of the decision function $f$ is evaluated by the
expected 0-1 loss, i.e., 
\begin{align}
 \Ecal(f)=\Ebb[\,\ind{y f(x)\leq0}\,].
\end{align}
The Bayes risk $\Ecal^*$ is defined as the minimum value of the expected 0-1 loss 
over all the measurable functions on $\Xcal$, 
\begin{align}
 \Ecal^*=\inf\{ \Ecal(f)\,:\,f\in{}L_0 \}. 
\label{eqn:Bayes-risk}
\end{align}
Bayes risk is the lowest achievable error rate under the probability $P$. 
Given the set of training samples, $T=\{(x_1,y_1),\ldots,(x_m,y_m)\}$, 
the empirical 0-1 loss is denoted by
\begin{align}
 \widehat{\Ecal}_T(f)=\frac{1}{m}\sum_{i=1}^{m}\ind{y_if(x_i)\leq 0}. 
 \label{eqn:empirical-error-rate}
\end{align}
The subscript $T$ in $\widehat{\Ecal}_T(f)$ is dropped if it is clear from the context. 

In general, minimization of $\widehat{\Ecal}_T(f)$ is considered as a hard problem
\citep{arora97:_hardn_approx_optim_lattic_codes}. 
The main difficulty is considered to come from non-convexity of the 0-1 loss
$\ind{yf(x)\leq0}$ as the function of $f$. 
Hence, many learning algorithms use a surrogate loss of the 0-1 loss in order to make the
computation tractable.  
For example, SVM uses the hinge loss, $\max\{1-yf(x),0\}$, and Adaboost uses the
exponential loss, $\exp\{-yf(x)\}$. Both the hinge loss and the exponential loss are convex
in $f$, and they provide an upper bound of the 0-1 loss. 
Thus, the minimizer under the surrogate loss is also expected to minimize the 0-1 
loss. The quantitative relation between the 0-1 loss and the surrogate loss was studied by
\cite{bartlett06:_convex_class_risk_bound}. 

To avoid overfitting of the estimated decision function to training samples, 
the regularization is considered. 
By adding the regularization term such as the squared norm of the decision function 
to the empirical surrogate loss, the complexity of the estimated classifier is restricted.  
The balance between the regularization term and the surrogate loss is adjusted by the
regularization parameter~
\citep{evgeniou99:_regul_networ_suppor_vector_machin,steinwart05:_consis_of_suppor_vector_machin}.    
Then, the deviation of the empirical 0-1 loss and the expected 0-1 loss is controlled by
the regularization. When both the regularization term and the surrogate loss are convex, 
the computational tractability of the statistical learning is retained. 


\subsection{Learning with uncertainty sets}
Besides statistical learning using loss functions, 
there is another approach to the classification problems, i.e., 
statistical learning based on the so-called \emph{uncertainty set}. 
We briefly introduce the basic idea of the uncertainty set. 
We assume that $\Xcal$ is a subset of Euclidean space. 

In robust optimization problems \citep{book:Ben-Tal+etal:2009}, the uncertainty set
describes uncertainties or ambiguities included in optimization problems. 
The parameter in the optimization problem may not be precisely determined. 
Instead of the precise information, we have an uncertainty set which probably includes 
the parameter in the optimization problem. 
The worst-case setting is employed to solve the robust optimization problem with the
uncertainty set. 

The statistical learning with uncertainty set is considered as an application of the
robust optimization to classification problems. 
In classification problems, the uncertainty set is designed such that most training
samples are included in the uncertainty set with high probability. 
We prepare an uncertainty set for each binary label. 
For example, $\Ucal_p$ and $\Ucal_n$ are the confidence regions such that the conditional
probabilities, $P(\x\in\Ucal_p|y=+1)$ and $P(\x\in\Ucal_n|y=-1)$, are equal to $0.95$. 
As the other example, the uncertainty set $\Ucal_p$ (resp. $\Ucal_n$) consists of the
convex-hull of input vectors in training samples having the positive (resp. negative)
label. 
The convex-hull of data points is used in hard margin SVM~
\citep{ICML:Bennett+Bredensteiner:2000}. The ellipsoidal uncertainty set 
is also used for the robust classification under the worst-case
setting~\citep{Lanckriet:2003:RMA:944919.944934,Nath07}. 

Based on the uncertainty set, 
we estimate the linear decision function $f(\x)=\w^T\x+b$. 
Here, we consider the \emph{minimum distance problem} 
\begin{align}
 \min_{\x_p,\x_n}\|\x_p-\x_n\|\quad\st\ \x_p\in\Ucal_p,\,\x_n\in\Ucal_n. 
 \label{eqn:min-distance-prob}
\end{align}
Let $\x_p^*$ and $\x_n^*$ be optimal solutions of \eqref{eqn:min-distance-prob}. 
Then, the normal vector of the decision
function, $\w$, is estimated by $c(\x_p^*-\x_n^*)$, where $c$ is a positive real number. 
Figure \ref{fig:uncertaintyset_approach} illustrates the estimated decision boundary. 
When both $\Ucal_p$ and $\Ucal_n$ are compact subsets satisfying
$\Ucal_p\cap\Ucal_n=\emptyset$, the estimated normal vector cannot be the null vector. 
The minimum distance problem appears in the hard margin SVM
\citep{Vapnik98,ICML:Bennett+Bredensteiner:2000}, 
$\nu$-SVM \citep{nc:Schoelkopf+Smola+Williamson:2000,CriBur00}
and the learning algorithms proposed by 
\cite{Nath07,mavroforakis06:_suppor_vector_machin_svm}. 
In Section \ref{subsec:Uncertainty_Set_nuSVM}, we briefly introduce the relation between
$\nu$-SVM and the minimum distance problem. 
In minimax probability machine (MPM) proposed by \cite{Lanckriet:2003:RMA:944919.944934}, 
the other criterion is applied to estimate the linear decision function, though the
ellipsoidal uncertainty set plays an important role also in their algorithm. 
 \begin{figure}
  \centering
  \includegraphics[scale=0.6]{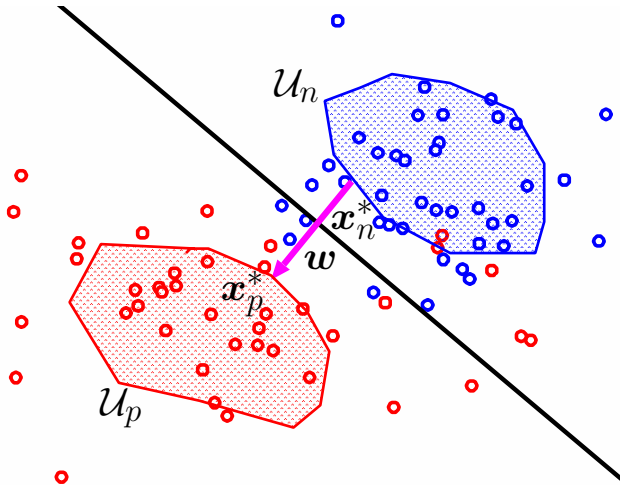} 
  \caption{The estimated decision boundary based on the minimum distance problem with 
  the uncertainty sets $\Ucal_p$ and $\Ucal_n$. }
  \label{fig:uncertaintyset_approach}
 \end{figure}

The minimum distance problem is equivalent with the maximum margin principle
\citep{Vapnik98,ICML:Bennett+Bredensteiner:2000}. 
When the bias term $b$ in the linear decision function is estimated such
that the decision boundary bisects the line segment connecting $\x_p^*$ and $\x_n^*$, the estimated
decision boundary achieves the maximum margin between the uncertainty sets, 
$\Ucal_p,\,\Ucal_n$. 
According to \cite{takeda12:_unified_robus_class_model}, 
we explain how the maximum margin is connected with the minimum distance. 
Suppose that $\Ucal_p$ and $\Ucal_n$ are convex subsets and that
$\Ucal_p\cap\Ucal_n=\emptyset$ holds. Then, the margin of two uncertainty sets along the
direction of $\w$ is given as 
\begin{align*}
 \min\bigg\{ \frac{\w^T\x_p-\w^T\x_n}{\|\w\|}:\x_p\in\Ucal_p,\,\x_n\in\Ucal_n\bigg\}. 
\end{align*}
The maximum margin criterion is described as 
\begin{align*}
 \max_{\w\neq\0}\min\bigg\{
 \frac{\w^T\x_p-\w^T\x_n}{\|\w\|}:\x_p\in\Ucal_p,\,\x_n\in\Ucal_n\bigg\}
 =\min\{\|\x_p-\x_n\|:\x_p\in\Ucal_p,\,\x_n\in\Ucal_n\}. 
\end{align*}
The equality above follows from the minimum norm duality 
\citep{luenberger97:_optim}.

\section{Relation between Loss Functions and Uncertainty Sets}
\label{sec:Loss_and_Uncertainty}
We study the relation between loss functions and uncertainty sets. First, we introduce the
relation in $\nu$-SVM according to \cite{CriBur00} and
\cite{ICML:Bennett+Bredensteiner:2000}. 
Then, we present an extension of $\nu$-SVM to
investigate a generalized relation between loss functions and uncertainty sets. 

\subsection{Uncertainty Set in $\nu$-SVM}
\label{subsec:Uncertainty_Set_nuSVM}
Suppose that the input space $\Xcal$ is a subset of Euclidean space $\Rbb^d$. 
We consider the linear decision function, $f(\x)=\w^T\x+b$, where the normal vector
$\w\in\Rbb^d$ and the bias term $b\in\Rbb$ are to be estimated based on observed training
samples. By applying the kernel trick
\citep{berlinet04:_reprod_hilber,book:Schoelkopf+Smola:2002}, we obtain rich statistical
models for the decision function, while keeping the computational tractability.  

In $\nu$-SVM, the classifier is estimated as the optimal solution of
\begin{align}
 \min_{w,b,\rho}\
 \frac{1}{2}\|\w\|^2-\nu\rho+\frac{1}{m}\sum_{i=1}^{m}\max\{ \rho-y_i(\w^T\x_i+b),\,0 \}, 
 \quad \w\in\Rbb^d,\ b\in\Rbb,\ \rho\in\Rbb, 
 \label{eqn:nu-svm}
\end{align}
where $\nu\in(0,1)$ is a prespecified constant which has the role of the regularization
parameter. 
As \cite{nc:Schoelkopf+Smola+Williamson:2000} pointed out, the parameter $\nu$ controls
the margin errors and number of support vectors. 
In $\nu$-SVM, a variant of the hinge loss, 
$\max\{\rho-y_i(\w^T\x_i+b),\,0\}$, 
is used as the surrogate loss. In the original formulation of $\nu$-SVM, 
the non-negativity constraint, $\rho\geq0$, is introduced. 
As shown by \cite{CriBur00}, we can confirm that the non-negativity constraint is
redundant. 
Indeed, for an optimal solution $\widehat{\w},\widehat{b},\widehat{\rho}$, we have
\begin{align*}
 -\nu\widehat{\rho}
 \leq 
 \frac{1}{2}\|\widehat{\w}\|^2-\nu\widehat{\rho}+\frac{1}{m}\sum_{i=1}^{m}
 \max\{\widehat{\rho}-y_i(\widehat{\w}^T\x_i+\widehat{b}),\,0\}
 \leq{}0, 
\end{align*}
where the last inequality comes from the fact that the parameter, 
$\w=\0,\,b=0,\,\rho=0$, is a feasible solution of \eqref{eqn:nu-svm}. 
As a result, we have $\widehat{\rho}\geq0$ for $\nu>0$. 

We briefly show that the dual problem of \eqref{eqn:nu-svm} yields the minimum distance
problem in which the reduced convex-hulls of training samples are used as uncertainty 
sets. See \cite{ICML:Bennett+Bredensteiner:2000} for details. 
The problem \eqref{eqn:nu-svm} is equivalent with 
\begin{align*}
 &\min_{\w,b,\rho,\xibold}\
 \frac{1}{2}\|\w\|^2-\nu\rho+\frac{1}{m}\sum_{i=1}^{m}\xi_i,\\
 &\st\  \xi_i\geq0,\ \xi_i\geq \rho-y_i(\w^T\x_i+b),\ i=1,\ldots,m. 
\end{align*}
Then, the Lagrangian function is defined as 
\begin{align*}
 L(\w,b,\rho,\xibold,\alphabold,\betabold)=
\frac{1}{2}\|\w\|^2-\nu\rho+\frac{1}{m}\sum_{i=1}^{m}\xi_i
 +\sum_{i=1}^{m}\alpha_i(\rho-y_i(\w^T\x_i+b)-\xi_i)-\sum_{i=1}^{m}\beta_i\xi_i, 
\end{align*}
where $\alpha_i,\,\beta_i,\,i=1,\ldots,m$ are non-negative Lagrange multipliers. 
For the observed training samples, 
we define $M_{p}$ and $M_{n}$ as the set of sample indices for each label, i.e., 
\begin{align}
 M_p=\{i~|~y_i=+1\},\quad M_n=\{i~|~y_i=-1\}.
 \label{eqn:label_index_set}
\end{align}
By applying min-max theorem, we have
\begin{align}
 &\phantom{=}
 \inf_{\w,b,\rho,\xibold}\sup_{\alphabold\geq\0,\betabold\geq\0} 
 L(\w,b,\rho,\xibold,\alphabold,\betabold)
 \nonumber\\
 &=
 \sup_{\alphabold\geq\0,\betabold\geq\0} \inf_{\w,b,\rho,\xibold}
 L(\w,b,\rho,\xibold,\alphabold,\betabold)
 \nonumber\\
 &=
 \sup
 \bigg\{
 -\frac{1}{2}\big\|\sum_{i=1}^{m}\alpha_iy_i\x_i\big\|^2
 ~:~\sum_{i=1}^{m}\alpha_i=\nu,\,\sum_{i=1}^{m}\alpha_iy_i=0,\,
 0\leq\alpha_i\leq\frac{1}{m}
 \bigg\}
 \nonumber\\ 
 &=
 -\frac{\nu^2}{8}\inf
 \bigg\{
 \big\|\sum_{i\in{M_p}}\gamma_i\x_i-\sum_{j\in{M_n}}\gamma_j\x_j \big\|^2
 \,:\,\sum_{i\in{M_p}}\gamma_i=\sum_{i\in{M_n}}\gamma_i=1,\,
 0\leq\gamma_i\leq\frac{2}{m\nu},i=1,\ldots,m
 \bigg\}, 
 \label{eqn:opt_reduced_convex-hull}
\end{align}
where the last equality is obtained by changing the variable from $\alpha_i$ to
$\gamma_i=2\alpha_i/\nu$. 
For the positive (resp. negative) label, we introduce the uncertainty set $\Ucal_p$
(reps. $\Ucal_n$) defined by the reduced convex-hull, i.e., 
\begin{align*}
 o\in\{p,n\},\quad
 \Ucal_o&=\bigg\{\sum_{i\in{M_o}}\gamma_i\x_i\,:\,
 \sum_{i\in{M_o}}\gamma_i=1,\ 0\leq\gamma_i\leq\frac{2}{m\nu},\,i\in{M_o}\bigg\}. 
\end{align*}
When the upper limit of $\gamma_i$ is less than one, the reduced convex-hull is a subset
of the convex-hull of training samples. 
We find that solving the problem \eqref{eqn:opt_reduced_convex-hull} is identical to
solving the minimum distance problem under the uncertainty set of 
the reduced convex-hulls, 
\begin{align*}
 \inf_{\x_p,\x_n}\|\x_p-\x_n\|\quad\st\ \x_p\in{\Ucal_p},\ \x_n\in{\Ucal_n}. 
\end{align*}
The representation based on the minimum distance problem provides an intuitive
understanding of the learning algorithm.

\subsection{Uncertainty Set Associated with Loss Function}
\label{sec:Uncertainty_Sets_Corresponding_Loss}
We consider general loss functions, and study the relation between the loss function and
the corresponding uncertainty set. 
Again, the decision function is defined as $f(\x)=\w^T\x+b$ on $\Rbb^d$. 
Let $\ell:\Rbb\rightarrow\Rbb$ be a convex and non-decreasing function. 
For the training samples, $(\x_1,y_1),\ldots,(\x_m,y_m)$, 
we propose a learning method in which the decision function is estimated by solving 
\begin{align}
 \inf_{\w,b,\rho}-2\rho+\frac{1}{m}\sum_{i=1}^{m}\ell(\rho-y_i(\w^T\x_i+b))
 \ \ \st\ \|\w\|^2\leq \lambda^2,\ b\in\Rbb,\,\rho\in\Rbb. 
 \label{eqn:general-loss}
\end{align}
The regularization effect is introduced by the constraint 
$\|\w\|^2\leq \lambda^2$, where $\lambda$ is the regularization parameter 
which may depend on the sample size. 

The statistical learning using \eqref{eqn:general-loss} is regarded as an extension of
$\nu$-SVM. To see this, we define $\ell(z)=\max\{2z/\nu,0\}$. 
Let $\widehat{\w},\widehat{b},\widehat{\rho}$ be an optimal solution of \eqref{eqn:nu-svm}
for a fixed $\nu\in(0,1)$. By comparing the optimality conditions of \eqref{eqn:nu-svm}
and \eqref{eqn:general-loss}, we can confirm that the problem \eqref{eqn:general-loss}
with $\lambda=\|\widehat{\w}\|$ has the same optimal solution as $\nu$-SVM. 

In the similar way as $\nu$-SVM, we derive the uncertainty set 
associated with the loss function $\ell$ in  \eqref{eqn:general-loss}. 
We introduce the slack variables $\xi_i,i=1,\ldots,m$ satisfying the inequalities
$\xi_i\geq\rho-y_i(\w^T\x_i+b),\,i=1,\ldots,m$. 
Then, the Lagrangian function of \eqref{eqn:general-loss} is given as 
\begin{align*}
 L(\w,b,\rho,\xibold,\alphabold,\mu)=
 -2\rho+\frac{1}{m}\sum_{i=1}^{m}\ell(\xi_i)
 +\sum_{i=1}^m\alpha_i(\rho-y_i(\w^T\x_i+b)-\xi_i)
 +\mu(\|\w\|^2-\lambda^2), 
\end{align*}
where $\alpha_1,\ldots,\alpha_m$ and $\mu$ are the non-negative Lagrange multipliers. 
The optimality conditions, 
\begin{align*}
 \frac{\partial{L}}{\partial\rho}=0,\ \text{and}\ \ 
 \frac{\partial{L}}{\partial{b}}=0
\end{align*}
and the non-negativity of $\alpha_i$ lead to the constraint on Lagrange multipliers, 
\begin{align*}
 \sum_{i\in{M_{p}}}\alpha_i=\sum_{i\in{M_{n}}}\alpha_i=1,\quad \alpha_i\geq0. 
\end{align*}
We define the conjugate function of $\ell(z)$ as
\begin{align*}
 \ell^*(x)=\sup_{z\in\Rbb}\{xz-\ell(z)\}. 
\end{align*}
Then, by applying min-max theorem, we have 
\begin{align}
 &\phantom{=}
 \inf_{\w,b,\rho,\xibold}
 \sup_{\alphabold\geq\0,\mu\geq0}L(\w,b,\rho,\xibold,\alphabold,\mu)
 \nonumber\\
 &=
 \sup_{\alphabold\geq\0,\mu\geq0}\inf_{\w,b,\rho,\xibold}
 L(\w,b,\rho,\xibold,\alphabold,\mu)
 \nonumber\\
 &=
 \sup_{\alphabold,\mu\geq0}\inf_{\w,\xibold}
 \bigg\{
 -\frac{1}{m}\sum_{i=1}^{m}(m\alpha_i\xi_i-\ell(\xi_i))
 -\sum_{i=1}^{m}\alpha_iy_i\x_i^T\w
 +\mu(\|\w\|^2-\lambda^2) 
 \nonumber\\ &\qquad\qquad\qquad 
 : \sum_{i\in{M_{p}}}\alpha_i=\sum_{i\in{M_{n}}}\alpha_i=1,\,\alpha_i\geq0
 \bigg\} \nonumber\\
 &=
 -\inf_{\alphabold,\mu\geq0}
 \bigg\{
 \frac{1}{m}\sum_{i=1}^{m}\ell^*(m\alpha_i)
 +\frac{1}{4\mu}\big\|\sum_{i=1}^{m}\alpha_iy_i\x_i\big\|^2+\mu\lambda^2
 : \sum_{i\in{M_{p}}}\alpha_i = \sum_{i\in{M_{n}}}\alpha_i=1,\,\alpha_i\geq0
 \bigg\} \nonumber\\
 &=
 -\inf_{\alphabold} 
 \bigg\{
 \frac{1}{m}\sum_{i=1}^{m}\ell^*(m\alpha_i)
 +\lambda
 \bigg\|\sum_{i\in{M_p}}\alpha_i\x_i-\sum_{i\in{M_n}}\alpha_i\x_i \bigg\|
 \,:\, \sum_{i\in{M_{p}}}\alpha_i=\sum_{i\in{M_{n}}}\alpha_i=1,\,\alpha_i\geq{0}
 \bigg\}. 
 \label{eqn:general-nu-svm-dual}
\end{align}
In Section \ref{sec:Statistical_Properties}, 
we present a rigorous proof that under some assumptions on $\ell(\xi)$, 
the min-max theorem works in the above Lagrangian function, i.e., there is no duality
gap. For each binary label, we define the parametrized uncertainty sets, 
$\Ucal_p[c]$ and $\Ucal_n[c]$, by 
\begin{align}
 o\in\{p,n\},\quad
 \Ucal_o[c]=\bigg\{\sum_{i\in{M_o}}\alpha_i\x_i
 \,:\, \alpha_i\geq0, \,\sum_{i\in{M_o}}\alpha_i=1,\ 
 \frac{1}{m}\sum_{i\in{M_o}}\ell^*(m\alpha_i)\leq{c} \bigg\}. 
 \label{eqn:uncertainty-set}
\end{align}
Then, the optimization problem in \eqref{eqn:general-nu-svm-dual} is represented by 
\begin{align}
\inf_{c_p,c_n,\z_p,\z_n}\!\! c_p+c_n+\lambda\|\z_p-\z_n\|\quad
 \st\ \z_p\in\Ucal_p[c_p],\,\z_n\in\Ucal_n[c_n],\ c_p,\,c_n\in\Rbb. 
 \label{eqn:RCM-representation}
\end{align}
Let $\widehat{\z}_p$ and $\widehat{\z}_n$ be the optimal solution of $\z_p$ and $\z_n$ 
in \eqref{eqn:RCM-representation}. 
Let $\widehat{\w}$ be an optimal solution of $\w$ in \eqref{eqn:general-loss}. 
The saddle point of the above min-max problem \eqref{eqn:general-nu-svm-dual} provides the 
relation between the $\widehat{\z}_p$, $\widehat{\z}_n$ and $\widehat{\w}$. 
Some calculation yields that, when $\widehat{\z_p}=\widehat{\z}_n$ holds,  
any vector such that $\|\widehat{\w}\|^2\leq\lambda^2$ satisfies the KKT condition of
\eqref{eqn:general-loss}. 
On the other hand, when $\widehat{\z}_p\neq\widehat{\z}_n$ holds, 
$\widehat{\w}$ is given by
$\widehat{\w}=\lambda(\widehat{\z}_p-\widehat{\z}_n)/\|\widehat{\z}_p-\widehat{\z}_n\|$. 
Hence, an optimal solution of the normal vector in the linear decision function 
is given as 
\begin{align}
 \widehat{\w}=
 \begin{cases}
  \displaystyle
  \frac{\lambda}{\|\widehat{\z}_p-\widehat{\z}_n\|}(\widehat{\z}_p-\widehat{\z}_n),
   & \widehat{\z}_p\neq\widehat{\z}_n,\\
  \displaystyle
  \0, & \widehat{\z}_p=\widehat{\z}_n. 
 \end{cases}
 \label{eqn:opt_w}
\end{align}
 We show a sufficient condition that the equality $\widehat{\z}_p=\widehat{\z}_n$ holds. 
 Suppose that $\Ucal_p[c_p]\cap\Ucal_n[c_n]$ is nonempty 
 for all $c_p$ and $c_n$, 
 whenever $\Ucal_p[c_p]$ and $\Ucal_n[c_n]$ are both nonempty. 
 Then, clearly $\z_p=\z_n\in\Ucal_p[c_p]\cap\Ucal_n[c_n]$ is the optimal choice of the
 objective function in \eqref{eqn:RCM-representation}. 
 In $\nu$-SVM with a small $\nu>0$, 
 the reduced convex-hulls satisfy $\Ucal_p\cap\Ucal_n=\emptyset$, and hence, 
 $\widehat{\z}_p=\widehat{\z}_n$ and $\widehat{\w}=\0$ hold. 

The bias term $b$ in the linear decision function is not directly obtained from the
optimal solution of \eqref{eqn:RCM-representation} without knowing 
the explicit form of the loss function $\ell$. 
A simple way of estimating the bias term is to choose
$\widehat{b}=-(\widehat{\w}^T\widehat{\z_p}+\widehat{\w}^T\widehat{\z}_n)/2$, which
provides the decision  
boundary bisecting the line segment connecting $\widehat{\z}_p$ and $\widehat{\z}_n$. 
In the learning algorithm proposed in Section \ref{sec:Learning_Algorithm}, 
the bias term is estimated by minimizing the error rate 
\begin{align}
 \min_{b\in\Rbb}\frac{1}{m}\sum_{i=1}^{m}\,\ind{y_i(\widehat{\w}^T\x_i+b)\leq0}. 
 \label{eqn:min-training_error}
\end{align}
Since the estimated normal vector $\widehat{\w}$ is substituted in the above objective
function, the optimization is tractable.

Based on the argument above, we propose the learning algorithm using uncertainty sets in
Figure \ref{fig:simple_learning_algorithm}. 
It is straightforward to apply the kernel method to the algorithm. 
In order to study statistical properties of the learning algorithm based on uncertainty
sets, we need more elaborate description on the algorithm. Details are presented in
Section \ref{sec:Learning_Algorithm}. 
\begin{figure}[t]
 \begin{center}
 \parbox{0.8\linewidth}{
 \begin{description}
  \item[Learning with uncertainty set:]
  \item[Step 1.] Given training samples, we construct parametrized uncertainty sets $\Ucal_p[c]$ and
	      $\Ucal_n[c]$ in some way. 
  \item[Step 2.] Solve \eqref{eqn:RCM-representation}, and obtain the
	      normal vector by \eqref{eqn:opt_w}. 
  \item[Step 3.] The bias term of the decision function is estimated by
	      \eqref{eqn:min-training_error}. 
 \end{description}
} \end{center}
 \caption{Learning algorithm based on uncertainty set.}
 \label{fig:simple_learning_algorithm}
\end{figure}

We show some examples of uncertainty sets \eqref{eqn:uncertainty-set}
associated with popular loss functions. 
In the following examples, the index sets, $M_{p}$ and $M_{n}$, are defined by 
\eqref{eqn:label_index_set} for the training samples $(\x_1,y_1),\ldots,(\x_m,y_m)$,
and let $m_p$ and $m_n$ be $m_p=|M_p|$ and $m_n=|M_n|$, respectively. 
\begin{example}[$\nu$-SVM]
 As explained above, the problem \eqref{eqn:general-loss} is reduced to $\nu$-SVM by
 defining $\ell(z)=\max\{2z/\nu,0\}$. The conjugate function of $\ell$ is given as 
 \begin{align*}
  \ell^*(\alpha)&=\begin{cases}
	     0,      & \alpha\in[0,2/\nu],\\
	     \infty, & \alpha\not\in[0,2/\nu],\\
	    \end{cases}
 \end{align*}
 and the associated uncertainty set is defined by 
 \begin{align*}
  o\in\{p,n\},\quad
  \Ucal_o[c]&=
  \begin{cases}
   \displaystyle
   \bigg\{
   \sum_{i\in{M_o}}\alpha_i\x_i\,:\,\sum_{i\in{M_o}}\alpha_i=1,\,
   0\leq\alpha_i\leq\frac{2}{m\nu},\,i\in{M_o}
   \bigg\},& c\geq0,\\
   \displaystyle
   \emptyset, & c<0. 
  \end{cases}\quad
 \end{align*}
 For $c\geq0$, the uncertainty set consists of the reduced convex-hull
 of training samples, and it does not depend on the parameter $c$. 
 In addition, the negative $c$ is infeasible. 
 Hence, 
 in the problem \eqref{eqn:RCM-representation}, optimal solutions of $c_p$ and $c_n$
 are given as $c_p=c_n=0$, and the problem is reduced to the simple minimum distance
 problem. 
\end{example}

\begin{example}[Truncated quadratic loss]
\label{example:uncertainty_Truncated_quadratic}
 Now consider $\ell(z)=(\max\{1+z,0\})^2$. The conjugate function is
 \begin{align*}
  \ell^*(\alpha)=
  \begin{cases}
   \displaystyle  -\alpha+\frac{\alpha^2}{4}, & \alpha\geq0,\\
   \displaystyle  \infty, & \alpha<0.
  \end{cases}
 \end{align*}
 For $o\in\{p,n\}$, we define $\bar{\x}_o$ and  $\widehat{\Sigma}_o$ as the empirical mean
 and the empirical covariance  matrix of the samples $\{\x_i\,:\,i\in{M_o}\}$, i.e., 
 \begin{align*}
  \bar{\x}_o=\frac{1}{m_o}\sum_{i\in{M_o}}\x_i,\quad
  \widehat{\Sigma}_o=\frac{1}{m_o}\sum_{i\in{M_o}}(\x_i-\bar{\x}_o)(\x_i-\bar{\x}_o)^T. 
 \end{align*}
 Suppose that $\widehat{\Sigma}_o$ is invertible. 
 Then, the uncertainty set corresponding to the truncated quadratic loss is given as
 \begin{align*}
  o\in\{p,n\},\quad
  \Ucal_o[c]
  &=
   \bigg\{
   \sum_{i\in{M_o}}\alpha_i\x_i\,:\,\sum_{i\in{M_o}}\alpha_i=1,\,
  \alpha_i\geq{0},\,i\in{M_o},\,\sum_{i\in{M_o}}\alpha_i^2\leq \frac{4(c+1)}{m}
  \bigg\}\\
  &=
   \bigg\{
  \z\in\conv\{\x_i:i\in{M_o}\}\,:\,
  (\z-\bar{\x}_o)^T\widehat{\Sigma}_o^{-1}(\z-\bar{\x}_o)\leq\frac{4(c+1)m_o}{m}
  \bigg\}. 
 \end{align*}
 To prove the second equality, let us define the matrix
 $X=(\x_1,\ldots,\x_{m_o})\in\Rbb^{d\times{}m_o}$. 
 For $\alphabold_o=(\alpha_i)_{i\in{M_o}}$ satisfying the constraints, the equality 
 $\z=\sum_{i\in{M_o}}\alpha_i\x_i=(X-\bar{\x}_o\1^T)\alphabold_o+\bar{\x}_o$ holds, 
 where $\1=(1,\ldots,1)^T\in\Rbb^{m_o}$. 
 Then, the singular value decomposition of the matrix $X-\bar{\x}_o\1^T$ and the
 constraint $\|\alphabold_o\|^2\leq4(c+1)/m$ yield the second equality. 
 A similar uncertainty set is used in minimax probability machine (MPM)
 \citep{Lanckriet:2003:RMA:944919.944934} and maximum margin MPM \citep{Nath07}, 
 though the constraint, $\z\in\conv\{\x_i:i\in{M_o}\}$, is not imposed in 
 these learning methods. 
\end{example}

\begin{example}[exponential loss]
 The loss function $\ell(z)=e^z$ is used in Adaboost 
 \citep{FreundSchapire97,Friedman98additivelogistic}. 
 The conjugate function is equal to 
 \begin{align*}
  \ell^*(\alpha)=
  \begin{cases}
   -\alpha+\alpha\log\alpha, & \alpha\geq0,\\
   \infty, & \alpha<0. 
  \end{cases}
 \end{align*}
  Hence, the corresponding uncertainty set is defined as 
  \begin{align*}
   \Ucal_o[c]=
   \bigg\{
   \sum_{i\in{M_o}}\alpha_i\x_i\,:\,\sum_{i\in{M_o}}\alpha_i=1,\,
   \alpha_i\geq{0},\,i\in{M_o},\,
   \sum_{i\in{M_o}}\alpha_i\log\frac{\alpha_i}{1/m_o}\leq{}c+1+\log\frac{m_o}{m}\bigg\}
  \end{align*}
 for $o\in\{p,n\}$. 
 In the uncertainty set, the Kullback-Leibler divergence from the weight
 $\alpha_i,i\in{M_o}$ to the  uniform weight is bounded above. 
\end{example}

In this section, we derived parametrized uncertainty sets associated with convex loss
functions. Inversely, if the uncertainty set is represented as the form of
\eqref{eqn:uncertainty-set}, there exists the corresponding loss function. 
When we consider statistical properties of the classifier estimated based on the
uncertainty set, we can study the equivalent estimator derived from the corresponding loss
function. 
We have many theoretical tools to analyze such estimators. 
However, if the uncertainty set does not have the expression of
\eqref{eqn:uncertainty-set}, the corresponding loss function would not exist. 
In this case, we cannot apply the standard theoretical tools to understand statistical
properties of learning algorithms based on such uncertainty sets. One way to remedy the
drawback is to revise the uncertainty set so as to possess the corresponding loss
function. The next section is devoted to study a way of revising the uncertainty set.

\section{Revision of Uncertainty Sets}
\label{sec:Revision_Uncertainty_Set}
Given a parametrized uncertainty set, generally there does not exist the loss function
which corresponds to the uncertainty set. 
In this section, we present a way of revising the uncertainty set such that 
there exists a corresponding loss function. 

We consider two kinds of representations for parametrized uncertainty sets:
one is vertex representation, and the other is level-set representation. 
Let $M_p$ and $M_n$ be index sets defined in \eqref{eqn:label_index_set}, and we define
$m_p=|M_p|$ and $m_n=|M_n|$. 
For $o\in\{p,n\}$, let $L_o$ be a closed, convex, proper function on $\Rbb^{m_o}$, and 
$L_o^*$ be the conjugate function of $L_o$. 
The argument of $L_o^*$ is represented by $\alphabold_o=(\alpha_i)_{i\in{M_o}}$. 
The \emph{vertex representation} of the uncertainty set is defined as 
\begin{align}
 \Ucal_o[c]=
 \bigg\{
 \sum_{i\in{M_o}}\alpha_i\x_i\,:\,
 L_o^*(\alphabold_o)\leq{c}
 \bigg\},\quad{}o\in\{p,n\}. 
 \label{eqn:uncertainty-vertex-rep}
\end{align}
In Example \ref{example:uncertainty_Truncated_quadratic}, the function 
$L_o^*(\alphabold_o)=\frac{m}{4}\sum_{i\in{M_o}}\alpha_i^2-1$ is employed. 
On the other hand, let us define $h_o:\Rbb^d\rightarrow\Rbb$ as a closed, convex, proper
function, and $h_o^*$ be the conjugate of $h_o$. 
The \emph{level-set representation} of the uncertainty set is defined by 
\begin{align}
 \Ucal_o[c]=
 \bigg\{
 \sum_{i\in{M_o}}\alpha_i\x_i\,:\,
 h_o^*\big(\sum_{i\in{M_o}}\alpha_i\x_i\big)\leq{c}
 \bigg\},\quad{}o\in\{p,n\}. 
 \label{eqn:uncertainty-levelset-rep}
\end{align}
The function $h_o^*$ may depend on the population distribution. 
We suppose that $h_o^*$ does not depend on the sample points, $\x_i,i\in{M_o}$. 
In Example \ref{example:uncertainty_Truncated_quadratic}, 
the second expression of the uncertainty set involves the convex function 
$h_o^*(\z)=(\z-\bar{\x}_o)^T\widehat{\Sigma}_o^{-1}(\z-\bar{\x}_o)$. 
This function does not satisfy the assumption, since $h_o^*$ depends on training samples via
$\bar{\x}_o$ and $\widehat{\Sigma}_o$. 
Instead, the function $h_o^*(\z)=(\z-\mubold_o)^T\Sigma_o^{-1}(\z-\mubold_o)$ 
with the population mean $\mubold_o$ and the population covariance matrix $\Sigma_o$ 
meets the condition. When $\mubold_o$ and $\Sigma_o$ are replaced with the estimated
parameters based on a prior knowledge or a set of samples independent of the training
samples, $\{\x_i:i\in{M_o}\}$, the function $h_o^*$ with the estimated parameters still 
satisfies the condition we imposed above. 


\subsection{From uncertainty sets to loss functions} 
\label{subsec:From_uncertainty_to_loss}
In popular learning algorithms using uncertainty sets such as hard-margin SVM, $\nu$-SVM
and maximum margin MPM, the decision function is estimated 
by solving the minimum distance problem \eqref{eqn:min-distance-prob} with
$\Ucal_p=\Ucal_p[\bar{c}_p]$ and $\Ucal_n=\Ucal_n[\bar{c}_n]$, 
where $\bar{c}_p$ and $\bar{c}_n$ are prespecified
constants. In order to investigate the statistical properties of the learning algorithm
using uncertainty sets, we consider the primal expression of a variant of the minimum 
distance problem \eqref{eqn:min-distance-prob}. 

In Section \ref{sec:Loss_and_Uncertainty}, we derived the problem
\eqref{eqn:RCM-representation} as the dual form of \eqref{eqn:general-loss}. 
Here, we consider the following optimization problem to obtain the loss function 
corresponding to given uncertainty sets having the vertex representation
\eqref{eqn:uncertainty-vertex-rep}, 
\begin{align}
 \begin{array}{l}
  \displaystyle
  \min_{c_p,c_n,\z_p,\z_n} c_p+c_n+\lambda\|\z_p-\z_n\|\\
  \displaystyle
   \st\ c_p,c_n\in\Rbb,\\
  \displaystyle
   \phantom{\st}\ 
   \z_p\in\Ucal_p[c_p]\cap\conv\{\x_i:i\in{M_p}\},\\
  \displaystyle 
   \phantom{\st}\ 
   \z_n\in\Ucal_n[c_n]\cap\conv\{\x_i:i\in{M_n}\}. 
 \end{array}
 \label{eqn:opt-based-on-uncertainty_set}
\end{align}
In the above problem the constraints, $\z_o\in\conv\{\x_i:i\in{M_o}\},o\in\{p,n\}$, 
are added, since the corresponding uncertainty set \eqref{eqn:uncertainty-set}
has the same constraint. 
We derive the primal problem corresponding to 
\eqref{eqn:opt-based-on-uncertainty_set} via the min-max theorem. 
A brief calculation yields that 
 \eqref{eqn:opt-based-on-uncertainty_set} is equivalent to 
\begin{align}
 \begin{array}{l}
  \displaystyle
  \min_{\alphabold}\,
  L_p^*(\alphabold_p)+L_n^*(\alphabold_n)
  +\lambda
  \big\| \sum_{i=1}^m\alpha_i{}y_i\x_i\big\|\\ 
  \displaystyle
   \st\   
   \sum_{i\in{M_p}}\alpha_i=1,\  \sum_{j\in{M_n}}\alpha_j=1,\
    \alpha_i\geq0\ (i=1,\ldots,m). 
 \end{array}
 \label{eqn:loss-expression_opt-based-on-uncertainty_set}
\end{align}
If there is no duality gap, the corresponding primal formulation of
\eqref{eqn:loss-expression_opt-based-on-uncertainty_set} is given as
\begin{align}
 \begin{array}{l}
  \displaystyle
  \inf_{\w,b,\rho,\xibold_p,\xibold_n}-2\rho
  +L_p(\xibold_p)+L_n(\xibold_n),\\
  \displaystyle
  \st\,
 \rho-y_i(\w^T\x_i+b)\leq \xi_i,\,i=1,\ldots,m,
 \quad \|\w\|^2\leq\lambda^2, 
 \end{array}
 \label{eqn:loss-expression_g^*}
\end{align}
where $\xibold_o$ is defined as $\xibold_o=(\xi_i)_{i\in{M_o}}$ for $o\in\{p,n\}$. 

In the primal expression \eqref{eqn:loss-expression_g^*}, 
$L_p$ and $L_n$ are regarded as the loss function for the decision function $\w^T\x+b$ 
on training samples. 
In general, however, the loss function is not represented as the empirical mean over
training samples. Thus, we cannot apply the standard theoretical tools to investigate
statistical properties 
such as Bayes risk consistency for the learning algorithm based on
\eqref{eqn:opt-based-on-uncertainty_set} or \eqref{eqn:loss-expression_g^*}. 
On the other hand, if the problem \eqref{eqn:loss-expression_g^*} is described as the
empirical loss minimization, we can study statistical properties of the algorithm 
by applying the statistical theory developed by
\cite{Vapnik98,steinwart05:_consis_of_suppor_vector_machin,bartlett06:_convex_class_risk_bound}. 
To link the uncertainty set approach with the empirical loss minimization, 
we consider a revision of the uncertainty set.

\subsection{Revised uncertainty sets and corresponding loss functions}
\label{subsec:Revised_uncertainty_set_and_corresponding_loss}
We propose a way of revising uncertainty sets such that 
the primal form \eqref{eqn:loss-expression_g^*} is represented as minimization of the
empirical mean of a loss function. 
Remember that the additivity of the
function is kept unchanged in the conjugate function, i.e., 
$(\ell_1(z_1)+\ell_2(z_2))^*=(\ell_1(z_1))^*+(\ell_2(z_2))^*$. 
\begin{description}
 \item[Revision of uncertainty set defined by vertex representation: ]  
	    Suppose that the uncertainty set is described by 
	    \eqref{eqn:uncertainty-vertex-rep}. 
	    For $o\in\{p,n\}$, we define $m_o$-dimensional vectors $\1_o=(1,\ldots,1)$ and
	    $\0_o=(0,\ldots,0)$. 
	    For the convex function $L_o^*:\Rbb^{m_o}\rightarrow\Rbb$, we define
	    $\bar{\ell}^*:\Rbb\rightarrow\Rbb\cup\{\infty\}$ by 
	    \begin{align}
	     \bar{\ell}^*(\alpha)
	     =
	     \begin{cases}
	      \displaystyle
	      L_p^*(\frac{\alpha}{m}\1_p)+L_n^*(\frac{\alpha}{m}\1_n)
	      -L_p^*(\0_p)-L_n^*(\0_n)
	      &\alpha\geq0,\\
	      \displaystyle
	      \infty, & \alpha<0. 
	     \end{cases}
	     \label{eqn:revised-uncertaintyset-diagonal}
	    \end{align}
	    The revised uncertainty set $\bar{\Ucal}_o[c],\,o\in\{p,n\}$ is defined as 
	    \begin{align*}
	     \bar{\Ucal}_o[c]=
	     \bigg\{
	     \sum_{i\in{M_o}}\alpha_i\x_i\,:\,
	     \sum_{i\in{M_o}}\alpha_i=1,\,\alpha_i\geq 0,\,i\in{M_o},\,
	     \frac{1}{m}\sum_{i\in{M_o}}\bar{\ell}^*(\alpha_im)\leq{c}
	     \bigg\}. 
	    \end{align*}


 \item[Revision of uncertainty set defined by level-set representation: ]  
	    Suppose that the uncertainty set is described by 
	    \eqref{eqn:uncertainty-levelset-rep} and that 
	    the mean of the input vector $\x$ conditioned on the 
	    positive (resp. negative) label is given as 
	    $\mubold_p\,(\text{resp.}\,\mubold_n)$. 
	    The null vector is denoted as $\0$. 
	    We define the function $\bar{\ell}^*:\Rbb\rightarrow\Rbb$ by 
	    \begin{align}
	     \bar{\ell}^*(\alpha)
	     =
	     \begin{cases}
	      \displaystyle
	      h_p^*(\alpha\frac{m_p}{m}\mubold_p)+h_n^*(\alpha\frac{m_n}{m}\mubold_n)
	      -h_p^*(\0)-h_n^*(\0)
	      &\alpha\geq0,\\
	      \displaystyle
	      \infty, & \alpha<0. 
	     \end{cases}
	     \label{eqn:revised-uncertaintyset-levelset}
	    \end{align}
	    The revised uncertainty set $\bar{\Ucal}_o[c], o\in\{p,n\}$ is defined as 
	    \begin{align*}
	     \bar{\Ucal}_o[c]=\bigg\{
	     \sum_{i\in{M_o}}\alpha_i\x_i\,:\,
	     \sum_{i\in{M_o}}\alpha_i=1,\,\alpha_i\geq 0,\,i\in{M_o},\,
	     \frac{1}{m}\sum_{i\in{M_o}}\bar{\ell}^*(\alpha_im)\leq{c},\,
	     \bigg\}. 
	    \end{align*}
	    We apply the parallel shift of training samples 
	    so as to be $\mubold_p\neq\0$ or $\mubold_n\neq\0$. 
\end{description}


We explain the reason why the revised uncertainty set is defined as above. 
In the revision \eqref{eqn:revised-uncertaintyset-diagonal}, 
the uncertainty set is kept unchanged, when the function $L_p^*+L_n^*$ is described in the
additive form. The precise description is presented in the following theorem. 
\begin{theorem}
 \label{theorem:conservation_law}
 Let $L_o^*:\Rbb^{m_o}\rightarrow\Rbb,\,o\in\{p,n\}$ be convex functions, 
 and $\bar{\ell}^*$ be the function defined by \eqref{eqn:revised-uncertaintyset-diagonal}
 for given $L_p^*$ and $L_n^*$. 
 Suppose that $\ell:\Rbb\rightarrow\Rbb\cup\{\infty\}$ is 
 a closed, convex, proper function such that 
 $\ell^*(0)=0$ and $\ell^*(\alpha)=\infty$ for $\alpha<0$ hold. 
 \begin{enumerate}
  \item Suppose that the equality 
	\begin{align*}
	 L_p^*(\alphabold_p)+L_n^*(\alphabold_n)
	 -L_p^*(\0_p)-L_n^*(\0_n)
	 =\frac{1}{m}\sum_{i=1}^m\ell^*(\alpha_im)
	\end{align*}
	holds for all non-negative $\alpha_i,\,i=1,\ldots,m$. 
	Then, the equality $\bar{\ell}^*=\ell^*$ holds. 
  \item Suppose that the equality 
	\begin{align*}
	 L_p^*(\alpha\1_p)+L_n^*(\alpha\1_n)
	 -L_p^*(\0_p)-L_n^*(\0_n)
	 =\frac{1}{m}\sum_{i=1}^m\ell^*(\alpha{}m)
	 =\ell^*(\alpha{}m)
	\end{align*}
	holds for all $\alpha\geq0$. Then, the equality $\bar{\ell}^*=\ell^*$ holds. 
 \end{enumerate}
\end{theorem}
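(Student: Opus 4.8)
The plan is to recognize that this theorem, despite its name, requires no optimization or duality argument: it is a direct unwinding of the definition \eqref{eqn:revised-uncertaintyset-diagonal} combined with one carefully chosen substitution in each of the two hypotheses. The key observation is that, for $\alpha\geq0$, the value $\bar{\ell}^*(\alpha)$ is by construction exactly the \emph{diagonal value} of $L_p^*+L_n^*$ (with the constant $L_p^*(\0_p)+L_n^*(\0_n)$ subtracted) evaluated at the points $\frac{\alpha}{m}\1_p$ and $\frac{\alpha}{m}\1_n$, and both hypotheses are precisely statements about this diagonal. So the whole proof amounts to matching the scaling factors and checking the negative half-line separately.

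First I would dispose of the domain $\alpha<0$. There the definition \eqref{eqn:revised-uncertaintyset-diagonal} gives $\bar{\ell}^*(\alpha)=\infty$, while the standing hypothesis on $\ell$ gives $\ell^*(\alpha)=\infty$, so $\bar{\ell}^*(\alpha)=\ell^*(\alpha)$ holds trivially, and it remains only to match the two functions on $\alpha\geq0$. For Part 1, I would specialize the assumed identity to the diagonal: fix $\alpha\geq0$ and set $\alpha_i=\alpha/m$ for every $i=1,\dots,m$, which is an admissible non-negative choice. Then $\alphabold_p=\frac{\alpha}{m}\1_p$ and $\alphabold_n=\frac{\alpha}{m}\1_n$, so the left-hand side of the hypothesis becomes exactly $\bar{\ell}^*(\alpha)$, while the right-hand side collapses to $\frac{1}{m}\sum_{i=1}^m\ell^*(\alpha)=\ell^*(\alpha)$, yielding $\bar{\ell}^*(\alpha)=\ell^*(\alpha)$.

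For Part 2 the hypothesis already lives on the diagonal, so I would simply rescale the variable: replacing $\alpha$ by $\alpha/m$ in the assumed identity $L_p^*(\alpha\1_p)+L_n^*(\alpha\1_n)-L_p^*(\0_p)-L_n^*(\0_n)=\ell^*(\alpha m)$ turns its left-hand side into $\bar{\ell}^*(\alpha)$ and its right-hand side into $\ell^*(\alpha)$, again giving the claimed equality. The only points demanding any care---and the closest thing to an obstacle---are the bookkeeping of the scaling constants (the $1/m$ normalization in \eqref{eqn:revised-uncertaintyset-diagonal}, the factor $m$ inside $\ell^*(\alpha m)$ in Part 2, and the argument $\alpha_i m$ in the sum in Part 1) and confirming consistency at the boundary $\alpha=0$. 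At $\alpha=0$ the subtraction of $L_p^*(\0_p)+L_n^*(\0_n)$ forces $\bar{\ell}^*(0)=0$, which matches the assumption $\ell^*(0)=0$; this is precisely where that hypothesis (together with $\ell^*(\alpha)=\infty$ for $\alpha<0$) is used, and it verifies that the constant offset built into the definition is the correct one.
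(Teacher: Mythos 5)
Your proposal is correct and takes essentially the same route as the paper's own proof: the diagonal substitution $\alpha_i=\alpha/m$ reduces Part 1 to the definition \eqref{eqn:revised-uncertaintyset-diagonal}, the negative half-line is matched trivially since both functions equal $\infty$ there, and Part 2 follows by the rescaling $\alpha\mapsto\alpha/m$ (which the paper dismisses as ``straightforward''). Your additional check at $\alpha=0$, confirming that the subtracted constant $L_p^*(\0_p)+L_n^*(\0_n)$ is consistent with $\ell^*(0)=0$, is a harmless elaboration beyond what the paper writes.
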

\begin{proof}
 We prove the first statement. From the definition of $\bar{\ell}^*$ and the assumption on
 $\ell^*$, the equality $\ell^*(\alpha)=\bar{\ell}^*(\alpha)$ holds for
 $\alpha<0$. Suppose $\alpha\geq{0}$. 
 The assumption on $L_p^*$ and $L_n^*$ leads to 
 $L_p^*(\frac{\alpha}{m}\1_p)+L_n^*(\frac{\alpha}{m}\1_n)-L_p^*(\0_p)-L_n^*(\0_n)
 =\ell^*(\alpha)$. 
 Hence, we have $\ell^*=\bar{\ell}^*$. 
 The second statement of the theorem is straightforward. 
\end{proof}
Theorem \ref{theorem:conservation_law} implies that 
the transformation of 
$L_p^*+L_n^*$ to $\frac{1}{m}\sum_{i=1}^m\bar{\ell}^*(\alpha_i{}m)$ 
is a projection onto the set of functions with the additive form. 
In addition, the second statement of Theorem \ref{theorem:conservation_law} 
denotes that the projection is uniquely determined when we impose the condition 
that the values on the diagonal 
$\{(\alpha,\ldots,\alpha)\in\Rbb^{m}\,:\,\alpha\geq0\}$ 
are unchanged. 

Next, we explain the validity of the formula \eqref{eqn:revised-uncertaintyset-levelset}. 
We want to find a function $\bar{\ell}^*(\alpha)$ such that
$h_p^*(\sum_{i\in{M_p}}\alpha_i\x_i)+h_n^*(\sum_{i\in{M_n}}\alpha_i\x_i)-h_p^*(\0)-h_n^*(\0)$ 
is close to 
$\frac{1}{m}\sum_{i=1}^m\bar{\ell}^*(m\alpha_i)$ in some sense. 
We substitute 
$\alpha_i=\alpha/m$ into $h_o^*(\sum_{i\in{M_o}}\alpha_i\x_i),\,o\in\{p,n\}$. 
In the large sample limit,
$h_o^*(\sum_{i\in{M_o}}\alpha/m\,\x_i)$ is approximated by 
$h_o^*(\alpha\frac{m_o}{m}\mubold_o)$. 
Suppose that 
\[
h_p^*(\alpha\frac{m_p}{m}\mubold_p)+h_n^*(\alpha\frac{m_n}{m}\mubold_n)-h_p^*(\0)-h_n^*(\0)
\]
is
represented as $\frac{1}{m}\sum_{i=1}^m\bar{\ell}^*(\frac{\alpha}{m}m)=\bar{\ell}^*(\alpha)$. 
Then, we obtain \eqref{eqn:revised-uncertaintyset-levelset}. 

For the revised uncertainty sets $\bar{\Ucal}_p[c]$ and $\bar{\Ucal}_n[c]$, 
the corresponding primal problem of 
\begin{align}
 \min_{c_p,c_n,\z_p,\z_n}c_p+c_n+\lambda\|\z_p-\z_n\|\quad
 \st \z_p\in\bar{\Ucal}_p[c_p],\ \z_n\in\bar{\Ucal}_n[c_n]
 \label{eqn:revised-uncertainty-problem}
\end{align}
is given as 
\begin{align*}
 &\inf_{w,b,\rho,\xi_p,\xi_n}
 -2\rho
 +\frac{1}{m}\sum_{i=1}^m\bar{\ell}(\xi_i)
 &\st\,\rho-y_i(\w^T\x_i+b)\leq\xi_i,\,i=1,\ldots,m,\ \ 
 \|\w\|^2\leq\lambda^2. 
\end{align*}
The revision of the uncertainty sets leads to the empirical mean of the revised loss
function $\bar{\ell}$. 
When we study statistical properties of the estimator given by the optimal 
solution of \eqref{eqn:revised-uncertainty-problem}, we can apply the standard theoretical
tools, since the objective in the primal expression is described by the empirical mean of 
the revised loss functions. 

We show some examples to illustrate how the revision of the uncertainty set works. 
\begin{example}
\label{example:quad-uncertainty-set-to-loss}
 Let $L_o^*,\,o\in\{p,n\}$ be the convex function $L_o^*(\alphabold_o)=\alphabold_o^TC_o\alphabold_o$, 
 where $C_o$ is a positive definite matrix. 
 The revised function defined by \eqref{eqn:revised-uncertaintyset-diagonal} 
 is given as 
 \begin{align*}
  \bar{\ell}^*(\alpha)=\alpha^2\frac{\1_p^TC_p\1_p+\1_n^TC_n\1_n}{m^2}
 \end{align*}
 for $\alpha\geq0$. 
 Then, we have 
 \begin{align*}
  \frac{1}{m}\sum_{i=1}^m\bar{\ell}^*(\alpha_im)
  =
  \frac{\1_p^TC_p\1_p+\1_n^TC_n\1_n}{m}\sum_{i=1}^{m}\alpha_i^2
 \end{align*}
 When both $C_p$ and $C_n$ are the identity matrix, 
 the equality 
 \begin{align*}
  L_p^*(\alphabold_p)+L_n^*(\alphabold_n)=\frac{1}{m}\sum_{i=1}^{m}\bar{\ell}^*(\alpha_im)
  =\sum_{i=1}^{m}\alpha_i^2
 \end{align*}
 holds. Let $k$ be $k=\1_p^TC_p\1_p+\1_n^TC_n\1_n$. 
 Then, the revised uncertainty set is given as 
 \begin{align*}
  o\in\{p,n\},\quad
  \bar{\Ucal}_o[c]
  &=
  \bigg\{
  \sum_{i\in{M_o}}\alpha_i\x_i\,:
  \sum_{i\in{M_o}}\alpha_i=1,\,\alpha_i\geq0\,(i\in{M_o}), 
  \sum_{i\in{M_o}}\alpha_i^2 \leq  \frac{cm}{k}
  \bigg\}. 
 \end{align*}
 For $o\in\{p,n\}$, let $\bar{\x}_o$ and $\widehat{\Sigma}_o$ be the empirical mean and
 the empirical covariance matrix, 
 \begin{align*}
  \bar{\x}_o=\frac{1}{m_o}\sum_{i\in{M_o}}\x_i,\quad
  \widehat{\Sigma}_o
  =\frac{1}{m_o}\sum_{i\in{M_o}}(\x_i-\bar{\x}_o)(\x_i-\bar{\x}_o)^T. 
 \end{align*}
 If $\widehat{\Sigma}_o$ is invertible, we have 
 \begin{align*}
  \bar{\Ucal}_o[c]
  =
  \bigg\{
  \z\in\conv\{\x_i:i\in{M_o}\}\,:\,
  (\z-\bar{\x}_o)^T\widehat{\Sigma}_o^{-1}(\z-\bar{\x}_o)\leq
  \frac{c{}mm_o}{k}
  \bigg\}. 
 \end{align*}
 In the learning algorithm based on the revised uncertainty set, 
 the estimator is obtained by solving 
 \begin{align*}
  &\phantom{\Longleftrightarrow}
  \min_{c_p,c_n,\z_p,\z_n}c_p+c_n+\lambda\|\z_p-\z_n\|\ \ \st\ 
  \z_p\in\bar{\Ucal}_p[c_p],\,\z_n\in\bar{\Ucal}_n[c_n] \\
  &\Longleftrightarrow
  \min_{c_p,c_n,\z_p,\z_n} c_p+c_n+\frac{m^2\lambda}{4k}\|\z_p-\z_n\|\ \ \st\ 
  \z_p\in\bar{\Ucal}_p\bigg[\frac{4c_p k}{m^2}\bigg],\,
  \z_n\in\bar{\Ucal}_n\bigg[\frac{4c_n k}{m^2}\bigg]. 
 \end{align*}
 The corresponding primal expression is given as 
 \begin{align*}
  \min_{\w,b,\rho,\xibold}-2\rho
  +\frac{1}{m}\sum_{i\in{M_p}}\xi_i^2\ \  
  \st\ \rho-y_i(\w^T\x_i+b)\leq\xi_i,\,0\leq\xi_i,\,\forall{i},\ 
  \|\w\|^2\leq{}\left(\frac{m^2\lambda}{4k}\right)^2. 
 \end{align*}
\end{example}

\begin{example}
\label{example:quad-levelset-uncertainty-set-to-loss}
 We define $h_o^*:\Xcal\rightarrow\Rbb$ for $o\in\{p,n\}$ by 
 \begin{align*} 
  h_o^*(\z)=(\z-\mubold_o)^T C_o(\z-\mubold_o) 
 \end{align*}
 where $\mubold_o$ is the mean vector of the input vector $\x$ conditioned on each label 
 and $C_o$ is a positive definite matrix. 
 In practice, the mean vector is estimated by using a prior knowledge 
 which is independent of the training samples $\{(\x_i,y_i):i=1,\ldots,m\}$. 
 Suppose that $\mubold_o\neq{\0}$. 
 Then, for $\alpha\geq0$, the revision of \eqref{eqn:revised-uncertaintyset-levelset} leads to 
 \begin{align*}
  \bar{\ell}^*(\alpha)
  &=
   \left((\alpha\frac{m_p}{m}-1)^2-1\right)\mubold_p^TC_p\mubold_p
  +\left((\alpha\frac{m_n}{m}-1)^2-1\right)\mubold_n^TC_n\mubold_n\\
  &=  
  b_1\alpha+b_2\alpha^2, 
 \end{align*}
 where $b_1$ and $b_2(>0)$ are constant numbers. 
 Thus, we have
 \begin{align*}
  \bar{\Ucal}_o[c]
  &=
  \bigg\{
  \sum_{i\in{M_o}}\alpha_i\x_i \,:\,
  \sum_{i\in{M_o}}\alpha_i=1,\,\alpha_i\geq0\, (i\in{M_o}),\,
  \sum_{i\in{M_o}}\alpha_i^2\leq
  \frac{c-b_1}{mb_2}
  \bigg\}\\
  &=
  \bigg\{
  \z\in\conv\{\x_i:i\in{M_o}\} \,:\,
  (\z-\bar{\x}_o)^T\widehat{\Sigma}_o^{-1}(\z-\bar{\x}_o)\leq
  m_o\cdot\frac{c-b_1}{mb_2}
  \bigg\}, 
 \end{align*}
 where $\bar{\x}_o$ and $\widehat{\Sigma}_o$ are the estimators of 
 the mean vector and the covariance matrix 
 based on training samples $\{\x_i:i\in{M_o}\}$. 
 The corresponding loss function is obtained in the same way as
 Example \ref{example:quad-uncertainty-set-to-loss}. 
 Figure \ref{fig:revised-ellipsoidal_uncertainty-set} illustrates an example of the
 revision of the uncertainty set. 
 In the left panel, the uncertainty set does not match the distribution of the training
 samples. The revised uncertainty set in the right panel seems to well approximate 
 the dispersal of the training samples. 
 \begin{figure}
  \centering
   \begin{tabular}{cc}
   \includegraphics[scale=0.4]{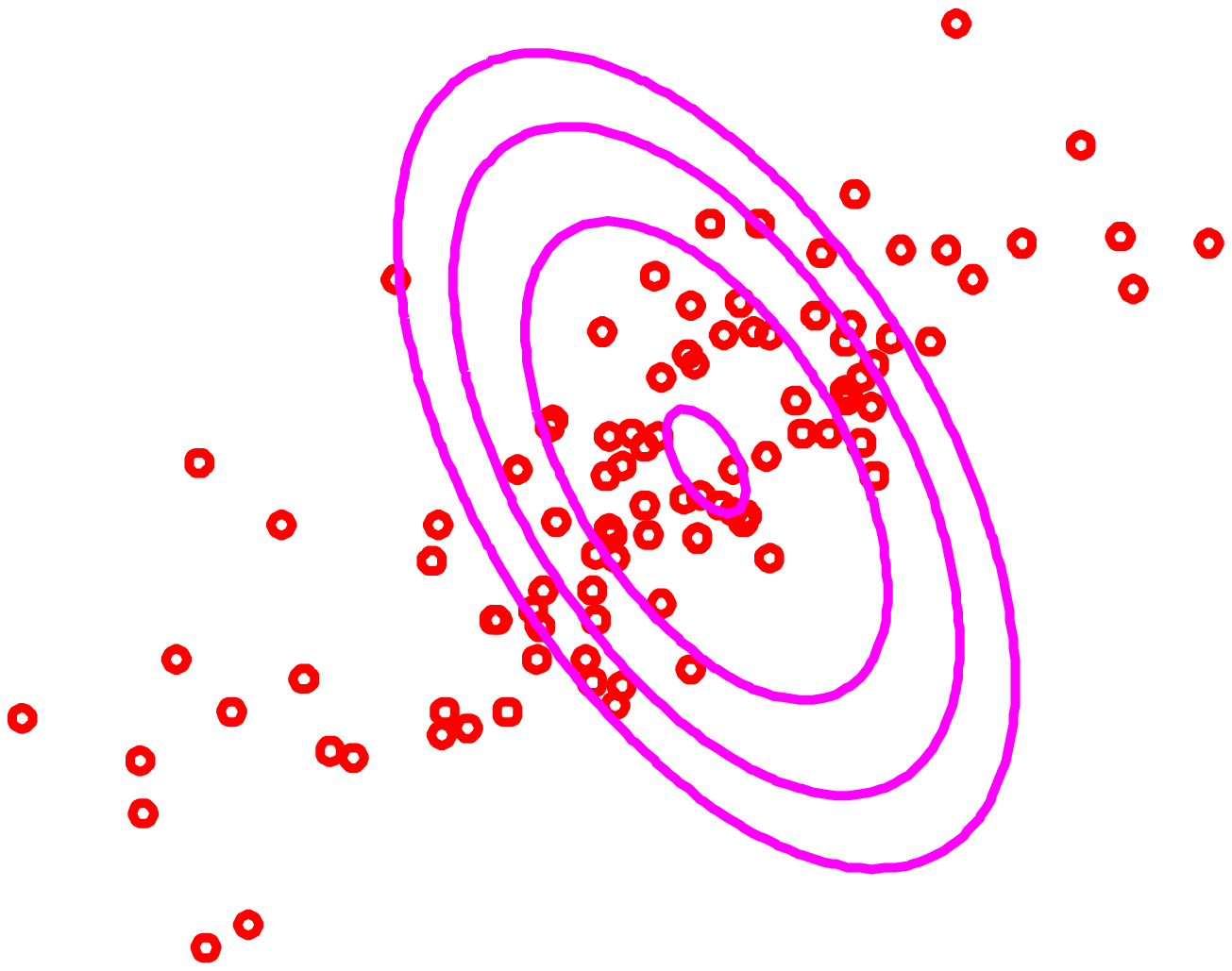} &
   \includegraphics[scale=0.4]{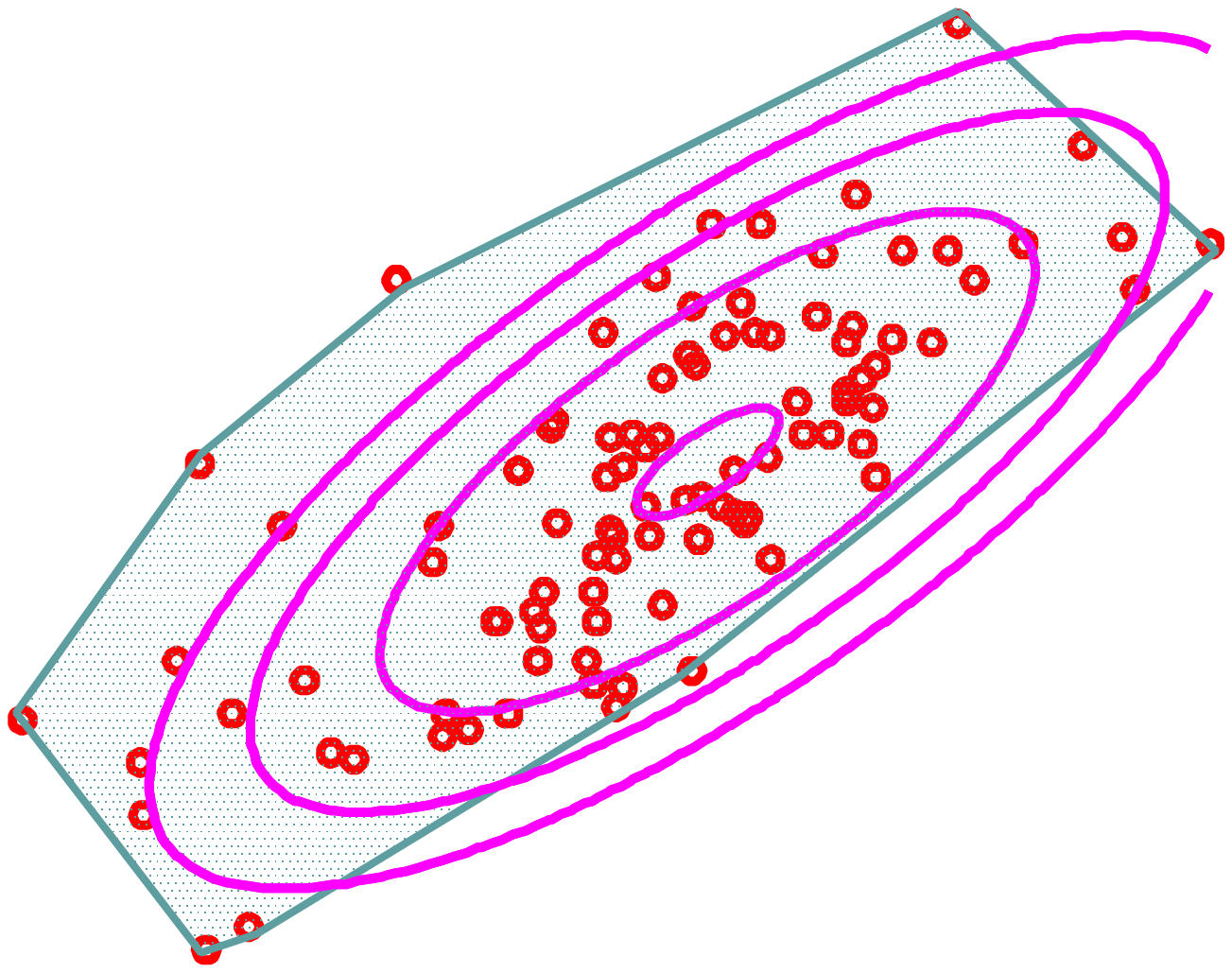}\\
     original uncertainty set $\Ucal_p[c]$\ \  & \ \ 
	 revised uncertainty set $\bar{\Ucal}_p[c]$
   \end{tabular}
  \caption{
  Training samples and the uncertainty sets are depicted. 
  Left panel: the original uncertainty set for the positive label. 
  Right panel: the revised uncertainty set which consists of the intersection of the
  ellipsoid and the convex-hull of the input vectors with positive label. 
  }
  \label{fig:revised-ellipsoidal_uncertainty-set}
 \end{figure}
\end{example}

\begin{example}
 \label{exam:uncertaity-set_with_error}
 We suppose that for $o\in\{p,n\}$, $\mubold_o$ is the mean vector and $\Sigma_o$ is the covariance matrix of the input
 vector conditioned on each label. 
 We define the uncertainty set by 
 \begin{align*}
  o\in\{p,n\},\quad
  \Ucal_o[c]=
  \left\{ z\in\conv\{\x_i:i\in{M_o}\}
  \,:\,(\z-\mubold)^T \Sigma_o^{-1}(\z-\mubold)\leq{c},\,\forall\mubold\in{}\mathcal{A}
  \right\}, 
 \end{align*}
 where $\mathcal{A}$ denotes the estimation error of the mean vector $\mubold$. 
 For a fixed radius $r>0$, $\mathcal{A}$ is defined as 
 \begin{align*}
  \mathcal{A}=\left\{
  \mubold\in\Xcal~:~
  (\mubold-\mubold_o)^T\Sigma_o^{-1}(\mubold-\mubold_o)\leq{r^2}
  \right\}. 
 \end{align*}
 The uncertainty set with estimation error is used by
 \cite{Lanckriet:2003:RMA:944919.944934} in MPM. 
 The above uncertainty sets will be useful, when the probability in the training  phase is
 slightly different from that in the test phase. 
 Brief calculation yields that $\Ucal_o[c]$ is
 represented by the level set of the convex function
 \begin{align*}
  h_o^*(\z)
  =
  \max_{\mubold\in{}\mathcal{A}}\, 
  (\z-\mubold)^T \Sigma_o^{-1}(\z-\mubold)
  =\bigg(\sqrt{(\z-\mubold_o)^T\Sigma_o^{-1}(\z-\mubold_o)}+r\bigg)^2 
 \end{align*}
 The revised uncertainty set $\bar{\Ucal}_o[c]$ is defined by the function
 $\bar{\ell}^*$ which is given as 
 \begin{align}
  \bar{\ell}^*(\alpha)
  &=
  \left(\bigg|\alpha\frac{m_p}{m}-1\bigg|\sqrt{\mubold_p^T\Sigma_p^{-1}\mubold_p}+r\right)^2
  -
  \left(\sqrt{\mubold_p^T\Sigma_p^{-1}\mubold_p}+r\right)^2\nonumber\\
  &\phantom{=}
  +\left(\bigg|\alpha\frac{m_n}{m}-1\bigg|\sqrt{\mubold_n^T\Sigma_n^{-1}\mubold_n}+r\right)^2
  -
  \left(\sqrt{\mubold_n^T\Sigma_n^{-1}\mubold_n}+r\right)^2. 
  \label{eqn:loss-for-uncertainty-set-with-error}
 \end{align}
 We suppose that $\mubold_p\neq\0$ and $\mubold_n=\0$ hold. 
 Let $d=\sqrt{\mubold_p^T\Sigma_p^{-1}\mubold_p}$ and $h=r/d(>0)$. 
 Then, the corresponding loss function is given as 
\begin{align*}
 \bar{\ell}(z)=\frac{md^2}{m_p}\,u\big(\frac{z}{d^2}\big),  
\end{align*}
 where $u(z)$ as defined as 
 \begin{align}
  u(z)=\begin{cases}
	\displaystyle	\ \ 0, & z\leq -2h-2,\\
	\displaystyle	\ \big(\frac{z}{2}+1+h\big)^2, & -2h-2\leq{z}\leq-2h, \\
	\displaystyle	\ z+2h+1, & -2h\leq{z}\leq{2h},\\
	\displaystyle	\ \frac{z^2}{4}+z(1-h)+(1+h)^2, & 2h\leq{z}. 
       \end{cases}
  \label{eqn:uo_for_uncertaintyset_est_err}
 \end{align}
 Figure \ref{fig:revised_loss}
 depicts the function $u(z)$ with $h=1$. 
 When $r=0$ holds, $\bar{\ell}(z)$ is reduced to the truncated quadratic function 
 shown in Example 
 \ref{example:quad-uncertainty-set-to-loss} and
 \ref{example:quad-levelset-uncertainty-set-to-loss}. 
 For positive $r$,
 $\bar{\ell}(z)$ is linear around $z=0$. 
 This implies that by introducing 
 the confidence set of the mean vector, $\mathcal{A}$, 
 the penalty for the misclassification is reduced from quadratic to linear around the
 decision boundary, though the original uncertainty set $\Ucal_o[c]$ does not correspond
 to minimization of an empirical loss function. 
 \begin{figure}[tb]
 \begin{center}
  \includegraphics[scale=0.5]{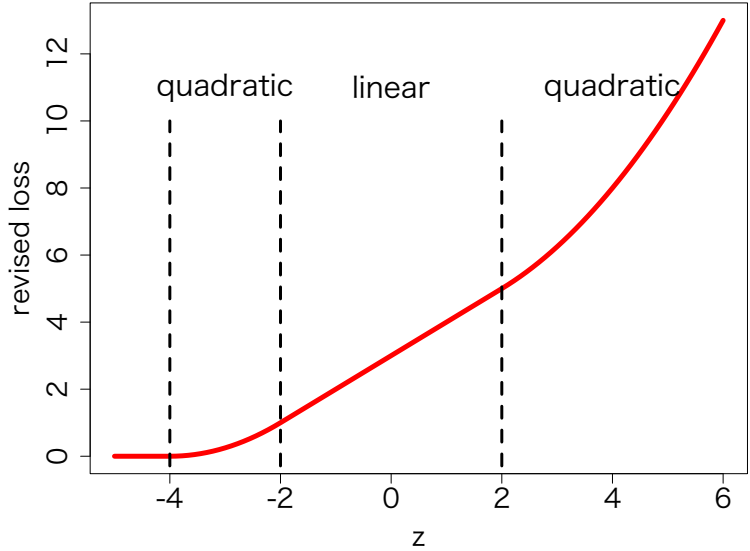}   
 \end{center}
  \caption{The loss function $u(z)$ in Example \ref{exam:uncertaity-set_with_error} is depicted, 
  which corresponds to the revised uncertainty set with the estimation error. }
  \label{fig:revised_loss}
 \end{figure}
\end{example}

\section{Kernel-based Learning Algorithm}
\label{sec:Learning_Algorithm}
We present a kernel variant of the learning algorithm using uncertainty sets. 
Suppose that training samples $(x_1,y_1),\ldots,(x_m,y_m)\in\Xcal\times\Ybin$ are
observed, where $\Xcal$ is not necessarily a linear space. We define the kernel function
$k:\Xcal^2\rightarrow\Rbb$, and let $\Hcal$ be the reproducing kernel Hilbert space (RKHS)
endowed with the kernel function $k$. See \cite{book:Schoelkopf+Smola:2002} for the
details of the kernel estimators in machine learning. 
We consider the estimator of the decision function having the form of $f(x)+b$, where 
$f\in\Hcal,\,b\in\Rbb$. In our algorithm, the function part $f(x)$ and the bias term $b$
are separately estimated. 


Figure \ref{fig:Kernel-based_learning_algorithm} shows 
a kernel variant of the learning algorithm based on uncertainty sets. 
The algorithm is regarded as an extension of $\nu$-SVM and maximum margin MPM, since the
uncertainty set is extended from reduced convex-hull or ellipsoidal uncertainty set 
to general uncertainty set. 
The proposed algorithm is also a revision of the existing method based on the simple
minimum distance problem. We shall illustrate the proposed algorithm in the below. 


In the learning algorithm, 
training samples are divided into two disjoint subsets, $T_1$ and $T_2$, 
which are described as 
\begin{align*}
 T_k&=\{(x_i^{(k)},y_i^{(k)})\,:\,i=1,\ldots,m_k\},\ \ k=1,2. 
\end{align*}
The reason that we decompose the training samples is
to simplify the analysis of statistical properties of the learning algorithm. 
In the kernel-based algorithm, the uncertainty sets, $\Ucal_p[c]$ and $\Ucal_n[c]$, are
convex subsets in $\Hcal$. 
Let $M_p$ and $M_n$ be the index sets of $T_1$ defined by 
\begin{align*}
M_p=\{i\,:\,y_i^{(1)}=+1,\,i=1,\ldots,m_1\},\quad
M_n=\{i\,:\,y_i^{(1)}=-1,\,i=1,\ldots,m_1\}. 
\end{align*}
For $o\in\{p,n\}$, the uncertainty set $\Ucal_o[c]\subset\Hcal$ is defined as a convex
subset of the convex-hull of $\{k(\cdot,x_i^{(1)}):i\in{M_o}\}$. 
Moreover, we assume that the monotonicity $\Ucal_o[c]\subset\Ucal_o[c']$ holds for $c\leq{c'}$.  
If necessary, we revise the uncertainty set as shown in Section
\ref{sec:Revision_Uncertainty_Set} 
in order to link the uncertainty set with a loss function. 

\begin{figure}[p]
\begin{center}
\parbox{0.8\linewidth}{
  \begin{description}
  \item[Inputs. ] Decompose the training samples into two disjoint subsets, 
	     \begin{align*}
	      T_1=\{(x_i^{(1)},y_i^{(1)})\,:\,i=1,\ldots,m_1\},\ 
	      T_2&=\{(x_i^{(2)},y_i^{(2)})\,:\,i=1,\ldots,m_2\}. 
	     \end{align*}
	     For the set of training samples $T_1$, let $M_p$ and $M_n$ be the index sets
	     defined by 
	     $M_p=\{i\,:\,y_i^{(1)}=+1,\,i=1,\ldots,m_1\}$ and
	     $M_n=\{i\,:\,y_i^{(1)}=-1,\,i=1,\ldots,m_1\}$, respectively. 
  \item[Initialization. ] 
	     We define the RKHS $\Hcal$ with the kernel function $k(x,x')$. 
	     Prepare the parametrized uncertainty sets 
	     $\Ucal_p[c]$ and $\Ucal_n[c]$ in $\Hcal$ such that
	     \begin{align*}
	      \Ucal_p[c]\subset\conv\{k(\cdot,x_i^{(1)}):i\in{M_p}\},\quad
	      \Ucal_n[c]\subset\conv\{k(\cdot,x_i^{(1)}):i\in{M_n}\}. 
	     \end{align*}
	     When the uncertainty sets involve some parameters to be estimated, 
	     a prior knowledge or additional samples independent of the training samples $T_1\cup{}T_2$
	     are used for its estimation. 
	     If necessary, we apply the revision of the uncertainty sets presented in
	     Section \ref{sec:Revision_Uncertainty_Set} in order to link the uncertainty
	     set with a loss function. Set the regularization parameter $\lambda>0$. 
  \item[Step 1. ] 
	     Solve the optimization problem, 
	     \begin{align*}
	      \begin{array}{l}
	       \displaystyle
	       \inf_{c_p,c_n,f_p,f_n}\ c_p+c_n+\lambda\|f_p-f_n\|_{\Hcal}\\
	       \displaystyle
		\quad \st\ f_p\in\Ucal_p[c_p],\ f_n\in\Ucal_n[c_n],\ c_p,c_n\in\Rbb. 
	      \end{array}
	     \end{align*}
	     Optimal solutions of $f_p$ and $f_n$ are denoted as $\widehat{f}_p$ and $\widehat{f}_n$. 
	     Define $\widehat{f}$ by 
	     \begin{align*}
	      \widehat{f}=
	      \begin{cases}
	       \displaystyle
	       \frac{\lambda}{\|\widehat{f}_p-\widehat{f}_n\|_{\Hcal}}(\widehat{f}_p-\widehat{f}_n), 
	       & \widehat{f}_p\neq\widehat{f}_n,\\
	       \displaystyle{} 0, & \widehat{f}_p=\widehat{f}_n. 
	      \end{cases}
	     \end{align*}
  \item[Step 2. ] 
	     Solve the one-dimensional optimization problem defined from the estimator
	     $\widehat{f}$ and the data set $T_2$, 
	     \begin{align*}
	      \min_{b\in\Rbb}\,\widehat{\Ecal}_{T_2}(\widehat{f}+b)
	     \end{align*}
	     The optimal solution is denoted as $\widetilde{b}$. 
  \item[Output. ]
	     The estimator of the decision function is given by
	     $\widehat{f}(x)+\widetilde{b}$. 
  \end{description}
}
\end{center}
 \caption{Kernel-based learning algorithm using uncertainty sets.}
 \label{fig:Kernel-based_learning_algorithm}
\end{figure}

When the uncertainty sets involve some parameters to be estimated, 
a prior knowledge or additional samples independent of the training samples $T_1\cup{}T_2$
are used for its estimation. 
For example, the uncertainty set defined by the level set of 
$h_o(\z)=(\z-\mubold_o)^T\Sigma_o^{-1}(\z-\mubold_o),o\in\{p,n\}$ 
involves the mean vector $\mubold_o$ and the covariance matrix $\Sigma_o$. 
In our algorithm, we need to prepare additional samples to estimate $\mubold_o$ and
$\Sigma_o$. 

The subset $T_1$ is used for the estimation of the function part $f\in\Hcal$ 
in the decision function. First, we solve the problem,
\begin{align}
 \begin{array}{l}
  \displaystyle
   \inf_{c_p,c_n,f_p,f_n}\ c_p+c_n+\lambda\|f_p-f_n\|_{\Hcal}\\
  \displaystyle
   \quad \st\ f_p\in\Ucal_p[c_p],\ f_n\in\Ucal_n[c_n],\ c_p,c_n\in\Rbb. 
 \end{array}
 \label{eqn:opt_prob_kernel_estimation_RCM}
\end{align}
Let $\widehat{f}_p$ and $\widehat{f}_n$ be optimal solutions of $f_p$ and $f_n$ in 
\eqref{eqn:opt_prob_kernel_estimation_RCM}. 
Then, in the same way as \eqref{eqn:opt_w}, the function part of the decision function is
estimated by 
\begin{align}
 \widehat{f}=
 \begin{cases}
  \displaystyle
  \frac{\lambda}{\|\widehat{f}_p-\widehat{f}_n\|_{\Hcal}}(\widehat{f}_p-\widehat{f}_n), 
  & \widehat{f}_p\neq\widehat{f}_n,\\
  \displaystyle{} 0, & \widehat{f}_p=\widehat{f}_n. 
 \end{cases}
 \label{eqn:f_estimator}
\end{align}
For the estimation of the bias term $b$, the data set $T_2$ is used. 
The bias estimator $\widetilde{b}$ is an optimal solution of 
\begin{align}
  \min_{b\in\Rbb}\,\widehat{\Ecal}_{T_2}(\widehat{f}+b). 
 \label{eqn:opt_prob_bias_estimation}
\end{align}
Our purpose is to obtain the decision function with a low prediction error. 
Hence, the error rate \eqref{eqn:opt_prob_bias_estimation} is an appropriate criterion for
the estimation of the bias term. 
Though generally the minimization of the training error rate is hard task, the
one-dimensional optimization is easily conducted. 
Then, the estimator of the decision function is given by $\widehat{f}(x)+\widetilde{b}$. 
By separating the training data used in Step 1 and Step 2, we can simplify the statistical
analysis of the estimator.

\section{Statistical Properties of Kernel-based Learning Algorithm}
\label{sec:Statistical_Properties}
In this section, we study statistical properties of the learning algorithm presented in
Figure \ref{fig:Kernel-based_learning_algorithm}. 
Especially, we prove that the expected 0-1 loss of the estimator,
$\Ecal(\widehat{f}+\widetilde{b})$, converges to the Bayes risk $\Ecal^*$ defined by 
\eqref{eqn:Bayes-risk}.

\subsection{Definitions and assumptions}
\label{subsec:Definitions_Assumptions}

We derive the dual representation of the learning algorithm in Figure
\ref{fig:Kernel-based_learning_algorithm}. 
For a convex function $\ell:\Rbb\rightarrow\Rbb$, let $\ell^*$ be the conjugate function
of $\ell$. 
For $o\in\{p,n\}$, suppose that the uncertainty sets are described as the form of
\begin{align}
 \Ucal_{o}[c]
 &=
 \bigg\{\sum_{i\in{M_o}}\alpha_i k(\cdot,x_i^{(1)})\in\Hcal
 \,:\,
 \sum_{i\in{M_o}}\alpha_i=1,\,\alpha_i\geq0\,(i\in{M_{o}}),\,
 \frac{1}{m}\sum_{i\in{M_{o}}}\ell^*(m\alpha_i)\leq c \bigg\}. 
 \label{eqn:kernel-based-uncertainty-set}
\end{align}
In the same way as the derivation in Section
\ref{sec:Uncertainty_Sets_Corresponding_Loss}, 
we find that the problem \eqref{eqn:opt_prob_kernel_estimation_RCM}  is 
the dual representation of 
\begin{align}
 \begin{array}{l}
  \displaystyle
  \min_{f,b,\rho}-2\rho+\frac{1}{m_1}\sum_{i=1}^{m_1}\ell(\rho-y_i^{(1)}(f(x_i^{(1)})+b))\\
  \displaystyle
   \ \st\ f\in\Hcal,\,b\in\Rbb,\,\rho\in\Rbb,\,\|f\|_{\Hcal}^2\leq\lambda^2. 
 \end{array}
 \label{eqn:loss-min_RKHS}
\end{align}
Later on, we show a rigorous proof of the duality between 
\eqref{eqn:loss-min_RKHS} and \eqref{eqn:opt_prob_kernel_estimation_RCM} with 
the uncertainty set \eqref{eqn:kernel-based-uncertainty-set}. 
In order to investigate statistical properties of the learning algorithm using uncertainty
sets, we consider the primal problem 
\eqref{eqn:loss-min_RKHS} and \eqref{eqn:opt_prob_bias_estimation} 
instead of the dual problem 
\eqref{eqn:opt_prob_kernel_estimation_RCM} and \eqref{eqn:opt_prob_bias_estimation}. 

We define some notations. 
For a measurable function $f:\Xcal\rightarrow\Rbb$ and a real number $\rho\in\Rbb$, 
we define the expected loss $\Rcal(f,\rho)$ and the regularized expected loss 
$\Rcal_\lambda(f,\rho)$ by
\begin{align*}
  \Rcal(f,\rho)&=-2\rho+\Ebb[\ell(\rho-yf(x))], \\
 \Rcal_\lambda(f,\rho)&=-2\rho+\Ebb[\ell(\rho-yf(x))]+\theta(\|f\|_{\Hcal}^2\leq\lambda^2),
\end{align*}
where $\lambda$ is a positive number and $\theta(A)$ equals $0$ when $A$ is true and 
$\infty$ otherwise. Let $\Rcal^*$ be the infimum of $\Rcal(f,\rho)$, 
\begin{align*}
 \Rcal^*=\inf\{\Rcal(f,\rho)\,:\,f\in{}L_0,\,\rho\in\Rbb\}. 
\end{align*}
For the set of training samples, $T=\{(x_1,y_1),\ldots,(x_m,y_m)\}$, 
the empirical loss $\widehat{\Rcal}_T(f,\rho)$ and the regularized empirical loss
$\widehat{\Rcal}_{T,\lambda}(f,\rho)$ are defined by
\begin{align*}
 \widehat{\Rcal}_T(f,\rho)
 &=
 -2\rho+\frac{1}{m}\sum_{i=1}^{m}\ell(\rho-y_if(x_i)),\\
 \widehat{\Rcal}_{T,\lambda}(f,\rho)
 &=
 -2\rho+\frac{1}{m}\sum_{i=1}^{m}\ell(\rho-y_if(x_i))
 +\theta(\|f\|_{\Hcal}^2\leq\lambda^2). 
\end{align*}
The subscript $T$ is dropped if it is clear from the context. 

For the observed training samples $T_1=\{(x_i^{(1)},y_i^{(1)}):i=1,\ldots,,m_1\}$, 
clearly the problem \eqref{eqn:loss-min_RKHS} is identical to the minimization of
$\widehat{\Rcal}_{T_1,\lambda}(f,\rho)$. 
We define $\widehat{f},\widehat{b}$ and 
$\widehat{\rho}$ as an optimal solution of  
\begin{align}
 \min_{f,b,\rho}\widehat{\Rcal}_{T_1,\lambda_{m_1}}(f+b,\rho),\quad
 f\in\Hcal,\, b\in\Rbb,\, \rho\in\Rbb, 
 \label{eqn:empirical_regularized_loss_minimization}
\end{align}
where the regularization parameter $\lambda_{m_1}$ may depend on the sample size. 
For the index sets $M_p$ and $M_n$ in Figure \ref{fig:Kernel-based_learning_algorithm},  
we define $m_p=|M_p|$ and $m_n=|M_n|$. 

We introduce the following assumptions. 
\begin{assumption}[universal kernel]
 \label{assump:universal_kernel}
 The input space $\Xcal$ is a compact metric space. 
 The kernel function $k:\Xcal^2\rightarrow\Rbb$ is continuous, and satisfies
 \begin{align*}
  \sup_{x\in\Xcal}\sqrt{k(x,x)}\leq{K}<\infty, 
 \end{align*}
 where $K$ is a positive constant. In addition, $k$ is universal, i.e., 
 the RKHS associated with $k$ is dense in the set of all continuous functions on $\Xcal$
 with respect to the supremum norm 
 \citep[Definition 4.52]{steinwart08:_suppor_vector_machin}. 

\end{assumption}

\begin{assumption}[non-deterministic assumption]\quad
 \label{assump:non-deterministic-assumption}
 For the probability distribution of training samples, 
 there exists a positive constant $\varepsilon>0$ such that
 \begin{align*}
  P(\{x\in\Xcal:\varepsilon\leq{}P(+1|x)\leq{}1-\varepsilon\})>0
 \end{align*} 
 holds, where $P(y|x)$ is the conditional probability of the label $y$ 
 for given input $x$. 
\end{assumption}

\begin{assumption}[basic assumptions on the loss function]
 \label{assumption:expectedloss_consistency}
 The loss function $\ell:\Rbb\rightarrow\Rbb$ satisfies the following conditions. 
 \begin{enumerate}
  \item $\ell$ is a non-decreasing, convex function, and satisfies
	the non-negativity condition, i.e., $\ell(z)\geq0$ for all $z\in\Rbb$. 
  \item Let $\partial\ell(z)$ be the subdifferential of the loss function $\ell$ at 
	$z\in\Rbb$ \citep[Chap.~23]{book:Rockafellar:1970}. Then, the equality 
	$\lim_{z\rightarrow\infty}\partial\ell(z)=\infty$ holds, 
	i.e., 
	for any $M>0$, there exists $z_0$ such that for all $z\geq{}z_0$ and all
	$g\in\partial\ell(z)$, the inequality $g\geq M$ holds. 
 \end{enumerate}
\end{assumption}
Note that the second condition in Assumption \ref{assumption:expectedloss_consistency}
assures that $\ell$ is not constant function and that
$\lim_{z\rightarrow\infty}\ell(z)=\infty$ holds. 

\begin{assumption}[modified classification-caliblated loss]\quad  
\label{assumption:loss_BayesRisk_consistency}
 \begin{enumerate}
  \item $\ell(z)$ is first order differentiable for $z\geq-\ell(0)/2$, and 
	$\ell'(z)>0$ holds for $z\geq-\ell(0)/2$, where $\ell'$ is the derivative of
	$\ell$. 
  \item Let $\psi(\theta,\rho)$ be the function defined as
	\begin{align*}
	 \psi(\theta,\rho)=\ell(\rho)-\inf_{z\in\Rbb}
	 \left\{
	 \frac{1+\theta}{2}\ell(\rho-z)
	 +\frac{1-\theta}{2}\ell(\rho+z)
	 \right\},\quad 0\leq\theta\leq{1},\ \rho\in\Rbb. 
	\end{align*}
	There exist a function $\widetilde{\psi}(\theta)$ and a positive real $\varepsilon>0$ 
	such
	that the following conditions are satisfied: 
	\begin{enumerate}
	 \item $\widetilde{\psi}(0)=0$ and $\widetilde{\psi}(\theta)>0$ for 
	       $0<\theta\leq\varepsilon$. 
	 \item $\widetilde{\psi}(\theta)$ is a continuous and strictly increasing function 
	       on the interval $[0,\varepsilon]$. 
	 \item The inequality 
	       $\displaystyle\widetilde{\psi}(\theta)\leq
	       \inf_{\rho\geq-\ell(0)/2}\psi(\theta,\rho)$
	       holds for $0\leq\theta\leq\varepsilon$. 
	\end{enumerate}
 \end{enumerate}
\end{assumption}
Later on, we shall give some sufficient conditions for existence of the function
$\widetilde{\psi}$ in Assumption \ref{assumption:loss_BayesRisk_consistency}. 

We prove that there is no duality gap between \eqref{eqn:opt_prob_kernel_estimation_RCM}
and \eqref{eqn:loss-min_RKHS}. 
The proof of the following lemma is given in Appendix
\ref{appendix:Proof_Lemma_existence_opt_sol}. 
\begin{lemma}
 \label{lemma:existence_opt_sol}
 Suppose that both $M_p$ and $M_n$ in Figure \ref{fig:Kernel-based_learning_algorithm} are
 non-empty, i.e., $m_p$ and $m_n$ are positive numbers. 
 Under Assumption \ref{assump:universal_kernel} 
 and \ref{assumption:expectedloss_consistency}, 
 there exists an optimal solution for \eqref{eqn:loss-min_RKHS}. 
 Moreover, the dual problem of \eqref{eqn:loss-min_RKHS} yields 
 the problem \eqref{eqn:opt_prob_kernel_estimation_RCM} with the uncertainty set 
 \eqref{eqn:kernel-based-uncertainty-set}. 
\end{lemma}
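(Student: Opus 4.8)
The plan is to prove the two claims in turn: existence of a minimizer for \eqref{eqn:loss-min_RKHS}, and then strong duality, which identifies the dual as \eqref{eqn:opt_prob_kernel_estimation_RCM} with the uncertainty set \eqref{eqn:kernel-based-uncertainty-set}. Write $J(f,b,\rho)$ for the objective of \eqref{eqn:loss-min_RKHS}. For existence I would exploit that $J$ depends on $f$ only through the evaluations $f(x_i^{(1)})=\langle f,k(\cdot,x_i^{(1)})\rangle_\Hcal$ and through the norm constraint, so that the ball $\{f:\|f\|_\Hcal\le\lambda\}$, which is weakly compact in the Hilbert space $\Hcal$, together with weak continuity of the evaluation functionals makes $J$ weakly lower semicontinuous in $f$; equivalently one may first apply the representer theorem to reduce $f$ to the span of $\{k(\cdot,x_i^{(1)})\}$ and work in finite dimensions. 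The remaining difficulty is the unbounded ranges of $b$ and $\rho$, so the core of this part is to show that $J$ is coercive in $(b,\rho)$ uniformly over the ball.

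To prove coercivity I would use Assumptions \ref{assump:universal_kernel} and \ref{assumption:expectedloss_consistency}. The bound $\sup_x\sqrt{k(x,x)}\le K$ gives $|f(x_i^{(1)})|\le\lambda K$ on the ball. Since $\ell\ge0$ and $J(0,0,0)=\ell(0)$, on the sublevel set $\{J\le\ell(0)\}$ the term $-2\rho$ already forces $\rho\ge-\ell(0)/2$. For an upper bound on $\rho$ I would pair an index $i\in M_p$ with an index $j\in M_n$ (both nonempty by hypothesis); the two corresponding arguments of $\ell$ sum to $2\rho-y_i^{(1)}f(x_i^{(1)})-y_j^{(1)}f(x_j^{(1)})\ge2\rho-2\lambda K$, so their maximum is at least $\rho-\lambda K$, and by monotonicity and non-negativity of $\ell$ the sum of these two loss terms dominates $\ell(\rho-\lambda K)$, a bound independent of $b$. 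Assumption \ref{assumption:expectedloss_consistency}(2), i.e.\ $\partial\ell\to\infty$ and hence $\ell(z)/z\to\infty$, makes this term outgrow $-2\rho$ as $\rho\to\infty$, bounding $\rho$ from above. With $\rho$ now confined to a compact interval, letting $b\to+\infty$ sends some negative-label loss, and $b\to-\infty$ some positive-label loss, to infinity, which is precisely where $m_p,m_n>0$ is needed, so $b$ is bounded as well. Thus the sublevel set projects to a compact set in $(b,\rho)$, and weak lower semicontinuity over the weakly compact $f$-ball yields a minimizer.

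For the duality assertion I would observe that \eqref{eqn:loss-min_RKHS} is a convex program with an everywhere finite (real-valued) objective and that $f=0$ is a Slater point, since $\|0\|_\Hcal^2=0<\lambda^2$. This constraint qualification yields zero duality gap and legitimizes the formal inf--sup interchange carried out in Section \ref{sec:Uncertainty_Sets_Corresponding_Loss}. Introducing slacks $\xi_i\ge\rho-y_i^{(1)}(f(x_i^{(1)})+b)$ with multipliers $\alpha_i\ge0$, and $\mu\ge0$ for $\|f\|_\Hcal^2\le\lambda^2$, the inner minimization separates: over each $\xi_i$ it gives $-\tfrac{1}{m_1}\ell^*(m_1\alpha_i)$; over $f$ it is a strictly convex quadratic whose minimizer is $\tfrac{1}{2\mu}\sum_i\alpha_iy_i^{(1)}k(\cdot,x_i^{(1)})$, contributing $-\tfrac{1}{4\mu}\|\sum_i\alpha_iy_i^{(1)}k(\cdot,x_i^{(1)})\|_\Hcal^2$; and stationarity in $b$ and $\rho$ enforces $\sum_{i\in M_p}\alpha_i=\sum_{i\in M_n}\alpha_i=1$. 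Minimizing $\tfrac{1}{4\mu}\|\cdot\|_\Hcal^2+\mu\lambda^2$ over $\mu\ge0$ produces $\lambda\|\sum_i\alpha_iy_i^{(1)}k(\cdot,x_i^{(1)})\|_\Hcal$, and reading $\sum_{i\in M_o}\alpha_ik(\cdot,x_i^{(1)})$ as a point of the uncertainty set with level parameter $\tfrac{1}{m_1}\sum_{i\in M_o}\ell^*(m_1\alpha_i)$ reproduces \eqref{eqn:opt_prob_kernel_estimation_RCM} with \eqref{eqn:kernel-based-uncertainty-set}.

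The hard part is the rigorous handling of the infinite-dimensional inf--sup interchange and its boundary behavior rather than the algebra. In particular, the inner minimization over $f$ equals $-\infty$ at $\mu=0$ unless the aggregate $\sum_i\alpha_iy_i^{(1)}k(\cdot,x_i^{(1)})$ vanishes, and the minimum over $\mu>0$ is not attained when this aggregate is nonzero, so I must treat the limit $\mu\to0$ carefully to recover the term $\lambda\|\cdot\|_\Hcal$; I must also confirm that the partial conjugates through $\ell^*$ remain proper. The finiteness of the optimal value and the coercivity established in the first part are exactly what justify these interchanges, which is why the existence argument is a prerequisite rather than a postscript.
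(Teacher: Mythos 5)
Your proof is correct and essentially reproduces the paper's argument: existence follows from coercivity of the objective in $(b,\rho)$ uniformly over the norm ball (the paper lower-bounds the objective by $-2\rho+\frac{m_p}{m_1}\ell(\rho-b-K\lambda)+\frac{m_n}{m_1}\ell(\rho+b-K\lambda)$ and changes variables to $(\rho-b,\rho+b)$, which is the same mechanism as your positive/negative sample pairing, both exploiting $m_p,m_n>0$ and the divergence of $\partial\ell$), and strong duality follows from Slater's condition together with the representer-theorem reduction, after which the Lagrangian algebra recovers \eqref{eqn:opt_prob_kernel_estimation_RCM} with \eqref{eqn:kernel-based-uncertainty-set}. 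One correction to your closing caveat: when $v=\sum_i\alpha_iy_i^{(1)}k(\cdot,x_i^{(1)})\neq 0$ the supremum over $\mu$ \emph{is} attained, at $\mu=\|v\|_{\Hcal}/(2\lambda)$, so no delicate $\mu\to0$ limit arises; the paper sidesteps all infinite-dimensional min-max subtleties by first restricting $f$ to the span of the kernel sections and then invoking a finite-dimensional no-duality-gap result (Bertsekas et al., Proposition 6.4.3) at a strictly feasible point.
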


In the following, we prove the convergence of the error rate to the Bayes risk 
$\Ecal^*$. The proof consists of two parts. In Section
\ref{subsec:Convergence+to+Optimal_Expected_Loss}, we prove that the expected loss for the
estimated decision function, $\Rcal(\widehat{f}+\widehat{b},\widehat{\rho})$, converges to
the infimum of the expected loss $\Rcal^*$, where $\widehat{f}, \widehat{b}$ and
$\widehat{\rho}$ are optimal solutions of
\eqref{eqn:empirical_regularized_loss_minimization}. 
Here, we apply the mathematical tools developed by 
\cite{steinwart05:_consis_of_suppor_vector_machin}. 
In Section \ref{subsec:Convergence_to_Bayes_Risk}, 
we prove the convergence of the error rate $\Ecal(\widehat{f}+\widetilde{b})$ to the
Bayes risk $\Ecal^*$, where $\widetilde{b}$ is an optimal solution of
\eqref{eqn:opt_prob_bias_estimation}. 
In the proof, the concept of the classification-calibrated loss 
\citep{bartlett06:_convex_class_risk_bound} plays an important role.

\subsection{Convergence to Optimal Expected Loss}
\label{subsec:Convergence+to+Optimal_Expected_Loss}
In this section, we prove that $\Rcal(\widehat{f}+\widehat{b},\widehat{\rho})$ converges
to $\Rcal^*$. 
Following lemmas show the relation between the expected loss and the regularized the expected loss. 
Proofs are shown in Appendix \ref{appendix:proof_lemmas_risk_converge}. 
\begin{lemma}
 \label{lemma:risk_boundedness}
 Under 
 Assumption \ref{assump:non-deterministic-assumption} and 
 Assumption \ref{assumption:expectedloss_consistency}, 
 we have $\Rcal^*>-\infty$. 
\end{lemma}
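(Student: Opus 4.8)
The plan is to reduce the infimum over $f\in L_0$ to a pointwise (conditional) minimization and then combine the non-degeneracy of the conditional label distribution with the superlinear growth of $\ell$. Writing $\eta(x)=P(+1|x)$ and conditioning on $x$, the expected loss becomes
\[
 \Rcal(f,\rho)=-2\rho+\Ebb_x\big[\eta(x)\ell(\rho-f(x))+(1-\eta(x))\ell(\rho+f(x))\big].
\]
Since $f$ ranges over all measurable functions, it suffices to bound the integrand below uniformly in the value $z=f(x)$ and then integrate; this sidesteps having to form a conditional minimizer explicitly.

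The key step is a pointwise lower bound valid on the ``non-deterministic'' region. Let $A=\{x:\varepsilon\le\eta(x)\le1-\varepsilon\}$, which satisfies $P(A)>0$ by Assumption~\ref{assump:non-deterministic-assumption}. For $x\in A$, non-negativity of $\ell$ (Assumption~\ref{assumption:expectedloss_consistency}, part~1) gives $\eta(x)\ell(\rho-z)+(1-\eta(x))\ell(\rho+z)\ge\varepsilon[\ell(\rho-z)+\ell(\rho+z)]$ for every $z$, and convexity of $\ell$ at the midpoint yields $\ell(\rho-z)+\ell(\rho+z)\ge2\ell(\rho)$. Hence for $x\in A$ the integrand is at least $2\varepsilon\ell(\rho)$, independently of $z$, while for $x\notin A$ it is merely non-negative. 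Integrating,
\[
 \Rcal(f,\rho)\ \ge\ -2\rho+2\varepsilon P(A)\,\ell(\rho)
\]
for all $f$ and all $\rho$.

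It then remains to show the one-dimensional function $\rho\mapsto c\,\ell(\rho)-2\rho$ is bounded below, where $c=2\varepsilon P(A)>0$. As $\rho\to-\infty$ this is immediate, since $-2\rho\to+\infty$ and $\ell\ge0$. As $\rho\to+\infty$ I would invoke Assumption~\ref{assumption:expectedloss_consistency}, part~2: because every subgradient of $\ell$ tends to $+\infty$, there is $\rho_0$ beyond which every $g\in\partial\ell(\rho)$ satisfies $g\ge2/c$, so convexity makes $c\ell(\rho)-2\rho$ nondecreasing on $[\rho_0,\infty)$ and hence bounded below there. On the compact interval in between, continuity of the finite convex function $\ell$ supplies a lower bound. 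Combining the three regimes gives a finite constant $B$ with $c\ell(\rho)-2\rho\ge B$ for all $\rho$, whence $\Rcal^*\ge B>-\infty$.

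The main obstacle is producing a bound that is simultaneously uniform in $f$ and strong enough in $\rho$: the $-2\rho$ term is unbounded below, so a crude estimate such as $\Rcal\ge0$ is useless. The two ingredients that overcome this are the convexity/midpoint trick, which eliminates the dependence on $f(x)$ while keeping a full factor $\ell(\rho)$, and the superlinearity encoded in Assumption~\ref{assumption:expectedloss_consistency}, part~2, which ensures $\ell(\rho)$ eventually dominates $2\rho/c$. Assumption~\ref{assump:non-deterministic-assumption} is precisely what keeps $P(A)$, and thus the coefficient $c$, strictly positive. The measurability questions raised by exchanging $\inf_f$ with $\Ebb_x$ are routine and are avoided entirely by the pointwise argument above.
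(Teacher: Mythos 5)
Your proof is correct, but it takes a genuinely different route from the paper's. The paper restricts the integral to the non-deterministic set $S$, spreads the $-2\rho$ term over $S$ as $-2\rho/P(S)$, and then, for each conditional probability $\eta\in[\varepsilon,1-\varepsilon]$, minimizes the resulting integrand \emph{jointly} over the two scalar variables $(f,\rho)$: it exhibits an exact stationary point by finding $z_1,z_2$ with $\tfrac{1}{\eta P(S)}\in\partial\ell(z_1)$ and $\tfrac{1}{(1-\eta)P(S)}\in\partial\ell(z_2)$ (this relies on the claim that $\bigcup_{z}\partial\ell(z)$ contains every positive real), bounds $z_1,z_2$ by a single $z_{\mathrm{up}}$ via monotonicity of the subdifferential, and so obtains $\Rcal^*\geq -2z_{\mathrm{up}}$. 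You never touch the minimizer: the midpoint inequality $\ell(\rho-z)+\ell(\rho+z)\geq 2\ell(\rho)$ eliminates the dependence on $f(x)$ entirely while retaining the full term $\ell(\rho)$, and the problem collapses to showing that the scalar convex function $c\,\ell(\rho)-2\rho$ is bounded below, which follows from the growth condition in Assumption \ref{assumption:expectedloss_consistency}. Your route is more elementary --- no stationarity or existence argument, and no appeal to the surjectivity of the subdifferential onto the positive reals, a fact the paper uses but only asserts --- and it separates the roles of the two assumptions cleanly: Assumption \ref{assump:non-deterministic-assumption} makes the coefficient $c=2\varepsilon P(A)$ positive, and the superlinear growth tames the $-2\rho$ term. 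What the paper's joint minimization buys is the explicit structure of the pointwise minimizer $(f^*,\rho^*)$, a computation in the same spirit as the conditional-risk analyses used later in the consistency proofs; quantitatively the two bounds are comparable, both governed by the point beyond which the subgradients of $\ell$ exceed roughly $1/(\varepsilon P(S))$.
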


\begin{lemma}
 \label{lemma:convergence_regularized_exp_loss}
 Under Assumption \ref{assump:universal_kernel}, \ref{assump:non-deterministic-assumption} 
 and \ref{assumption:expectedloss_consistency}, 
 we have 
 \begin{align}
  \lim_{\lambda\rightarrow\infty}\inf\{ \Rcal_\lambda(f,\rho):f\in\mathcal{H},\,\rho\in\Rbb \}=\Rcal^*. 
  \label{eqn:convergence_ecpected_loss}
 \end{align}
\end{lemma}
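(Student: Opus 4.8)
The plan is to show that the \emph{unconstrained} infimum $\inf\{\Rcal(f,\rho):f\in\Hcal,\rho\in\Rbb\}$ already equals $\Rcal^*$, and that the constrained infima approach it monotonically. First I would observe that $\lambda\mapsto\inf\{\Rcal_\lambda(f,\rho):f\in\Hcal,\rho\in\Rbb\}$ is non-increasing, since enlarging $\lambda$ only relaxes the constraint $\|f\|_{\Hcal}\leq\lambda$ encoded by $\theta$. Because $\Hcal\subseteq L_0$ we have $\inf\{\Rcal_\lambda(f,\rho)\}\geq\Rcal^*$ for every $\lambda$, so the limit exists and is at least $\Rcal^*$; moreover any fixed $f\in\Hcal$ becomes feasible once $\lambda\geq\|f\|_{\Hcal}$, whence
\[
\lim_{\lambda\to\infty}\inf\{\Rcal_\lambda(f,\rho):f\in\Hcal,\rho\in\Rbb\}
=\inf\{\Rcal(f,\rho):f\in\Hcal,\rho\in\Rbb\}.
\]
It therefore suffices to prove the reverse inequality $\inf\{\Rcal(f,\rho):f\in\Hcal,\rho\in\Rbb\}\leq\Rcal^*$, i.e. that an arbitrary measurable competitor can be approximated by an RKHS function without essentially increasing the expected loss.

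Second, I would fix $\varepsilon>0$ and proceed by a three-stage approximation. Using Lemma \ref{lemma:risk_boundedness} (so that $\Rcal^*$ is finite; it is also bounded above by $\Rcal(0,0)=\ell(0)$), I pick a measurable $f_0$ and $\rho_0\in\Rbb$ with $\Rcal(f_0,\rho_0)<\Rcal^*+\varepsilon/3$, in particular $\Ebb[\ell(\rho_0-yf_0(x))]<\infty$. The first stage replaces $f_0$ by a bounded function. Let $\ell_{-\infty}:=\lim_{z\to-\infty}\ell(z)=\inf_z\ell(z)$, which is finite and nonnegative since $\ell\geq0$ is non-decreasing, and set $f_0^C=\max\{-C,\min\{C,f_0\}\}$. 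Because $\ell$ is non-decreasing, truncation can only raise the loss on $\{f_0>C,\ y=+1\}\cup\{f_0<-C,\ y=-1\}$, and there the increase is bounded by $\ell(\rho_0-C)-\ell_{-\infty}$, which tends to $0$ as $C\to\infty$ uniformly in $x$; on the complementary sets truncation only decreases the loss. Hence $\Rcal(f_0^C,\rho_0)\leq\Rcal(f_0,\rho_0)+\bigl(\ell(\rho_0-C)-\ell_{-\infty}\bigr)$, so for $C$ large enough $g_0:=f_0^C$ is bounded and $\Rcal(g_0,\rho_0)<\Rcal^*+\varepsilon/3$.

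Third, with the bounded competitor $g_0$ in hand, I would apply Lusin's theorem on the compact metric space $\Xcal$ (Assumption \ref{assump:universal_kernel}) to the marginal of $P$ on $\Xcal$: for small $\delta>0$ there is a continuous $g$ with $\|g\|_\infty\leq\|g_0\|_\infty$ agreeing with $g_0$ outside a set of probability $<\delta$. Since both $\ell(\rho_0-yg)$ and $\ell(\rho_0-yg_0)$ are bounded by $B:=\ell(\rho_0+\|g_0\|_\infty)<\infty$ there, one gets $|\Rcal(g,\rho_0)-\Rcal(g_0,\rho_0)|\leq 2B\delta<\varepsilon/3$ for $\delta$ small. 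Finally I invoke universality: $\Hcal$ is dense in $C(\Xcal)$ in the supremum norm, so there is $h\in\Hcal$ with $\|h-g\|_\infty$ arbitrarily small; as the convex function $\ell$ is Lipschitz on the compact argument range $[\rho_0-\|g\|_\infty-1,\ \rho_0+\|g\|_\infty+1]$, this forces $|\Rcal(h,\rho_0)-\Rcal(g,\rho_0)|<\varepsilon/3$. Combining the three stages yields $h\in\Hcal$ with $\Rcal(h,\rho_0)<\Rcal^*+\varepsilon$, establishing the reverse inequality and hence the lemma.

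I expect the main obstacle to be the first (truncation) stage, precisely because $\ell$ may be unbounded (e.g. the exponential loss), so a naive $L_1$ approximation of $f_0$ would not control $\Ebb[\ell(\rho_0-yf_0)]$. The point that rescues it is the monotonicity of $\ell$ together with finiteness of $\ell_{-\infty}$: truncation can raise the loss only on the ``correctly classified, over-confident'' region, where the excess is uniformly small for large $C$. Once boundedness is secured, the Lusin and universality steps are routine, since $\ell$ is continuous and hence bounded and Lipschitz on the relevant compact interval.
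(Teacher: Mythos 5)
Your proposal is correct, but it takes a genuinely different route from the paper at the decisive step. Both arguments reduce the lemma to the identity $\inf\{\Rcal(f,\rho):f\in\Hcal,\,\rho\in\Rbb\}=\Rcal^*$; the $\lambda\to\infty$ part (your monotonicity-and-feasibility sandwich versus the paper's choice of a near-optimizer $f_\varepsilon$ with $\|f_\varepsilon\|_{\Hcal}\leq\lambda_\varepsilon$, which becomes feasible for $\lambda\geq\lambda_\varepsilon$) is the same easy observation, and both use Lemma \ref{lemma:risk_boundedness} to ensure $\Rcal^*>-\infty$. For the key identity, however, the paper simply cites Corollary 5.29 of \cite{steinwart08:_suppor_vector_machin}, which says that for a universal kernel the infimum of $\Ebb[\ell(\rho-yf(x))]$ over $f\in\Hcal$ equals the infimum over all measurable $f$, applied at each fixed $\rho$; you instead prove this approximation property from scratch via truncation (using monotonicity of $\ell$ and finiteness of $\inf_z\ell(z)$ to control the loss increase on the over-confident, correctly classified region), then Lusin's theorem on the compact metric space $\Xcal$ (whose Borel marginal is Radon), then universality together with local Lipschitz continuity of the finite convex function $\ell$. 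Your chain of estimates is sound; the only cosmetic slip is that after choosing $\Rcal(f_0,\rho_0)<\Rcal^*+\varepsilon/3$ the truncation error must be absorbed into the strict-inequality slack, which it can be. What each approach buys: the paper's citation is shorter and inherits the generality of the Nemitski-loss machinery developed in the reference, while your argument is self-contained, makes explicit exactly which properties of $\ell$ (non-decreasing, nonnegative, finite-valued, convex) and of $\Xcal$ (compactness, regularity of the marginal) are actually needed, and amounts to an elementary proof of the cited corollary in this special case.
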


We derive an upper bound on the norm of the optimal solution in 
\eqref{eqn:empirical_regularized_loss_minimization}. 
The proof is deferred to Appendix \ref{appendix:proof_lemmas_risk_converge}. 
\begin{lemma}
 \label{lemma:estimator_bound}
 Under Assumption \ref{assump:universal_kernel},
 \ref{assump:non-deterministic-assumption}
 and \ref{assumption:expectedloss_consistency}, 
 there are positive constants $c$ and $C$ and a natural number $M$ such that 
 the optimal solution of \eqref{eqn:empirical_regularized_loss_minimization} satisfies 
 \begin{align}
  \|\widehat{f}\|_{\Hcal}\leq{}\lambda_{m_1},\quad 
  |\widehat{b}|\leq{}C\lambda_{m_1} ,\quad 
  |\widehat{\rho}|\leq{}C\lambda_{m_1}
  \label{eqn:bounds_estimators}
 \end{align}
 with the probability greater than $1-e^{-cm_1}$ for $m_1\geq{M}$. 
\end{lemma}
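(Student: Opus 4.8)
The plan is to treat the three bounds separately, exploiting that the regularization enters \eqref{eqn:empirical_regularized_loss_minimization} only through the hard constraint $\theta(\|f\|_{\Hcal}^2\leq\lambda_{m_1}^2)$. The bound $\|\widehat{f}\|_{\Hcal}\leq\lambda_{m_1}$ is then immediate, since any minimizer must be feasible; the work is to control the scalars $\widehat{b}$ and $\widehat{\rho}$. The starting point is that $(f,b,\rho)=(0,0,0)$ is feasible, so the optimal value is at most $\ell(0)$. Because $\ell\geq0$ by Assumption \ref{assumption:expectedloss_consistency}, this already yields the one-sided bound $-2\widehat{\rho}\leq\ell(0)$, i.e.\ $\widehat{\rho}\geq-\ell(0)/2$. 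It remains to produce matching upper bounds and to control $\widehat{b}$.

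The key device is a change of variables decoupling $b$ and $\rho$. Writing $a=\rho-b$ and $c=\rho+b$, and using that for $y_i^{(1)}=+1$ the loss argument is $\rho-\widehat{f}(x_i^{(1)})-b=a-\widehat{f}(x_i^{(1)})$ while for $y_i^{(1)}=-1$ it is $c+\widehat{f}(x_i^{(1)})$, the objective with $f=\widehat{f}$ fixed splits as $G_p(a)+G_n(c)$, where $G_p(a)=-a+\frac1{m_1}\sum_{i\in M_p}\ell(a-\widehat{f}(x_i^{(1)}))$ and $G_n(c)=-c+\frac1{m_1}\sum_{i\in M_n}\ell(c+\widehat{f}(x_i^{(1)}))$. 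Thus $(\widehat{a},\widehat{c})=(\widehat{\rho}-\widehat{b},\widehat{\rho}+\widehat{b})$ minimize two independent one-dimensional convex problems, and bounding $\widehat{\rho},\widehat{b}$ reduces to bounding $\widehat{a},\widehat{c}$. The reproducing property together with $\|\widehat{f}\|_{\Hcal}\leq\lambda_{m_1}$ and Assumption \ref{assump:universal_kernel} gives $|\widehat{f}(x_i^{(1)})|\leq K\lambda_{m_1}$ for every $i$, which lets the data-dependent offsets in $G_p,G_n$ be absorbed into a term of order $\lambda_{m_1}$.

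For each of $G_p,G_n$ I carry out the same two-sided estimate; take $G_p$. Since $\widehat{f}(x_i^{(1)})\leq K\lambda_{m_1}$ and $\ell$ is non-decreasing, $G_p(a)\geq-a+\frac{m_p}{m_1}\ell(a-K\lambda_{m_1})$; because the slope condition in Assumption \ref{assumption:expectedloss_consistency} gives $\ell(z)\geq M_0(z-z_0)$ for $z\geq z_0$ (with $z_0$ depending on the chosen $M_0$), this lower-bounding function has a finite minimum no smaller than $-K\lambda_{m_1}-O(1)$, so $G_p^{\min}\geq-K\lambda_{m_1}-O(1)$, and likewise for $G_n$. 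Feeding these two lower bounds into $G_p(\widehat{a})+G_n(\widehat{c})\leq\ell(0)$ yields $G_p(\widehat{a})\leq\ell(0)+K\lambda_{m_1}+O(1)=:E=O(\lambda_{m_1})$, and similarly for $G_n(\widehat{c})$. The crude inequality $G_p(a)\geq-a$ (again $\ell\geq0$) then gives the lower bound $\widehat{a}\geq-E$, while inserting $a=\widehat{a}$ into $G_p(a)\geq-a+\frac{m_p}{m_1}\ell(a-K\lambda_{m_1})$ and applying the super-linear growth bound with $M_0$ chosen so that $(m_p/m_1)M_0\geq2$ forces $\widehat{a}\leq O(\lambda_{m_1})$. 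The estimates for $\widehat{c}$ follow verbatim with $M_p,M_n$ exchanged, and then $\widehat{\rho}=(\widehat{a}+\widehat{c})/2$ and $\widehat{b}=(\widehat{c}-\widehat{a})/2$ are of order $\lambda_{m_1}$.

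The one genuinely probabilistic ingredient, and the step I expect to demand the most care, is making the constants uniform: both the lower bounds $G_p^{\min},G_n^{\min}$ and the choice of $M_0$ require $m_p/m_1$ and $m_n/m_1$ to stay bounded away from $0$, since factors such as $(m_p/m_1)^{-1}$ must not blow up. Assumption \ref{assump:non-deterministic-assumption} forces $P(y=+1)$ and $P(y=-1)$ to be strictly positive, so a Hoeffding/Chernoff bound gives $m_p/m_1\geq\tfrac12 P(y=+1)$ and $m_n/m_1\geq\tfrac12 P(y=-1)$ simultaneously with probability at least $1-e^{-cm_1}$ for all $m_1\geq M$. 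On this event all the constants above are uniform, and absorbing the residual $O(1)$ terms into $O(\lambda_{m_1})$ (legitimate once $m_1\geq M$, so that $\lambda_{m_1}$ is large enough) produces the claimed $|\widehat{b}|,|\widehat{\rho}|\leq C\lambda_{m_1}$.
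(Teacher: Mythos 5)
Your proposal is correct, and it reaches the same bounds as the paper but by a genuinely different mechanism at the key step. The shared skeleton is identical: feasibility gives $\|\widehat{f}\|_{\Hcal}\leq\lambda_{m_1}$ and hence $\|\widehat{f}\|_\infty\leq K\lambda_{m_1}$ by Assumption \ref{assump:universal_kernel}; comparison with the feasible point $(0,0,0)$ gives the value bound $\ell(0)$ and $\widehat{\rho}\geq-\ell(0)/2$; and a Chernoff bound keeps $m_p/m_1$ and $m_n/m_1$ bounded below by half the (positive, by Assumption \ref{assump:non-deterministic-assumption}) label probabilities on an event of probability at least $1-e^{-cm_1}$, which is exactly how the paper makes its constants uniform. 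Where you diverge is in how $\widehat{b}$ and $\widehat{\rho}$ are controlled. The paper works with the first-order optimality condition in $\rho$, namely $0\in-2+\frac{1}{m_1}\sum_{i}\partial\ell(\widehat{\rho}-y_i^{(1)}(\widehat{f}(x_i^{(1)})+\widehat{b}))$, and uses monotonicity of the subdifferential to conclude that $\widehat{\rho}\mp\widehat{b}-K\lambda_{m_1}$ must lie below any point $z_p$ (resp.\ $z_n$) where all subgradients exceed $2m_1/m_p$ (resp.\ $2m_1/m_n$); combined with $\widehat{\rho}\geq-\ell(0)/2$ this yields the bounds. You instead decouple via $(a,c)=(\rho-b,\rho+b)$ --- incidentally the same change of variables the paper uses in Appendix \ref{appendix:Proof_Lemma_existence_opt_sol} to prove existence in Lemma \ref{lemma:existence_opt_sol}, but not in its proof of this lemma --- and run a pure value-comparison/coercivity argument: each decoupled objective $G_p,G_n$ is bounded below by $-K\lambda_{m_1}-O(1)$ thanks to the steep linear minorant that Assumption \ref{assumption:expectedloss_consistency} provides, so each is at most $O(\lambda_{m_1})$ at the optimum, and the linear growth then traps $\widehat{a},\widehat{c}$ in an interval of width $O(\lambda_{m_1})$. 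Your route avoids the subdifferential calculus entirely and uses only global optimality relative to the zero solution, so it is more elementary and would extend verbatim to approximate minimizers; the paper's stationarity argument is more compact once Rockafellar's machinery is granted and produces the constants $z_p,z_n,\bar{z}$ directly from subgradient thresholds. Both proofs share the same minor implicit requirement that $\lambda_{m_1}$ stays bounded away from zero for $m_1\geq M$ so that the additive $O(1)$ constants can be absorbed into $C\lambda_{m_1}$, so this is not a defect of your argument relative to the paper's.
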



Let us define the covering number for a metric space. 
\begin{definition}[covering number]
 For a metric space $\Gcal$, the covering number of $\Gcal$ is defined as 
 \begin{align*}
  \Ncal(\Gcal,\varepsilon)=\min\big\{
  n\in\Nbb\,:\,
  g_1,\ldots,g_n\in\Gcal\ \text{\rm such that}\ 
  \Gcal\subset\bigcup_{i=1}^n{}B(g_i,\varepsilon)  \big\}, 
 \end{align*}
 where $B(g,\varepsilon)$ denotes the closed ball with center $g$ and radius
 $\varepsilon$. 
\end{definition}

According to Lemma \ref{lemma:estimator_bound}, 
the optimal solution, $\widehat{f},\,\widehat{b}$ and $\widehat{\rho}$, 
is included in the set 
\begin{align*}
 \Gcal_{m_1}=\{(f,b,\rho)\in\mathcal{H}\times\Rbb^2:
 \|f\|_{\Hcal}\leq{}\lambda_{m_1}, |b|\leq{}C\lambda_{m_1}, |\rho|\leq{}C\lambda_{m_1} \}
\end{align*}
with high probability. 
Suppose that the norm $\|f\|_{\infty}+|b|+|\rho|$ is introduced on $\Gcal_{m_1}$. 
We define the function 
\begin{align*}
 L(x,y;f,b,\rho)=-2\rho+\ell(\rho-y(f(x)+b)), 
\end{align*}
and the function set
\begin{align*}
 \Lcal_{m_1}=\{L(x,y;f,b,\rho):(f,b,\rho)\in\Gcal_{m_1}\}. 
\end{align*}
The supremum norm is defined on $\Lcal_{m_1}$. 
The expected loss and the empirical loss, $\Rcal(f+b,\rho)$ and
$\widehat{\Rcal}_{T_1}(f+b,\rho)$, are represented as the expectation of $L(x,y;f,b,\rho)$
with respect to the population distribution and the empirical distribution, respectively. 
Since $\ell:\Rbb\rightarrow\Rbb$ is a finite-valued convex function, $\ell$ is locally
Lipschitz continuous. Then, for any sample size $m_1$, 
there exists a constant $\kappa_{m_1}$ depending on $m_1$ such that
\begin{align}
 |\ell(z)-\ell(z')|\leq\kappa_{m_1}|z-z'|
 \label{eqn:Lipschitz_constant}
\end{align}
holds for all $z$ and $z'$ satisfying $|z|,|z'|\leq (K+2C)\lambda_{m_1}$. 
Then, for any $(f,b,\rho),(f',b',\rho')\in\Gcal_{m_1}$, we have 
\begin{align*}
 |L(x,y;f,b,\rho)-L(x,y;f',b',\rho')|
 &\leq 
 2|\rho-\rho'|+\kappa_{m_1}(|\rho-\rho'|+|b-b'|+\|f-f'\|_{\infty})\\
 &\leq 
 (2+\kappa_{m_1})(|\rho-\rho'|+|b-b'|+\|f-f'\|_{\infty})
\end{align*}
The covering number of $\Lcal_{m_1}$ is evaluated by using that of $\Gcal_{m_1}$ as follows:
\begin{align}
 \Ncal(\Lcal_{m_1,}\varepsilon)\leq\Ncal\big(\Gcal_{m_1},\frac{\varepsilon}{2+\kappa_{m_1}}\big). 
 \label{eqn:covering_bound_L_G}
\end{align}
Let the metric space $\Fcal_{m_1}$ be 
\begin{align*}
\Fcal_{m_1}=\{f\in\Hcal:\|f\|_{\Hcal}\leq{}\lambda_{m_1}\}
\end{align*}
with the supremum norm, then, we also have
\begin{align}
 \Ncal\left(\Gcal_{m_1},\frac{\varepsilon}{2+\kappa_{m_1}}\right)
 \leq 
 \Ncal\left(\Fcal_{m_1}, \frac{\varepsilon}{3(2+\kappa_{m_1})}\right)
 \left(\frac{6C\lambda_{m_1}(2+\kappa_{m_1})}{\varepsilon}\right)^2. 
 \label{eqn:covering-upper-bound_G_F}
\end{align}
An upper bound of the covering number of $\Fcal_{m_1}$ is given by 
\cite{cucker02} and \cite{zhou02:_cover_number_in_learn_theor}. 

We prove the uniform convergence of $\widehat{\Rcal}(f+b,\rho)$. 
The proof is deferred to Appendix \ref{appendix:proof_lemmas_risk_converge}. 
\begin{lemma}
 \label{lemma:uniform_convergence_loss}
 Let $b_{m_1}$ be
 \begin{align*}
  b_{m_1}=4C\lambda_{m_1}+\ell((K+2C)\lambda_{m_1})
 \end{align*}
 in which $C$ is the positive constant defined in Lemma \ref{lemma:estimator_bound}. 
 Under Assumption \ref{assump:universal_kernel} and
 \ref{assumption:expectedloss_consistency}, 
 the inequality 
 \begin{align}
  &\phantom{\leq}
  P\bigg(
  \! \sup_{(f,b,\rho)\in\Gcal_{m_1}}\!\!\!
  |\widehat{\Rcal}(f+b,\rho)-\Rcal(f+b,\rho)|\geq \varepsilon
  \bigg) \nonumber\\
  &\leq 
  2\Ncal(\Lcal_{m_1},\varepsilon/3)\exp\bigg\{-\frac{2m_1\varepsilon^2}{9b_{m_1}^2}\bigg\}
  \label{eqn:uniform_bound_L}\\
  &\leq 
  2 \Ncal\left(\Fcal_{m_1}, 
  \frac{\varepsilon}{9(2+\kappa_{m_1})}\right)
 \left(\frac{18C\lambda_{m_1}(2+\kappa_{m_1})}{\varepsilon}\right)^2
  \exp\bigg\{-\frac{2m_1\varepsilon^2}{9b_{m_1}^2}\bigg\}
  \label{eqn:uniform_bound_F}
 \end{align}
 holds, where $\kappa_{m_1}$ is the Lipschitz constant defined by 
 \eqref{eqn:Lipschitz_constant}. 
\end{lemma}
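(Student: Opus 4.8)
The plan is to establish \eqref{eqn:uniform_bound_L} by the standard route for uniform deviation bounds over a function class: approximate the supremum over the infinite set $\Gcal_{m_1}$ by a maximum over a finite $\varepsilon$-net, control each net element by a pointwise concentration inequality, and combine the estimates through a union bound. The passage from \eqref{eqn:uniform_bound_L} to \eqref{eqn:uniform_bound_F} is then a direct substitution of the covering-number estimates \eqref{eqn:covering_bound_L_G} and \eqref{eqn:covering-upper-bound_G_F} already recorded above.

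First I would verify that every member of $\Lcal_{m_1}$ takes values in an interval of length at most $b_{m_1}$. For $(f,b,\rho)\in\Gcal_{m_1}$, the reproducing property together with Assumption \ref{assump:universal_kernel} yields $\|f\|_\infty\leq K\lambda_{m_1}$, and combined with $|b|,|\rho|\leq C\lambda_{m_1}$ this gives $|\rho-y(f(x)+b)|\leq(K+2C)\lambda_{m_1}$. Since $\ell$ is non-negative and non-decreasing by Assumption \ref{assumption:expectedloss_consistency}, the loss term satisfies $0\leq\ell(\rho-y(f(x)+b))\leq\ell((K+2C)\lambda_{m_1})$, while the linear term obeys $|{-2\rho}|\leq 2C\lambda_{m_1}$. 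Hence $L(x,y;f,b,\rho)$ lies in an interval whose length is exactly $b_{m_1}=4C\lambda_{m_1}+\ell((K+2C)\lambda_{m_1})$, which is precisely the range that enters Hoeffding's inequality.

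Next I would fix an $\varepsilon/3$-cover $\{L_1,\dots,L_N\}$ of $\Lcal_{m_1}$ in the supremum norm, where $N=\Ncal(\Lcal_{m_1},\varepsilon/3)$. Writing $\widehat{\Rcal}$ and $\Rcal$ as the empirical and population averages of $L$, for any $L\in\Lcal_{m_1}$ and its nearest net element $L_j$ one has $|\widehat{\Rcal}(L)-\widehat{\Rcal}(L_j)|\leq\|L-L_j\|_\infty\leq\varepsilon/3$ and the analogous inequality for $\Rcal$, since both are averages. The triangle inequality then shows that $\sup_{(f,b,\rho)\in\Gcal_{m_1}}|\widehat{\Rcal}-\Rcal|\geq\varepsilon$ forces $|\widehat{\Rcal}(L_j)-\Rcal(L_j)|\geq\varepsilon/3$ for at least one $j$. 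Applying Hoeffding's inequality to each fixed $L_j$ (bounded range $b_{m_1}$, i.i.d.\ sample of size $m_1$) gives $P(|\widehat{\Rcal}(L_j)-\Rcal(L_j)|\geq\varepsilon/3)\leq 2\exp\{-2m_1\varepsilon^2/(9b_{m_1}^2)\}$, and a union bound over the $N$ net elements produces \eqref{eqn:uniform_bound_L}. Finally, inserting \eqref{eqn:covering_bound_L_G} followed by \eqref{eqn:covering-upper-bound_G_F}, both evaluated at radius $\varepsilon/3$ — which replaces $\Ncal(\Lcal_{m_1},\varepsilon/3)$ by $\Ncal(\Fcal_{m_1},\varepsilon/(9(2+\kappa_{m_1})))$ times the factor $(18C\lambda_{m_1}(2+\kappa_{m_1})/\varepsilon)^2$ — yields \eqref{eqn:uniform_bound_F}.

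The argument is essentially routine once the boundedness is in place; the only point demanding care is the bookkeeping of the covering radii, namely tracking how the three successive reductions (the $\varepsilon/3$ net radius, then the two covering-number inequalities, each shrinking the radius by a further factor) combine so that the constants match the stated $\varepsilon/(9(2+\kappa_{m_1}))$ and $18C\lambda_{m_1}(2+\kappa_{m_1})/\varepsilon$. I would also emphasize that Hoeffding applies because the samples in $T_1$ are i.i.d.\ and $L$ is bounded, so no integrability or tail hypotheses on $\ell$ are required beyond the range bound $b_{m_1}$ derived in the second step.
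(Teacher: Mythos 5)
Your proposal is correct and follows essentially the same route as the paper: bound the range of $L(x,y;f,b,\rho)$ over $\Gcal_{m_1}$ by $b_{m_1}$ using $\|f\|_\infty\leq K\lambda_{m_1}$, $|b|,|\rho|\leq C\lambda_{m_1}$, and the monotonicity and non-negativity of $\ell$, then combine an $\varepsilon/3$-net of $\Lcal_{m_1}$ with Hoeffding's inequality and a union bound, and finally substitute \eqref{eqn:covering_bound_L_G} and \eqref{eqn:covering-upper-bound_G_F} at radius $\varepsilon/3$. The only difference is presentational: the paper compresses the net--Hoeffding--union-bound step into a citation of Lemma 3.4 of Steinwart (2005), whereas you write it out explicitly, with the radius bookkeeping done correctly.
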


We present the main theorem of this section. 
The proof is given in Appendix \ref{appendix:proof_risk_convergence}. 
\begin{theorem}
 \label{theorem:surrogate_risk-convergence}
 Suppose that $\lim_{m_1\rightarrow\infty}\lambda_{m_1}=\infty$ holds. 
 Suppose that Assumption \ref{assump:universal_kernel},
 \ref{assump:non-deterministic-assumption}
 and \ref{assumption:expectedloss_consistency} hold. 
 Moreover we assume that \eqref{eqn:uniform_bound_F} converges to zero 
 for any $\varepsilon>0$, 
 when the sample size $m_1$ tends to infinity. 
 Then, $\Rcal(\widehat{f}+\widehat{b},\widehat{\rho})$ converges to 
 $\Rcal^*$ in probability in the large sample limit of the dataset $T_1$. 
\end{theorem}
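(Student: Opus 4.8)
The plan is to run the standard consistency argument for regularized empirical risk minimization in the style of \cite{steinwart05:_consis_of_suppor_vector_machin}, using the four preparatory lemmas as building blocks. Since $\widehat{f}+\widehat{b}$ is measurable (continuous kernel) and $\widehat{\rho}\in\Rbb$, we always have $\Rcal(\widehat{f}+\widehat{b},\widehat{\rho})\ge\Rcal^*$, and by Lemma \ref{lemma:risk_boundedness} the target $\Rcal^*$ is finite. Hence it suffices to control the upper tail, i.e.\ to show that for every $\varepsilon>0$ the probability $P\big(\Rcal(\widehat{f}+\widehat{b},\widehat{\rho})>\Rcal^*+\varepsilon\big)$ tends to zero.

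First I would fix $\eta>0$ and invoke Lemma \ref{lemma:convergence_regularized_exp_loss} to produce a radius $\Lambda$ together with a \emph{fixed}, sample-independent comparison point $(f_0,\rho_0)\in\Hcal\times\Rbb$ satisfying $\|f_0\|_\Hcal\le\Lambda$ and $\Rcal(f_0,\rho_0)\le\Rcal^*+\eta$. Because $\lambda_{m_1}\to\infty$, for all sufficiently large $m_1$ we have $\Lambda\le\lambda_{m_1}$ and $|\rho_0|\le C\lambda_{m_1}$, so that $(f_0,0,\rho_0)$ is feasible for the regularized empirical problem \eqref{eqn:empirical_regularized_loss_minimization} and lies in $\Gcal_{m_1}$. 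This is precisely where the divergence of $\lambda_{m_1}$ enters: it lets a single near-minimizer eventually fall inside the admissible set.

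The core chain then combines three ingredients. Writing $\Delta_{m_1}=\sup_{(f,b,\rho)\in\Gcal_{m_1}}|\widehat{\Rcal}_{T_1}(f+b,\rho)-\Rcal(f+b,\rho)|$, on the event that $(\widehat{f},\widehat{b},\widehat{\rho})\in\Gcal_{m_1}$ (which holds with probability at least $1-e^{-cm_1}$ by Lemma \ref{lemma:estimator_bound}) I would write
\begin{align*}
 \Rcal(\widehat{f}+\widehat{b},\widehat{\rho})
 &\le\widehat{\Rcal}_{T_1}(\widehat{f}+\widehat{b},\widehat{\rho})+\Delta_{m_1}\\
 &\le\widehat{\Rcal}_{T_1}(f_0,\rho_0)+\Delta_{m_1}\\
 &\le\Rcal(f_0,\rho_0)+2\Delta_{m_1}
 \le\Rcal^*+\eta+2\Delta_{m_1},
\end{align*}
where the middle step is the optimality of $(\widehat{f},\widehat{b},\widehat{\rho})$ for $\widehat{\Rcal}_{T_1,\lambda_{m_1}}$ (the $\theta$-terms vanish since both $\widehat{f}$ and $f_0$ satisfy the norm constraint), and the two outer steps apply the uniform deviation bound over $\Gcal_{m_1}$ once to the estimator and once to the feasible point $(f_0,0,\rho_0)$.

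It remains to argue $\Delta_{m_1}\to0$ in probability, which is exactly Lemma \ref{lemma:uniform_convergence_loss}: the bound \eqref{eqn:uniform_bound_F} is assumed to tend to zero for every fixed $\varepsilon>0$, whence $P(\Delta_{m_1}>\delta)\to0$ for each $\delta>0$. Combining, the event $\{(\widehat{f},\widehat{b},\widehat{\rho})\in\Gcal_{m_1}\}\cap\{2\Delta_{m_1}\le\delta\}$ has probability tending to $1$ and forces $\Rcal(\widehat{f}+\widehat{b},\widehat{\rho})\le\Rcal^*+\eta+\delta$; since $\eta,\delta>0$ are arbitrary, this gives convergence in probability. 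The main obstacle I anticipate is the bookkeeping around the growing set $\Gcal_{m_1}$: one must ensure simultaneously that the estimator lands in $\Gcal_{m_1}$ (Lemma \ref{lemma:estimator_bound}) and that the fixed comparison point is eventually contained in the same set, so that the single uniform-convergence bound of Lemma \ref{lemma:uniform_convergence_loss} covers both. Keeping the comparison point independent of $m_1$ — by extracting it once from Lemma \ref{lemma:convergence_regularized_exp_loss} before sending $m_1\to\infty$ — is the device that makes this work.
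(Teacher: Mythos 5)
Your proposal is correct and follows essentially the same route as the paper's proof: extract a fixed near-minimizer from Lemma \ref{lemma:convergence_regularized_exp_loss} (made eventually feasible by $\lambda_{m_1}\to\infty$), place the estimator in $\Gcal_{m_1}$ via Lemma \ref{lemma:estimator_bound}, and chain empirical optimality with the uniform deviation bound of Lemma \ref{lemma:uniform_convergence_loss}. The only cosmetic difference is that you control the deviation at the comparison point by the same uniform bound (valid since $(f_0,0,\rho_0)\in\Gcal_{m_1}$ for large $m_1$), whereas the paper invokes the law of large numbers pointwise at that fixed point; both are sound.
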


We show the order of $\lambda_{m_1}$ admitting the assumption in Theorem
\ref{theorem:surrogate_risk-convergence}. 
\begin{example}
 Suppose that $\Xcal=[0,1]^n\subset\Rbb^n$ and the Gaussian kernel is used. 
 According to \cite{zhou02:_cover_number_in_learn_theor}, we have
 \begin{align*}
  \log\Ncal\bigg(\Fcal_{m_1},\frac{\varepsilon}{9(2+\kappa_{m_1})}\bigg)
  =
  O\left(
  \bigg(\log\frac{\lambda_{m_1}}{\frac{\varepsilon}{9(2+\kappa_{m_1})}}\bigg)^{n+1}
  \right)
  =O\big(\big(\log(\lambda_{m_1}\kappa_{m_1})\big)^{n+1}\big). 
 \end{align*}
 For any $\varepsilon>0$, \eqref{eqn:uniform_bound_F} is bounded above by 
 \begin{align*}
  \exp\left\{
  O\bigg(
  -\frac{m_1}{b_{m_1}^2}
  +(\log(\lambda_{m_1}\kappa_{m_1}))^{n+1}
  \bigg)
  \right\}. 
 \end{align*}
 For the truncated quadratic loss, we have 
 \begin{align*}
  \kappa_{m_1}&\leq 2((K+2C)\lambda_{m_1}+1)=O(\lambda_{m_1}),\\
  b_{m_1}&\leq 4C\lambda_{m_1}+((K+2C)\lambda_{m_1}+1)^2=O(\lambda_{m_1}^2). 
 \end{align*}
 Let us define $\lambda_{m_1}=m_1^\alpha$ with $0<\alpha<1/4$. 
 Then, for any $\varepsilon>0$, \eqref{eqn:uniform_bound_F} converges to zero when $m_1$
 tends to infinity. In the same way, for the exponential loss we obtain 
 \begin{align*}
  \kappa_{m_1}=O(e^{(K+2C)\lambda_{m_1}}),\quad{}  b_{m_1}=O(e^{(K+2C)\lambda_{m_1}}). 
 \end{align*}
 Hence, $\lambda_{m_1}=(\log{m_1})^\alpha$ with $0<\alpha<1$ assures the convergence of 
 \eqref{eqn:uniform_bound_F}. 
\end{example}

\subsection{Convergence to Bayes Risk}
\label{subsec:Convergence_to_Bayes_Risk}
We study the error rate of the estimated classifier. 
Let us define $\widehat{f},\widehat{b}$ and $\widehat{\rho}$ be a minimizer of 
$\Rcal_{T_1,\lambda_{m_1}}(f+b,\rho)$. 
In the proposed learning algorithm in Figure \ref{fig:Kernel-based_learning_algorithm}, 
the estimated bias term $\widehat{b}$ is replaced with $\widetilde{b}$ which is
an optimal solution of $\min_{b\in\Rbb}\widehat{\Ecal}_{T_2}(\widehat{f}+b)$. 
We prove that
the expected 0-1 loss $\Ecal(\widehat{f}+\widetilde{b})$
converges to the Bayes risk $\Ecal^*$, 
when the sample sizes of $T_1$ and $T_2$ tend to infinity.
The proof is shown in Appendix \ref{appendix:proof_bayeserror_convergence}. 
\begin{theorem}
 \label{theorem:bayes_risk_consistency}
 Suppose that $\Rcal(\widehat{f}+\widehat{b},\widehat{\rho})$ converges to
 $\Rcal^*$ in probability, 
 when the sample size of $T_1$, i.e., $m_1$, tends to infinity. 
 For the RKHS $\Hcal$ and the loss function $\ell$, 
 we assume Assumption 
 \ref{assump:universal_kernel}, \ref{assumption:expectedloss_consistency} and 
 \ref{assumption:loss_BayesRisk_consistency}. 
 Then, $\Ecal(\widehat{f}+\widetilde{b})$ converges to $\Ecal^*$ in probability, when
 the sample sizes of $T_1$ and $T_2$ tend to infinity. 
\end{theorem}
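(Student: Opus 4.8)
The plan is to exploit the data splitting of Figure~\ref{fig:Kernel-based_learning_algorithm}: the function part $\widehat{f}$ is fitted on $T_1$ (for which surrogate consistency $\Rcal(\widehat{f}+\widehat{b},\widehat{\rho})\to\Rcal^*$ is the hypothesis), while the bias $\widetilde{b}$ is chosen on the independent sample $T_2$ by minimizing the empirical $0$-$1$ error. I would therefore split the argument in two. \emph{Part 1} shows that surrogate consistency forces $\widehat{f}$ to admit a good threshold, i.e. $\inf_{b\in\Rbb}\Ecal(\widehat{f}+b)\to\Ecal^*$ in probability. \emph{Part 2} shows that the bias minimization on $T_2$ essentially attains this infimum, so that $\Ecal(\widehat{f}+\widetilde{b})\to\Ecal^*$.

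For Part 1, write $\eta(x)=P(+1|x)$, set $\theta=2\eta-1$, and for fixed $\rho$ let $g_{\eta,\rho}(z)=\eta\,\ell(\rho-z)+(1-\eta)\,\ell(\rho+z)$ be the conditional surrogate loss of a score $z$, so that $\psi(|\theta|,\rho)=\ell(\rho)-\inf_z g_{\eta,\rho}(z)$. The first step is a pointwise calibration inequality: at any $x$ where $\sign(\widehat{f}(x)+\widehat{b})$ disagrees with the Bayes rule $\sign(\theta(x))$, convexity of $g_{\eta,\widehat\rho}$ together with $\ell'(\widehat\rho)>0$ (Assumption~\ref{assumption:loss_BayesRisk_consistency}.1, which needs $\widehat{\rho}\geq-\ell(0)/2$) places the minimizer on the Bayes side of the origin, whence $g_{\eta,\widehat\rho}(\widehat f(x)+\widehat b)\geq g_{\eta,\widehat\rho}(0)=\ell(\widehat\rho)$ and the pointwise excess is at least $\psi(|\theta(x)|,\widehat\rho)$. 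Using $\Rcal^*\leq-2\widehat\rho+\Ebb_x[\inf_z g_{\eta,\widehat\rho}(z)]$ and integrating the pointwise bound gives
\begin{align*}
\Rcal(\widehat f+\widehat b,\widehat\rho)-\Rcal^*\ \geq\ \Ebb_x\!\left[\psi(|\theta(x)|,\widehat\rho)\,\indicator{\sign(\widehat f(x)+\widehat b)\neq\sign(\theta(x))}\right].
\end{align*}
Invoking Assumption~\ref{assumption:loss_BayesRisk_consistency}.2 (and monotonicity of $\psi(\cdot,\rho)$ to cover $|\theta|>\varepsilon$), I would replace $\psi$ by the transform $\widetilde\psi$, pass to its convex minorant $\underline{\psi}$ so that Jensen's inequality applies to the indicator-weighted integrand, and, using $\Ecal(\widehat f+\widehat b)-\Ecal^*=\Ebb_x[|\theta|\,\indicator{\text{sign error}}]$, obtain $\underline{\psi}\big(\Ecal(\widehat f+\widehat b)-\Ecal^*\big)\leq\Rcal(\widehat f+\widehat b,\widehat\rho)-\Rcal^*$. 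Since $\underline{\psi}$ is continuous, nondecreasing, and strictly positive away from $0$, the assumed convergence of the right-hand side to $0$ yields $\Ecal(\widehat f+\widehat b)\to\Ecal^*$, and hence $\inf_b\Ecal(\widehat f+b)\to\Ecal^*$.

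For Part 2, I would condition on $T_1$, so that $\widehat f$ is fixed and $T_2$ is a fresh i.i.d. sample. For fixed $\widehat f$, the family $\{(x,y)\mapsto\indicator{y(\widehat f(x)+b)\leq 0}:b\in\Rbb\}$ consists of thresholds of $\widehat f(x)$ and so has VC dimension bounded by an absolute constant, uniformly in $\widehat f$. Standard VC uniform convergence then gives $\sup_{b}|\widehat\Ecal_{T_2}(\widehat f+b)-\Ecal(\widehat f+b)|\to0$ in probability as $m_2\to\infty$, with a bound independent of $\widehat f$. Combining this with the optimality of $\widetilde b$ for $\widehat\Ecal_{T_2}(\widehat f+\cdot)$ and comparing against a near-minimizer $b^\star$ of $\Ecal(\widehat f+\cdot)$ yields
\begin{align*}
\Ecal(\widehat f+\widetilde b)\ \leq\ \inf_{b\in\Rbb}\Ecal(\widehat f+b)+2\sup_{b}\big|\widehat\Ecal_{T_2}(\widehat f+b)-\Ecal(\widehat f+b)\big|+o(1).
\end{align*}
Letting $m_1,m_2\to\infty$ and using Part 1 on the first term and the VC bound on the second, the right-hand side tends to $\Ecal^*$; since $\Ecal(\widehat f+\widetilde b)\geq\Ecal^*$ always holds, the theorem follows after unconditioning on $T_1$.

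The main obstacle is Part 1, and specifically the bookkeeping forced by the auxiliary offset $\rho$. I expect three delicate points: verifying that the relevant $\widehat\rho$ lies in the range $\rho\geq-\ell(0)/2$ where Assumption~\ref{assumption:loss_BayesRisk_consistency} is stated (so that $\ell'(\widehat\rho)>0$ and the calibration gap $\psi(\cdot,\widehat\rho)$ is available); constructing the convex minorant $\underline{\psi}$ of $\widetilde\psi$ and checking that it stays strictly positive on $(0,1]$, so that Jensen can be applied without collapsing the transform; and controlling the region $|\theta|>\varepsilon$ through monotonicity of $\psi(\cdot,\rho)$, since Assumption~\ref{assumption:loss_BayesRisk_consistency} only furnishes $\widetilde\psi$ on $[0,\varepsilon]$. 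By contrast, the sample-complexity content of Part 2 is routine, because the bias search is one-dimensional and the threshold class has bounded VC dimension.
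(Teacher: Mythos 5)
Your proposal is correct and takes essentially the same route as the paper: the paper's proof likewise splits into (i) deducing $\Ecal(\widehat{f}+\widehat{b})\to\Ecal^*$ from the assumed surrogate-risk convergence via classification calibration applied at $\rho=\widehat{\rho}\geq-\ell(0)/2$ together with the $\rho$-uniform transform $\widetilde{\psi}$ of Assumption \ref{assumption:loss_BayesRisk_consistency}, and (ii) a VC-dimension/uniform-convergence argument over the one-dimensional threshold class on the independent sample $T_2$, with careful conditioning on $T_1$. The only differences are presentational: the paper imports the calibration inequality from Theorems 1 and 2 of Bartlett et al.\ (using Corollary 5.29 of Steinwart--Christmann to pass between the infimum over $\Hcal$ and over $L_0$) where you re-derive it inline via the pointwise bound and a convex-minorant/Jensen step, and in step (ii) the paper compares $\widetilde{b}$ against $\widehat{b}$ itself rather than against a near-minimizer of the true risk.
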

As a result, we find that the prediction error rate of $\widehat{f}+\widetilde{b}$
converges to the Bayes risk under Assumption 
\ref{assump:universal_kernel},  
\ref{assump:non-deterministic-assumption}, 
\ref{assumption:expectedloss_consistency} and
\ref{assumption:loss_BayesRisk_consistency}. 

We present some sufficient conditions for existence of the function $\widetilde{\psi}$ in
Assumption \ref{assumption:loss_BayesRisk_consistency}. 
The proof of the following lemma is shown in Appendix
\ref{appendix:proof_sufficient_cond_psi}. 
\begin{lemma}
 \label{lemma:existence_t_psi_reciplocal_l}
 Suppose that
 the first condition in Assumption \ref{assumption:expectedloss_consistency} and
 the first condition in Assumption \ref{assumption:loss_BayesRisk_consistency} hold. 
 In addition, suppose that $\ell$ is first-order continuously differentiable on $\Rbb$. 
 Let $d$ be $d=\sup\{z\in\Rbb:\ell'(z)=0\}$, where $\ell'$ is the derivative of
 $\ell$. When $\ell'(z)>0$ holds for all $z\in\Rbb$, we define $d=-\infty$. 
 We assume the following conditions: 
 \begin{enumerate}
  \item $d<-\ell(0)/2$. 
  \item  $\ell(z)$ is second-order continuously differentiable on the open interval
	 $(d,\infty)$. 
  \item $\ell''(z)>0$ holds on $(d,\infty)$. 
  \item $1/\ell'(z)$ is convex on $(d,\infty)$. 
 \end{enumerate}
 Then, for any $\theta\in[0,1]$, the function $\psi(\theta,\rho)$ is non-decreasing as the
 function of $\rho$ for $\rho\geq-\ell(0)/2$. 
\end{lemma}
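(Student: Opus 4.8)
The plan is to show that the partial derivative $\partial\psi(\theta,\rho)/\partial\rho$ is nonnegative for $\rho\ge-\ell(0)/2$, with the convexity of $1/\ell'$ entering through a single Jensen inequality. Writing $\Phi(\theta,\rho)=\inf_{z\in\Rbb}\{\frac{1+\theta}{2}\ell(\rho-z)+\frac{1-\theta}{2}\ell(\rho+z)\}$, we have $\psi(\theta,\rho)=\ell(\rho)-\Phi(\theta,\rho)$, so it suffices to prove $\partial\Phi/\partial\rho\le\ell'(\rho)$. The endpoints $\theta=0$ and $\theta=1$ can be disposed of directly: for $\theta=0$ the inner minimizer is $z=0$ and $\psi(0,\rho)\equiv0$; for $\theta=1$ we get $\Phi(1,\rho)=\inf_w\ell(w)$, constant in $\rho$, so $\psi(1,\rho)=\ell(\rho)-\inf_w\ell(w)$ is nondecreasing because $\ell$ is. The substantive work is for $\theta\in(0,1)$.

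First I would analyze the inner minimization for fixed $\theta\in(0,1)$ and $\rho>d$ (note $\rho\ge-\ell(0)/2>d$ by the lemma's condition~1). The integrand $g(z)=\frac{1+\theta}{2}\ell(\rho-z)+\frac{1-\theta}{2}\ell(\rho+z)$ is convex, with $g'(z)<0$ for all sufficiently negative $z$ and $g'(z)>0$ for all sufficiently large $z$ (using that the second condition of Assumption~\ref{assumption:expectedloss_consistency} forces $\ell'(w)\to\infty$ as $w\to\infty$, while $\ell'(w)=0$ for $w\le d$). Hence a minimizer $z^*$ exists and satisfies the first-order condition
\[
 (1+\theta)\,\ell'(\rho-z^*)=(1-\theta)\,\ell'(\rho+z^*).
\]
Setting $a=\rho-z^*$ and $b=\rho+z^*$, so that $a+b=2\rho$, I would show $a,b\in(d,\infty)$: if, say, $a\le d$ then $\ell'(a)=0$, whence the first-order condition forces $\ell'(b)=0$ and $b\le d$, contradicting $a+b=2\rho>2d$. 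Strict convexity of $\ell$ on $(d,\infty)$ (the lemma's condition~3) then yields uniqueness of $z^*$, and hence, by Danskin's theorem (equivalently, the implicit function theorem applied to $g'(z^*)=0$), differentiability of $\Phi$ in $\rho$ with
\[
 \frac{\partial\Phi}{\partial\rho}=\frac{1+\theta}{2}\ell'(a)+\frac{1-\theta}{2}\ell'(b)=(1+\theta)\ell'(a)=(1-\theta)\ell'(b)=:D,
\]
the last equalities using the first-order condition.

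The crux is then a one-line Jensen argument for the convex function $1/\ell'$ (the lemma's condition~4). From $D=(1+\theta)\ell'(a)=(1-\theta)\ell'(b)$ we read off $1/\ell'(a)=(1+\theta)/D$ and $1/\ell'(b)=(1-\theta)/D$, whose average is exactly $1/D$. Since $\rho=(a+b)/2$ and $a,b\in(d,\infty)$, convexity of $1/\ell'$ on $(d,\infty)$ gives
\[
 \frac{1}{\ell'(\rho)}\le\frac12\Big(\frac{1}{\ell'(a)}+\frac{1}{\ell'(b)}\Big)=\frac{1}{D}.
\]
As $\ell'(\rho)>0$ and $D>0$, this is equivalent to $D\le\ell'(\rho)$, i.e. $\partial\Phi/\partial\rho\le\ell'(\rho)$, so $\partial\psi/\partial\rho=\ell'(\rho)-D\ge0$. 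Continuity of $\psi$ in $\rho$ then upgrades the sign of the derivative to monotonicity on $[-\ell(0)/2,\infty)$.

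I expect the main obstacle to be the bookkeeping around the inner minimizer rather than the inequality itself: verifying that the infimum is attained at an interior point, that the minimizer is unique, and—most importantly—that both arguments $a$ and $b$ remain strictly above $d$, which is precisely what licenses the use of convexity of $1/\ell'$ on $(d,\infty)$. Once these facts are secured, the conclusion follows immediately from Jensen's inequality, and the separate treatment of $\theta\in\{0,1\}$ is routine.
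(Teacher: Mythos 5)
Your proposal is correct and follows essentially the same route as the paper's proof: dispose of $\theta\in\{0,1\}$ directly, establish existence, interiority ($\rho\pm z^*>d$) and uniqueness of the inner minimizer via the same contradiction and strict-convexity arguments, differentiate the value function (your Danskin step is exactly the paper's implicit-function computation, in which the $\partial z/\partial\rho$ terms cancel), and apply midpoint convexity of $1/\ell'$ at $\rho=(a+b)/2$; your quantity $D$ is precisely the paper's harmonic mean $2\ell'(a)\ell'(b)/(\ell'(a)+\ell'(b))$. The one blemish is your appeal to the second condition of Assumption \ref{assumption:expectedloss_consistency} to get coercivity of the inner objective—that condition is not among the lemma's hypotheses—but coercivity follows anyway from the non-negativity of $\ell$ together with $\ell'(z)>0$ for $z\geq-\ell(0)/2$ (so $\ell(w)\to\infty$ as $w\to\infty$ while each term stays bounded below by zero), so nothing essential is lost.
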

When the condition in Lemma \ref{lemma:existence_t_psi_reciplocal_l} is satisfied, 
we can choose $\psi(\theta,-\ell(0)/2)$ as $\widetilde{\psi}(\theta)$ for 
$0\leq\theta\leq{1}$, since $\psi(\theta,-\ell(0)/2)$ is classification-calibrated under
the first condition in Assumption \ref{assumption:loss_BayesRisk_consistency}. 

We give another sufficient condition for existence of the function $\widetilde{\psi}$ in
Assumption \ref{assumption:loss_BayesRisk_consistency}. 
The proof of the following lemma is shown in Appendix
\ref{appendix:proof_sufficient_cond_psi}. 
\begin{lemma}
 \label{lemma:existence_psi_less_differentiable}
 Suppose that
 the first condition in Assumption \ref{assumption:expectedloss_consistency} and
 the first condition in Assumption \ref{assumption:loss_BayesRisk_consistency} hold. 
 Let $d$ be $d=\sup\{z\in\Rbb:\partial\ell(z)=\{0\}\}$. 
 When $0\not\in\partial\ell'(z)$ holds for all $z\in\Rbb$, we define $d=-\infty$. 
 Suppose that the inequality $-\ell(0)/2>d$ holds. 
 For $\rho\geq-\ell(0)/2$ and $z\geq0$, we define $\xi(z,\rho)$ by 
 \begin{align*}
  \xi(z,\rho)
  =
  \begin{cases}
   \displaystyle
   \frac{\ell(\rho+z)+\ell(\rho-z)-2\ell(\rho)}{z\ell'(\rho)}, & z>0,\\
   \displaystyle 0,& z=0. 
  \end{cases}
 \end{align*}
 Suppose that there exists a function $\bar{\xi}(z)$ for $z\geq0$ 
 such that the following conditions hold: 
 \begin{enumerate}
  \item $\bar{\xi}(z)$ is continuous and strictly increasing on $z\geq0$, and satisfies
	$\bar{\xi}(0)=0$ and $\lim_{z\rightarrow\infty}\bar{\xi}(z)>1$. 
  \item $\sup_{\rho\geq-\ell(0)/2}\xi(z,\rho)\leq \bar{\xi}(z)$ holds. 
 \end{enumerate}
 Then, there exists a function $\widetilde{\psi}$ 
 defined in the second condition of Assumption
 \ref{assumption:loss_BayesRisk_consistency}. 
\end{lemma}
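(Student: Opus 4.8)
The plan is to produce an explicit lower bound for $\psi(\theta,\rho)$ that is uniform in $\rho\geq-\ell(0)/2$ and then read off $\widetilde{\psi}$ from it. First I would rewrite the infimum defining $\psi$ as a supremum,
\[
\psi(\theta,\rho)=\sup_{z\in\Rbb}\Big\{\ell(\rho)-\tfrac{1+\theta}{2}\ell(\rho-z)-\tfrac{1-\theta}{2}\ell(\rho+z)\Big\},
\]
and split the bracket into its antisymmetric and symmetric parts,
\begin{align*}
\ell(\rho)-\tfrac{1+\theta}{2}\ell(\rho-z)-\tfrac{1-\theta}{2}\ell(\rho+z)
&=\tfrac{\theta}{2}\big(\ell(\rho+z)-\ell(\rho-z)\big)\\
&\quad-\tfrac12\big(\ell(\rho+z)+\ell(\rho-z)-2\ell(\rho)\big).
\end{align*}
By the definition of $\xi$, the last term equals $\tfrac12\,z\,\ell'(\rho)\,\xi(z,\rho)$ for $z>0$, so the summand inside the supremum is exactly $\tfrac{\theta}{2}(\ell(\rho+z)-\ell(\rho-z))-\tfrac12\,z\,\ell'(\rho)\,\xi(z,\rho)$.

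Next I would lower-bound this expression for $z\geq0$. Monotonicity of $\ell$ and convexity (so $\ell'$ is nondecreasing on $[-\ell(0)/2,\infty)$) give $\ell(\rho+z)-\ell(\rho-z)\geq\ell(\rho+z)-\ell(\rho)\geq z\,\ell'(\rho)$, while the hypothesis $\xi(z,\rho)\leq\bar{\xi}(z)$ controls the subtracted term; combining, for every $z\geq0$,
\[
\psi(\theta,\rho)\geq\tfrac12\,z\,\ell'(\rho)\big(\theta-\bar{\xi}(z)\big).
\]
Because $d<-\ell(0)/2$ and $\ell'(z)>0$ for $z\geq-\ell(0)/2$ (first condition of Assumption \ref{assumption:loss_BayesRisk_consistency}), with $\ell'$ nondecreasing, I set $\ell'_0:=\ell'(-\ell(0)/2)>0$, so that $\ell'(\rho)\geq\ell'_0$ for all $\rho\geq-\ell(0)/2$. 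Since $\bar{\xi}$ is continuous, strictly increasing, vanishes at $0$, and satisfies $\lim_{z\rightarrow\infty}\bar{\xi}(z)>1$, the value $\bar{\xi}^{-1}(\theta/2)$ is well defined and finite for every $\theta\in[0,1]$; choosing $z=z_\theta:=\bar{\xi}^{-1}(\theta/2)$ makes $\theta-\bar{\xi}(z_\theta)=\theta/2$, whence
\[
\inf_{\rho\geq-\ell(0)/2}\psi(\theta,\rho)\ \geq\ \frac{\ell'_0}{4}\,\theta\,\bar{\xi}^{-1}(\theta/2)\ =:\ \widetilde{\psi}(\theta).
\]

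It then remains to verify that $\widetilde{\psi}$ meets the three requirements: $\widetilde{\psi}(0)=0$ and $\widetilde{\psi}(\theta)>0$ for $0<\theta\leq1$ follow from $\bar{\xi}^{-1}(0)=0$ and $\bar{\xi}^{-1}(\theta/2)>0$; continuity and strict monotonicity (taking $\varepsilon=1$) hold because $\widetilde{\psi}$ is a product of the positive, continuous, strictly increasing factors $\theta$ and $\bar{\xi}^{-1}(\theta/2)$; and the third inequality is precisely the bound just obtained. I expect the main obstacle to be making the bound \emph{uniform in $\rho$}: the elementary estimate carries the factor $\ell'(\rho)$, which is smallest at the left endpoint $\rho=-\ell(0)/2$, so the whole argument hinges on the strict positivity $\ell'_0>0$ there, guaranteed by $d<-\ell(0)/2$ together with the first condition of Assumption \ref{assumption:loss_BayesRisk_consistency} and the monotonicity of $\ell'$. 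A secondary point to treat carefully is that $\rho-z_\theta$ may fall below $-\ell(0)/2$, where $\ell$ need not be differentiable; this causes no trouble, since the lower bound uses $\ell'$ only at $\rho$ and merely the monotonicity of $\ell$ for the estimate $\ell(\rho)-\ell(\rho-z)\geq0$.
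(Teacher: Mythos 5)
Your proof is correct, and it takes a genuinely different route from the paper's. The paper invokes Lemma 3 of \cite{bartlett06:_convex_class_risk_bound}: for each fixed $\rho\geq-\ell(0)/2$ it notes that $\ell(\rho-z)$ is classification-calibrated, verifies via the bound $\xi(z,\rho)\geq 1-\ell(\rho)/(z\ell'(\rho))$ and the intermediate value theorem that the generalized inverse $\xi_\rho^{-1}(\theta)=\inf\{z\geq0:\xi(z,\rho)=\theta\}$ exists, quotes the external inequality $\psi(\theta,\rho)\geq\ell'(\rho)\frac{\theta}{2}\xi_\rho^{-1}(\theta/2)$, and then uses $\xi(\cdot,\rho)\leq\bar{\xi}(\cdot)$ to get $\xi_\rho^{-1}(\theta/2)\geq\bar{\xi}^{-1}(\theta/2)$, arriving at $\widetilde{\psi}(\theta)=\ell'(-\ell(0)/2)\frac{\theta}{2}\bar{\xi}^{-1}(\theta/2)$. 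You instead prove the key inequality from scratch: writing $\psi$ as a supremum, splitting the integrand into antisymmetric and symmetric parts, bounding the antisymmetric part below by $z\ell'(\rho)$ via monotonicity plus the subgradient inequality, and bounding the symmetric part above by $\frac12 z\ell'(\rho)\bar{\xi}(z)$ via the hypothesis, then plugging in $z=\bar{\xi}^{-1}(\theta/2)$. What your argument buys is self-containedness — no appeal to the Bartlett--Jordan--McAuliffe lemma, no need to construct or even mention $\xi_\rho^{-1}$, and no intermediate-value argument for its existence — at the modest cost of a factor of $2$ in the constant (your $\widetilde{\psi}(\theta)=\frac{\ell'(-\ell(0)/2)}{4}\theta\,\bar{\xi}^{-1}(\theta/2)$ versus the paper's $\frac{\ell'(-\ell(0)/2)}{2}\theta\,\bar{\xi}^{-1}(\theta/2)$), which is immaterial since any valid $\widetilde{\psi}$ suffices. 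Both proofs hinge on the same two uniformization devices: the monotonicity of $\ell'$ giving $\ell'(\rho)\geq\ell'(-\ell(0)/2)>0$, and the majorant $\bar{\xi}$ replacing the $\rho$-dependent $\xi(\cdot,\rho)$; your closing verification of the three conditions on $\widetilde{\psi}$ (with $\varepsilon=1$) matches what the paper leaves as "straightforward to confirm."
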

Note that Lemma \ref{lemma:existence_psi_less_differentiable} does not require 
the second order differentiability of the loss function. 
We show some examples in which the existence of $\widetilde{\psi}$ is confirmed 
from the above lemmas. 

\begin{example}
 For the truncated quadratic loss $\ell(z)=(\max\{z+1,0\})^2$, 
 the first condition in Assumption \ref{assumption:expectedloss_consistency} and
 the first condition in Assumption \ref{assumption:loss_BayesRisk_consistency} hold. 
 The inequality $-\ell(0)/2=-1/2>\sup\{z:\ell'(z)=0\}=-1$ in the sufficient condition 
 of Lemma \ref{lemma:existence_t_psi_reciplocal_l} holds. 
 For $z>-1$, it is easy to see that $\ell(z)$ is second-order differentiable 
 and that $\ell''(z)>0$ holds. 
 In addition, for $z>-1$, $1/\ell'(z)$ is equal to $1/(2z+2)$ which is convex on 
 $(-1,\infty)$. 
 Therefore, the function $\widetilde{\psi}(\theta)=\psi(\theta,-1/2)$ satisfies
 the second condition in Assumption \ref{assumption:loss_BayesRisk_consistency}. 
\end{example}

\begin{example}
 For the exponential loss $\ell(z)=e^z$, we have $1/\ell'(z)=e^{-z}$. 
 Hence, 
 due to Lemma \ref{lemma:existence_t_psi_reciplocal_l}, 
$\psi(\theta,\rho)$ is non-decreasing in $\rho$. Indeed, we have
 $\psi(\theta,\rho)=(1-\sqrt{1-\theta^2})e^\rho$. 
\end{example}

\begin{example}
 In Example \ref{exam:uncertaity-set_with_error}, we presented the uncertainty set with
 estimation errors. 
 The uncertainty sets are defined based on the revised function $\bar{\ell}(z)$ in 
 \eqref{eqn:loss-for-uncertainty-set-with-error}. 
 Here, 
 we use a similar function defined by
 \begin{align}
  \bar{\ell}^*(\alpha)=
  \begin{cases}
  (|\alpha{w}-1|+h)^2-(1+h)^2, & \alpha\geq0,\\
   \infty,                     & \alpha<0,
  \end{cases}
 \end{align}
 for the construction of uncertainty sets. 
 Here, $w$ and $h$ are positive constants, and we suppose $w>1/2$. 
 The corresponding loss function is given as $\bar{\ell}(z)$. 
 Then we have $\bar{\ell}(z)=u(z/w)$ defined in
 \eqref{eqn:uo_for_uncertaintyset_est_err}. 
 For $w>1/2$, we can confirm that $\sup\{z:\bar{\ell}'(z)=0\}<-\bar{\ell}(0)/2$ holds. 
 Since $u(z)$ is not strictly convex, Lemma \ref{lemma:existence_t_psi_reciplocal_l} does
 not work. Hence, we apply Lemma \ref{lemma:existence_psi_less_differentiable}. 
 A simple calculation yields that 
 $\bar{\ell}'(-\bar{\ell}(0)/2)\geq{}(4w-1)/(4w^2)>0$ for any $h\geq{0}$. 
 Note that $\bar{\ell}(z)$ is differentiable on $\Rbb$. Thus, the monotonicity of $\bar{\ell}'$ for
 the convex function leads to 
 \begin{align*}
  \xi(z,\rho)
  =\frac{1}{\bar{\ell}'(\rho)}\left(\frac{\bar{\ell}(\rho+z)-\bar{\ell}(\rho)}{z}-\frac{\bar{\ell}(\rho)-\bar{\ell}(\rho-z)}{z}\right)
  \leq
  \frac{\bar{\ell}'(\rho+z)-\bar{\ell}'(\rho-z)}{\bar{\ell}'(\rho)}. 
 \end{align*}
 Figure \ref{fig:derivative-revised_loss} depicts the derivative of $\bar{\ell}$ with
 $h=1$ and $w=1$. 
 Since the derivative $\bar{\ell}'(z)$ is Lipschitz continuous and the 
 Lipschitz constant is equal to $1/(2w)$, we have
 $\bar{\ell}'(\rho+z)-\bar{\ell}'(\rho-z)\leq{z/w}$. Therefore, the inequality 
\begin{align*}
 \sup_{\rho\geq-\bar{\ell}(0)/2}\xi(z,\rho)
 \leq{}
 \sup_{\rho\geq-\bar{\ell}(0)/2} \frac{z/w}{\bar{\ell}'(\rho)}
 =
 \frac{z/w}{\bar{\ell}'(-\bar{\ell}(0)/2)}
 \leq
 \frac{4w}{4w-1}z
 \leq
 2z
\end{align*}
 holds. We see that $\bar{\xi}(z)=2z$ satisfies the sufficient condition of Lemma
\ref{lemma:existence_psi_less_differentiable}. 
 The inequality 
 \begin{align*}
  \bar{\ell}'(-\bar{\ell}(0)/2)\frac{\theta}{2}\bar{\xi}^{-1}(\frac{\theta}{2})\geq{}
  \frac{4w-1}{32w^2}\theta^2
 \end{align*}
 ensures that $\widetilde{\psi}(\theta)=\frac{4w-1}{32w^2}\theta^2$ is a valid choice. 
 Therefore, the loss function corresponding to the revised uncertainty set in 
 Example \ref{exam:uncertaity-set_with_error} satisfies the sufficient conditions for the
 Bayes risk consistency. 
 \begin{figure}[tb]
 \begin{center}
  \includegraphics[scale=0.5]{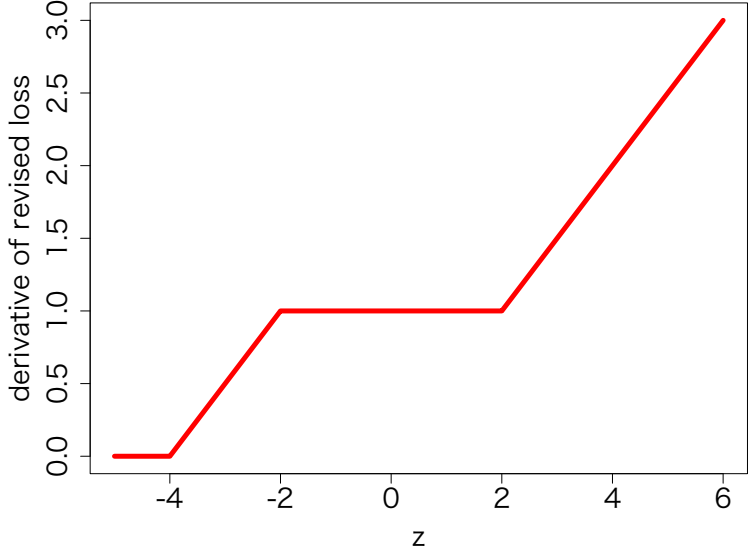}
 \end{center}
  \caption{The derivative of the loss function corresponding to the revised uncertainty
  set with the estimation error. }
  \label{fig:derivative-revised_loss}
 \end{figure}
\end{example}

\section{Experiments}
\label{sec:Numerical_Studies}
We compare the statistical properties of the proposed learning algorithm to the other
learning methods. As proved in Section \ref{sec:Statistical_Properties}, 
the kernel-based learning algorithm in Figure  \ref{fig:Kernel-based_learning_algorithm}
has the statistical consistency under some assumptions, while MPM and MM-MPM do not have
the statistical consistency in general. The main purpose of the numerical study is to
compare our method to MPM and its variants. 

We compare the kernel-based learning algorithms using the Gaussian kernel. 
So far, many works have been devoted to compare the linear models and the kernel-based
models. The conclusion is that the linear model outperforms the kernel-based model
when the decision boundary is well approximated by the linear model. 
Otherwise, the linear model has the approximation bias, and the kernel-based estimators
with a nice regularization outperform the linear models in general. 
Hence, we focus on the kernel-based estimators. 
In our experiments, the following methods were examined to the synthetic data and the
standard benchmark datasets: $C$-SVM, MPM, unbiased MPM, and 
the kernel variant of the proposed method presented in Figure
\ref{fig:simple_learning_algorithm}. 
For simplicity, the function part $f\in\mathcal{H}$ and the bias term $b\in\Rbb$ are
estimated based on all training samples, 
though in the learning algorithm in Figure \ref{fig:Kernel-based_learning_algorithm},  
the dataset is decomposed into two subsets in order to ensure the statistical
consistency. 
In the unbiased MPM, the bias term $b$ in the model is estimated by minimizing the
training error rate 
after estimating the function part, $\widehat{f}\in\mathcal{H}$. 
Clearly, the unbiased estimator will outperform the original MPM, 
when the probability of the class label is heavily unbalanced. 
In the proposed method, we apply the uncertainty set defined from the loss function $u(z)$
defined in \eqref{eqn:uo_for_uncertaintyset_est_err}. 
This is the revised uncertainty set of the ellipsoidal uncertainty set with the estimation
error. The parameter in the function $u(z)$ of \eqref{eqn:uo_for_uncertaintyset_est_err}
is set to $h=0$ or $h=1$. 
The kernel parameter and the regularization parameter are estimated by $5$-fold cross
validation. We use the test error for the evaluation of the prediction accuracy.

\subsection{Synthetic data}
\label{subsec:Synthetic_data}
Suppose that the input points $\x$ conditioned on the positive label are generated by the
two dimensional normal distribution with the mean $\mubold_p=(0,0)^T$ and the covariance 
matrix $\Sigma_p=I$, where $I$ is the identity matrix. 
In the same way, 
the conditional distribution of input points with the negative label is 
defined as the normal distribution with $\mubold_n=(1,1)^T$ and the covariance matrix
$\Sigma_n=R^T\mathrm{diag}(0.5^2,1.5^2)R$, where $R$ is the $\pi/3$ radian
counterclockwise rotation matrix. 
The label probability is defined by $P(Y=+1)=0.2$ or $0.5$. 
The size of training samples is $m=400$. 

Table \ref{tbl:normal_test_error} shows the test error of the estimators: 
$C$-SVM, MPM, unbiased MPM, learning with the loss function 
\eqref{eqn:uo_for_uncertaintyset_est_err} with $h=0$ or $h=1$. 
We notice that, under the unbalanced samples, i.e., the case of $P(Y=+1)=0.2$, 
the MPM has the estimation bias. 
On the setup of the balanced data, MPM is slightly  better than the other methods. All the
learning algorithm except MPM are comparable to each other. The difference of the
parameter $h$ in the loss function 
\eqref{eqn:uo_for_uncertaintyset_est_err} is not significant in this experiment. 

\begin{table*}[tb]
 \caption{Test error $(\%)$ of each learning method is presented with the standard deviation. 
 We compared $C$-SVM, MPM, unbiased MPM, learning method with the loss function 
 \eqref{eqn:uo_for_uncertaintyset_est_err} with $h=0$ and $h=1$. } 
\label{tbl:normal_test_error}
\centering
 \begin{tabular}{c|ccccc}
  \multicolumn{1}{l|}{$P(Y\!\!=\!+1)$}
   & $C$-SVM & MPM & unbiased MPM & $h=0$ & $h=1$ \\ 
  \hline
  0.2 & $15.8\pm1.1$ & $26.0\pm2.2$ & $16.5\pm1.2$ & $15.9\pm1.1$ & $16.0\pm1.2$ \\
  0.5 & $25.2\pm1.1$ & $25.1\pm1.0$ & $25.5\pm1.3$ & $25.5\pm1.4$ & $25.4\pm1.1$ \\
 \end{tabular}
\end{table*}

\subsection{Benchmark data}
\label{subsec:Benchmark_data}
In this section, we use thirteen artificial and real world datasets from the UCI, DELVE,
and STATLOG benchmark repositories: 
{\tt banana}, {\tt breast-cancer}, {\tt diabetes}, {\tt german}, 
{\tt heart}, {\tt image}, {\tt ringnorm}, {\tt flare-solar}, 
{\tt splice}, {\tt thyroid}, {\tt titanic}, {\tt twonorm}, 
{\tt waveform}. 
All datasets are provided as IDA benchmark repository. 
See \cite{mach:raetsch+onoda+mueller:2001} and \cite{ratsch00:_robus} for details of 
datasets. The properties of each dataset are shown in Table \ref{table:datasets}, where
``dim'', ``$P(Y=+1)$'',``\#train'', ``\#test'' and ``rep.'' denote the input dimension,
the ratio of the positive labels in training samples, the size of training set, 
the size of test set, and the number of replication of learning to evaluate the average
performance, respectively. 

In the experiment, especially we compare unbiased MPM and our method using the loss
function \eqref{eqn:uo_for_uncertaintyset_est_err} with $h=0$.  
The uncertainty set of unbiased MPM is ellipsoid defined by the estimated covariance matrix. 
The corresponding loss function of the form of \eqref{eqn:general-loss} does 
not exist, since the convex-hull of the input points is not taken into account. 
In our method using the loss function \eqref{eqn:uo_for_uncertaintyset_est_err} with
$h=0$, the uncertainty set is the intersection of the same ellipsoid as unbiased MPM
and the convex-hull of the input vectors. 
That is, the revision of the ellipsoidal uncertainty set in unbiased MPM leads to the
uncertainty set of our algorithm. 
We use the $t$-test to detect the difference of test errors of these two learning
algorithms. 

Table \ref{tbl:normal_test_error} shows test errors $(\%)$ for benchmark datasets with the
standard deviation. 
We show the results of $C$-SVM, MPM, unbiased MPM, learning method with the loss function 
\eqref{eqn:uo_for_uncertaintyset_est_err} with $h=0$ and $h=1$. 
In the column of the unbiased MPM and our method with $h=0$, 
the bold face letters indicates that the test error is smaller compared to the opponent at
the significance level $1\%$. Overall, $C$-SVM performs better than the others. 
the learning method with the loss function \eqref{eqn:uo_for_uncertaintyset_est_err} with
$h=1$ is comparable to $C$-SVM except {\tt breast-cancer}, {\tt flare-solar} and {\tt titanic}. 
Note that the loss function \eqref{eqn:uo_for_uncertaintyset_est_err} with $h=1$ is similar to
the hinge loss around zero. Hence, it is clear that the results of our method with
$h=1$ is close to the results of $C$-SVM. 
The results of $t$-test indicates that, comparing to unbiased MPM, our method using the
loss function \eqref{eqn:uo_for_uncertaintyset_est_err} with $h=0$ achieves the smaller 
test errors. In both algorithms, the same estimator is used for the bias term in the
decision function. Hence, the result implies that our method is superior to unbiased MPM 
in the estimation of the function part $f\in\mathcal{H}$ in the decision function. 
In the dataset {\tt flare-solar} and {\tt titanic}, unbiased MPM is superior to our method 
with $h=0$. 
This is because there are many duplications in covariates of these datasets. 
Indeed, in 666 training samples of {\tt flare-solar}, 
there are only 76 different input points, and {\tt titanic} has only 11 different input
points out of 150 training samples. In the other datasets, the variety of the covariates
is almost equal to the size of the training samples. In our method, the uncertainty set
for such data does not capture the distribution of the input points appropriately. 
We notice that the revision of the uncertainty set will be useful to achieve high
prediction accuracy in comparison to (unbiased) MPM, 
as long as the covariate does not have many duplications.

\begin{table}[t]
 \caption{The properties of each data sets are shown, where ``dim'',
 ``$P(Y=+1)$'',``\#train'', ``\#test'' and ``rep.'' denote the input dimension, the ratio
 of the positive label in training samples, the size of training set, the size of test
 set, and the number of replication of learning, respectively. }
 \label{table:datasets}
 \centering
 \begin{tabular}{lccccc}
  dataset        & dim & $P(Y\!\!=\!+1)$ &\#train  & \#test & rep.\\ \hline
  banana         & 2   &  0.454  &400      & 4900   & 100 \\
  breast-cancer  & 9   &  0.294  &200      &   77   & 100 \\
  diabetis       & 8   &  0.350  &468      &  300   & 100 \\
  flare-solar    & 9   &  0.552  &666      &  400   & 100 \\
  german         & 20  &  0.301  &700      &  300   & 100 \\
  heart          & 13  &  0.445  &170      &  100   & 100 \\
  image          & 18  &  0.574  &1300     & 1010   &  20 \\
  ringnorm       & 20  &  0.497  &400      & 7000   & 100 \\
  splice         & 60  &  0.483  &1000     & 2175   &  20 \\
  thyroid        &  5  &  0.305  &140      &   75   &  85 \\
  titanic        &  3  &  0.322  &150      & 2051   & 100 \\
  twonorm        & 20  &  0.505  &400      & 7000   & 100 \\
  waveform       & 21  &  0.331  &400      & 4600   & 100 \\ \hline
 \end{tabular}
\end{table}

\begin{table*}[tb]
 \caption{Test errors $(\%)$ for benchmark datasets are presented with the standard
 deviation. We compared $C$-SVM, MPM, unbiased MPM, learning method with the loss function 
 \eqref{eqn:uo_for_uncertaintyset_est_err} with $h=0$ and $h=1$. 
 We conduct $t$-test to compare the unbiased MPM and the learning method 
 using the loss function \eqref{eqn:uo_for_uncertaintyset_est_err} with $h=0$. 
 The bold face letters indicates that the test error is smaller compared to the opponent 
 at the significance level $1\%$.}
\label{tbl:normal_test_error}
\centering
 \vspace*{2mm}
 \begin{tabular}{l|rrrrr}
  \multicolumn{1}{l|}{dataset} & 
  \multicolumn{1}{c}{$C$-SVM}  &
  \multicolumn{1}{c}{MPM}      &
  \multicolumn{1}{c}{unbiased MPM}  &
  \multicolumn{1}{c}{$h=0$}   &
  \multicolumn{1}{c}{$h=1$}  \\  \hline
 banana       &
  $10.7\pm0.6$ & $11.4\pm0.9$ & $11.4\pm0.9$ & ${\bf 11.1\pm0.9}$ &  $10.9\pm0.7$ \\ 
 breast-cancer &
  $26.9\pm4.8$ & $35.0\pm4.9$ & $34.0\pm4.8$ & ${\bf 28.1\pm5.0}$ & $28.1\pm4.5$ \\
 diabetis      &
$23.9\pm 2.1$& $28.8\pm 2.4$& $28.3\pm 2.5$  & ${\bf 24.3\pm 1.9}$& $24.2\pm 2.1$\\
 flare-solar   &
$33.7\pm 2.2$& $34.9\pm 1.7$& ${\bf 35.7\pm 1.9}$& $36.8\pm 3.1$& $36.8\pm 2.9$\\
 german        &
 $23.8\pm 2.3$&  $29.2\pm 2.4$&  $28.2\pm 2.7$ &  ${\bf 23.5\pm 2.3}$&  $23.6\pm 2.4$\\
 heart         &
 $16.7\pm 3.5$&  $25.6\pm 4.2$&  $25.7\pm 4.0$&  ${\bf 17.3\pm 3.7}$&  $17.2\pm 3.5$\\
 image         &
 $3.3\pm 0.7$&  $3.2\pm 0.7$&  $3.2\pm 0.7$&  $3.4\pm 0.6$&  $3.3\pm 0.5$\\
 ringnorm      &
 $1.7\pm 0.3$&  $3.2\pm 0.4$&  $2.8\pm 0.5$&  ${\bf 1.7\pm 0.3}$&  $1.6\pm 0.2$\\
 splice        &
$11.1\pm 0.7$& $12.3\pm 1.7$& $11.7\pm 0.8$& $11.3\pm 0.7$& $11.1\pm 0.8$\\
 thyroid       &
 $5.3\pm 2.1$&  $6.3\pm 3.1$&  $6.2\pm 3.7$&  $5.6\pm 2.4$&  $5.4\pm 2.2$\\
 titanic       &
 $22.4\pm 0.8$&  $24.1\pm 2.2$&  ${\bf 22.4\pm 1.2}$&  $23.5\pm 1.6$&  $23.7\pm 3.4$\\
 twonorm       & 
  $2.6\pm 0.3$&   $4.5\pm 0.7$&   $4.4\pm 0.6$&  ${\bf 2.6\pm 0.3}$&   $2.6\pm 0.4$\\
 waveform      &
 $10.2\pm 0.7$&  $13.0\pm 0.9$&  $12.7\pm 0.8$&  ${\bf 10.2\pm 0.6}$&  $10.1\pm 0.7$\\
 \end{tabular}
\end{table*}

\section{Conclusion}
\label{sec:Conclusion}
 In this paper, we studied the relation between the loss function approach and the
 uncertainty set approach in binary classification problems. We showed that these two
 approaches are connected to each other by the conjugate property based on the Legendre
 transformation. 
 Given a loss function, there exists a corresponding parametrized uncertainty set.  
 In general, however, uncertainty set does not correspond to the empirical loss function. 
 We presented a way of revising the uncertainty set such that there exists an empirical
 loss function. 
 Then,we proposed a modified maximum-margin algorithm based on the parametrized uncertainty set. 
 We proved the statistical consistency of the learning algorithm. 
 Numerical experiments showed that the revision of the uncertainty set often improves the
 prediction accuracy of the classifier. 

 In our proof of the statistical consistency, the hinge loss used in $\nu$-SVM is excluded. 
 \cite{steinwart03:_optim_param_choic_suppor_vector_machin} 
 proved the statistical consistency of $\nu$-SVM with a nice choice of 
 the regularization parameter. 
 We are currently investigating the relaxation of the assumptions of our theoretical
 result so as to  include the hinge loss function and other popular loss functions such as
 the logistic loss. 
 As for the statistical modeling, 
 the relation between the loss function approach and the uncertainty set approach can be a
 useful tool. 
 In optimization and control theory, 
 the modeling based on the uncertainty set is frequently applied to the real-world data; 
 see the modeling in robust optimization and related works \citep{ben-tal02:_robus}. 
 We believe that 
 the learning algorithm with the revision of the uncertainty set 
 can bridge a gap between 
 statistical modeling based on some intuition and 
 nice statistical properties of the estimated classifiers.

\section*{Acknowledgments}
TK was partially supported by Grant-in-Aid for Young Scientists (20700251). AT was
partially supported by Grant-in-Aid for Young Scientists (23710174). TS was partially
supported by MEXT Kakenhi 22700289 and the Aihara Project, the FIRST program from JSPS,
initiated by CSTP. 

 \appendix
\section{Proof of Lemma \ref{lemma:existence_opt_sol}}
\label{appendix:Proof_Lemma_existence_opt_sol}

 First, we prove the existence of an optimal solution. 
 According to the standard argument on the kernel estimator, we can restrict the function
 part $f$ to be the form of 
 \begin{align*}
  f(x)=\sum_{j=1}^{m_1}\alpha_jk(x,x_j^{(1)}). 
 \end{align*}
 Then, the problem 
 is reduced to the finite-dimensional problem, 
 \begin{align}
  \begin{array}{l}
   \displaystyle
   \min_{\alphabold,b,\rho}
   -2\rho+\frac{1}{m_1}\sum_{i=1}^{m_1}
   \ell(\rho-y_i^{(1)}(\sum_{j=1}^{m_1}\alpha_jk(x_i^{(1)},x_j^{(1)})+b))\\
   \displaystyle
    \st \sum_{i,j=1}^{m_1}\alpha_i\alpha_j k(x_i^{(1)},x_j^{(1)})\leq \lambda^2.
  \end{array}
  \label{eqn:finit-version-optprob}  
 \end{align}
 Let $\zeta_0(\alphabold,b,\rho)$ be the objective function of
 \eqref{eqn:finit-version-optprob}. 
 Let us define $\Scal$ be the linear subspace in $\Rbb^{m_1}$ spanned by the 
 column vectors of the gram matrix $(k(x_i^{(1)},x_j^{(1)}))_{i,j=1}^{m_1}$. 
 We can impose the constraint $\alphabold=(\alpha_1,\ldots,\alpha_{m_1})\in\Scal$, 
 since the orthogonal complement of $\Scal$ does not affect the objective and the
 constraint in \eqref{eqn:finit-version-optprob}. 
 We see that Assumption \ref{assump:universal_kernel} and the reproducing property yield 
 the inequality
 $\|y_i^{(1)}\sum_{j=1}^{m_1}\alpha_jk(\cdot,x_j^{(1)})\|_\infty\leq{K}\lambda$. 
 Due to this inequality and the assumptions on the function $\ell$, 
 the objective function $\zeta_0(\alphabold,b,\rho)$ is bounded below by 
 \begin{align}
  \zeta_1(b,\rho)=
  -2\rho+\frac{m_p}{m_1}\ell(\rho-b-K\lambda)+\frac{m_n}{m_1}\ell(\rho+b-K\lambda).
  \label{eqn:lower-bound-objective}
 \end{align}
 Hence, for any real number $c$, the inclusion relation 
 \begin{align}
  &\phantom{\subset}
  \bigg\{(\alphabold,b,\rho)\in\Rbb^{m_1+2}~:~\zeta_0(\alphabold,b,\rho)\leq{c},\ 
  \sum_{i,j=1}^{m_1}\alpha_i\alpha_j k(x_i^{(1)},x_j^{(1)})\leq \lambda^2,\,
  \alphabold\in\Scal
  \bigg\}\label{eqn:level-set-kernel-based-problem}\\
  &\subset
  \bigg\{(\alphabold,b,\rho)\in\Rbb^{m_1+2}~:~\zeta_1(b,\rho)\leq{c},\ 
  \sum_{i,j=1}^{m_1}\alpha_i\alpha_j k(x_i^{(1)},x_j^{(1)})\leq \lambda^2,\,
  \alphabold\in\Scal
  \bigg\}\nonumber
 \end{align}
 holds. 
 Note that the vector $\alphabold$ satisfying 
 $\sum_{i,j=1}^{m_1}\alpha_i\alpha_j k(x_i^{(1)},x_j^{(1)})\leq \lambda^2$ and 
 $\alphabold\in\Scal$ is restricted to a compact subset in $\Rbb^{m_1}$. 
 We shall prove that the subset \eqref{eqn:level-set-kernel-based-problem} is compact, 
 if they are not empty. 
 We see that the two sets above are closed subsets, 
 since both $\zeta_0$ and $\zeta_1$ are continuous. 
 By the variable change from $(b,\rho)$ to $(u_1,u_2)=(\rho-b,\rho+b)$, 
 $\zeta_1(b,\rho)$ is transformed to the convex function 
 $\zeta_2(u_1,u_2)$ defined by 
 \begin{align*}
  \zeta_2(u_1,u_2)=
  -u_1+\frac{m_p}{m_1}\ell(u_1-K\lambda)-u_2+\frac{m_n}{m_1}\ell(u_2-K\lambda). 
 \end{align*} 
 The subgradient of $\ell(z)$ diverges to infinity, when $z$ tends to infinity. 
 In addition, $\ell(z)$ is a non-decreasing and non-negative function. 
 Then, we have 
 \begin{align*}
  \lim_{|u_1|\rightarrow\infty}-u_1+\frac{m_p}{m_1}\ell(u_1-K\lambda)=\infty. 
 \end{align*}
 The same limit holds for $-u_2+\frac{m_n}{m_1}\ell(u_2-K\lambda)$. 
 Hence, the level set of $\zeta_2(u_1,u_2)$ is closed and bounded, i.e., compact.
 As a result, the level set of $\zeta_1(b,\rho)$ is also compact. 
 Therefore, the subset \eqref{eqn:level-set-kernel-based-problem} is also compact 
 in $\Rbb^{m_1+2}$. This implies that \eqref{eqn:finit-version-optprob} has an optimal
 solution. 

 Next, we prove the duality between 
 \eqref{eqn:empirical_regularized_loss_minimization} and
 \eqref{eqn:opt_prob_kernel_estimation_RCM}. 
 Since \eqref{eqn:finit-version-optprob} has an optimal solution, 
 the problem with the slack variables $\xi_i,i=1,\ldots,m_1$, 
 \begin{align*}
  \begin{array}{l}
   \displaystyle
   \min_{\alphabold,b,\rho,\xibold}
   -2\rho+\frac{1}{m_1}\sum_{i=1}^{m_1}\ell(\xi_i)\\
   \displaystyle
    \st \sum_{i,j=1}^{m_1}\alpha_i\alpha_j k(x_i^{(1)},x_j^{(1)})\leq \lambda^2,\\
   \displaystyle   
    \phantom{\st}
    \rho-y_i^{(1)}(\sum_{j=1}^{m_1}\alpha_ik(x_i^{(1)},x_j^{(1)})+b)\leq\xi_i,\,i=1,\ldots,m_1. 
  \end{array}
 \end{align*}
 also has an optimal solution and the finite optimal value. 
 In addition, the above problem clearly satisfies the Slater condition 
 \cite[Assumption 6.2.4]{book:Bertsekas+etal:2003}.  
 Indeed, at the
 feasible solution, $\alphabold=\0,b=0,\rho=0$ and $\xi_i=1,i=1,\ldots,m_1$, 
 the constraint inequalities are all inactive for positive $\lambda$. 
 Hence, Proposition 6.4.3 in \cite{book:Bertsekas+etal:2003}
 ensures that the min-max theorem holds, i.e., there is no duality gap. 
 Then, in the same way as \eqref{eqn:general-nu-svm-dual}, we obtain 
 \eqref{eqn:opt_prob_kernel_estimation_RCM}
 with the uncertainty set \eqref{eqn:kernel-based-uncertainty-set}
 as the dual problem of \eqref{eqn:empirical_regularized_loss_minimization}. 

\section{Proofs of 
 Lemmas in Section 
 \ref{subsec:Convergence+to+Optimal_Expected_Loss}}
\label{appendix:proof_lemmas_risk_converge}

We show proofs of lemmas in Section \ref{subsec:Convergence+to+Optimal_Expected_Loss}. 

\subsection{Proof of Lemma \ref{lemma:risk_boundedness}}

 Let $S\subset\Xcal$ be the subset 
 $S=\{x\in\Xcal:\varepsilon\leq{}P(+1|x)\leq{}1-\varepsilon\}$, then we have $P(S)>0$. 
 Due to the non-negativity of the loss function $\ell$, we have
\begin{align*}
 \Rcal(f,\rho)
 &\geq 
 -2\rho
 + \int_S{}\bigg\{
 P(+1|x)\ell(\rho-f(x))+P(-1|x)\ell(\rho+f(x))
 \bigg\}P(dx) \\
 &=
 \int_S{}\bigg\{
 -\frac{2}{P(S)}\rho+P(+1|x)\ell(\rho-f(x))+P(-1|x)\ell(\rho+f(x))\bigg\}P(dx). 
\end{align*}
 For given $\eta$ satisfying $\varepsilon\leq\eta\leq{}1-\varepsilon$, 
 we define the function $\xi(f,\rho)$ by 
\begin{align*}
 \xi(f,\rho)=-\frac{2}{P(S)}\rho+\eta\ell(\rho-f)+(1-\eta)\ell(\rho+f),\quad
 f,\rho\in\Rbb. 
\end{align*}
 We derive a lower bound $\inf\{\xi(f,\rho):f,\rho\in\Rbb\}$. 
 Since $\ell(z)$ is a finite-valued convex function on $\Rbb$, 
 the subdifferential $\partial{\xi}(f,\rho)\subset\Rbb^2$ is given as 
 \begin{align*}
  \partial\xi(f,\rho)
  =\left\{
  (0,-\frac{2}{P(s)})^T+u\eta(-1,1)^T+v(1-\eta)(1,1)^T:
  u\in\partial\ell(\rho-f),\,v\in\partial\ell(\rho+f)\right\}. 
 \end{align*}
 Formulas of the subdifferential are presented in Theorem 23.8 and Theorem 23.9 of
 \cite{book:Rockafellar:1970}. 
 We prove that 
 there exist $f^*$ and $\rho^*$ such that $(0,0)^T\in\partial\xi(f^*,\rho^*)$ holds. 
 Since the second condition in Assumption \ref{assumption:expectedloss_consistency} holds 
 for the convex function $\ell$, 
 the union $\cup_{z\in\Rbb}\partial\ell(z)$ includes all the positive real numbers. 
 Hence, there exist $z_1$ and $z_2$ satisfying 
 $\frac{1}{\eta{}P(S)}\in\partial\ell(z_1)$ and
 $\frac{1}{(1-\eta){}P(S)}\in\partial\ell(z_2)$. 
 Then, for $f^*=(z_2-z_1)/2,\,\rho^*=(z_1+z_2)/2$, 
 the null vector is an  element of $\partial\xi(f^*,\rho^*)$. 
 Since $\xi(f,\rho)$ is convex in $(f,\rho)$, 
 the minimum value of $\xi(f,\rho)$ is attained at $(f^*,\rho^*)$. 
 Define $z_{\mathrm{up}}$ as a real number satisfying 
 \begin{align*}
  g>\frac{1}{\varepsilon{}P(S)},\quad \forall{g}\in\partial\ell(z_{\mathrm{up}}). 
 \end{align*}
 Since $\varepsilon\leq\eta\leq{}1-\varepsilon$ is assumed, both $z_1$ and $z_2$ 
 are less than $z_{\mathrm{up}}$ due to the monotonicity of the subdifferential. 
 Then, the inequality 
 \begin{align*}
  \xi(f,\rho)
  \geq
  -\frac{z_1+z_2}{P(S)}+\eta\ell(z_1)+(1-\eta)\ell(z_2)
  \geq
  -\frac{2z_{\mathrm{up}}}{P(S)}
 \end{align*}
 holds for all $f,\rho\in\Rbb$ and all $\eta$ such that
 $\varepsilon\leq\eta\leq{1-\varepsilon}$. 
 Hence, for any measurable function $f\in{L_0}$ and $\rho\in\Real$, we have 
 \begin{align*}
  \Rcal(f,\rho)\geq \int_S \frac{-2z_{\mathrm{up}}}{P(S)}P(dx)\geq{}-2z_{\mathrm{up}}. 
 \end{align*}
 As a result, we have $\Rcal^*\geq-2z_{\mathrm{up}}>-\infty$. 

\subsection{Proof of Lemma \ref{lemma:convergence_regularized_exp_loss}}

 Corollary 5.29 of \cite{steinwart08:_suppor_vector_machin} ensures that 
 the equality 
 \begin{align*}
  \inf\{\Ebb[\ell(\rho-yf(x))]:f\in\Hcal\}=\inf\{\Ebb[\ell(\rho-yf(x))]:f\in{}L_0\}
 \end{align*}
 holds for any $\rho\in\Rbb$. Thus, we have
 $\inf\{\Rcal(f,\rho):f\in\Hcal\}=\inf\{\Rcal(f,\rho):f\in{}L_0\}$
 for any $\rho\in\Rbb$. Then, the equality 
 \begin{align*}
  \inf\{\Rcal(f,\rho):f\in\Hcal,\,\rho\in\Rbb\}=\Rcal^*
 \end{align*} 
 holds. 
 Under Assumption \ref{assump:non-deterministic-assumption} and 
 Assumption \ref{assumption:expectedloss_consistency}, we have $\Rcal^*>-\infty$ due to
 Lemma \ref{lemma:risk_boundedness}. 
 Then, for any $\varepsilon>0$, there exist $\lambda_\varepsilon>0, f_\varepsilon\in\Hcal$
 and $\rho_\varepsilon\in\Rbb$ such that 
 $\|f_\varepsilon\|_{\Hcal}\leq\lambda_\varepsilon$ and 
 $\Rcal(f_\varepsilon,\rho_\varepsilon)\leq\Rcal^*+\varepsilon$ hold. 
 For all $\lambda\geq \lambda_\varepsilon$ we have
 \begin{align*}
  \inf\{\Rcal_{\lambda}(f,\rho):f\in\Hcal,\rho\in\Rbb\}
  \leq 
  \Rcal_{\lambda}(f_\varepsilon,\rho_\varepsilon)
  =
  \Rcal(f_\varepsilon,\rho_\varepsilon)
  \leq  
  \Rcal^*+\varepsilon. 
 \end{align*}
 On the other hand, it is clear that the inequality 
 $\Rcal^*\leq\inf\{\Rcal_{\lambda}(f,\rho):f\in\Hcal,\rho\in\Rbb\}$ 
 holds. Hence, Eq.\eqref{eqn:convergence_ecpected_loss} holds. 

\subsection{Proof of Lemma \ref{lemma:estimator_bound}}


 Under Assumption  \ref{assump:non-deterministic-assumption}, 
 the label probabilities, $P(y=+1)$ and $P(y=-1)$, are positive. 
 We assume that the inequalities
 \begin{align}
  \frac{1}{2}P(Y=+1)< \frac{m_p}{m_1},\quad  \frac{1}{2}P(Y=-1)< \frac{m_n}{m_1}
  \label{eqn:label-prob_ineq}
 \end{align}
 hold. Applying Chernoff bound, we see that 
 there exists a positive constant $c>0$ depending only on the marginal probability 
 of the label such that \eqref{eqn:label-prob_ineq} holds with the probability higher than
 $1-e^{-cm_1}$. 

 Lemma \ref{lemma:existence_opt_sol} ensures that 
 the problem \eqref{eqn:empirical_regularized_loss_minimization} has 
 optimal solutions $\widehat{f},\widehat{b},\widehat{\rho}$. 
 The first inequality in \eqref{eqn:bounds_estimators}, i.e.,
 $\|\widehat{f}\|_{\Hcal}\leq\lambda_{m_1}$, is clearly satisfied. 
 Then, we have $\|\widehat{f}\|_\infty\leq{}K\lambda_{m_1}$ from the reproducing property
 of the RKHSs. 
 The definition of the estimator and the non-negativity of $\ell$ 
 yield that 
 \begin{align*}
  -2\widehat{\rho}
  \leq 
  -2\widehat{\rho}+\frac{1}{m_1}\sum_{i=1}^{m_1}
  \ell(\widehat{\rho}-y_i^{(1)}(\widehat{f}(x_i^{(1)})+\widehat{b}))
  \leq 
  \widehat{\Rcal}_{T_1,\lambda_{m_1}}(0,0)
  =\ell(0). 
 \end{align*}
 Then, we have 
 \begin{align}
  \widehat{\rho}\geq -\frac{\ell(0)}{2}. 
  \label{eqn:rho_lower-bound}
 \end{align}
 Next, we consider the optimality condition of $\widehat{\Rcal}_{T_1,\lambda_{m_1}}$. 
 According to the calculus of subdifferential introduced in Section 23 of
 \cite{book:Rockafellar:1970}, 
 the derivative of the objective function with respect to $\rho$ leads to an optimality
 condition, 
 \begin{align*}
  0\,\in\,-2+\frac{1}{m_1}\sum_{i=1}^{m_1}
  \partial\ell(\widehat{\rho}-y_i^{(1)}(\widehat{f}(x_i^{(1)})+\widehat{b})). 
 \end{align*}
 The monotonicity and non-negativity of the subdifferential and the bound of
 $\|f\|_\infty$ lead to  
 \begin{align*}
  2
  &\geq
  \frac{1}{m_1}\sum_{i=1}^{m_1}
  \partial\ell(\widehat{\rho}-y_i^{(1)}\widehat{b}-K\lambda_{m_1})\\
  &=
   \frac{1}{m_1}\sum_{i=1}^{m_p}\partial\ell(\widehat{\rho}-\widehat{b}-K\lambda_{m_1})
  +\frac{1}{m_1}\sum_{j=1}^{m_n}\partial\ell(\widehat{\rho}+\widehat{b}-K\lambda_{m_1})\\
  &\geq
   \frac{1}{m_1}\sum_{i=1}^{m_p}\partial\ell(\widehat{\rho}-\widehat{b}-K\lambda_{m_1}). 
 \end{align*}
 The above expression means that there exist numbers in the subdifferential such that 
 the inequality holds, where  
 $\sum_{i=1}^{m_p}\partial\ell$ denotes the $m_p$-fold sum of the set $\partial\ell$. 
 Let $z_p$ be a real number satisfying $\frac{2m_1}{m_p}<\partial\ell(z_p)$, i.e., 
 all elements in $\partial\ell(z_p)$ are greater than $\frac{2m_1}{m_p}$. 
 Then, $\widehat{\rho}-\widehat{b}-K\lambda_{m_1}$ should be less than $z_p$. 
 In the same way, for $z_n$ satisfying $\frac{2m_1}{m_n}< \partial\ell(z_n)$, we have
 $\widehat{\rho}+\widehat{b}-K\lambda_{m_1}<z_n$. 
 The existence of $z_p$ and $z_n$ is guaranteed by Assumption 
 \ref{assumption:expectedloss_consistency}. Hence, the inequalities 
 \begin{align*}
  \widehat{\rho}&\leq K\lambda_{m_1}+\max\{z_p,z_n\},\\
  |\widehat{b}|&\leq \frac{\ell(0)}{2}+K\lambda_{m_1}+\max\{z_p,z_n\}
 \end{align*}
 hold, in which $\widehat{\rho}\geq -\ell(0)/2$ is used in the second inequality. 
 Define $\bar{z}$ as a real number such that
 \begin{align*}
  \forall{g}\in\partial\ell(\bar{z}),\quad
  \max\left\{\frac{4}{P(Y=+1)},\,\frac{4}{P(Y=-1)}\right\}< g. 
 \end{align*}
 Inequalities in \eqref{eqn:label-prob_ineq} lead to 
\begin{align*}
 \max\left\{\frac{2m_1}{m_p},\,\frac{2m_1}{m_n}\right\}
 < 
 \max\left\{\frac{4}{P(Y=+1)},\,\frac{4}{P(Y=-1)}\right\}. 
\end{align*}
 Hence, we can choose $\bar{z}$ satisfying $\max\{z_p,z_n\}<\bar{z}$. 
 Suppose that $\ell(0)/2\leq{}K\lambda_{m_1}+\bar{z}$ holds for $m_1\geq{M}$. 
 Then, the inequalities 
\begin{align*}
 |\widehat{\rho}|\leq 2K\lambda_{m_1}+2\bar{z},\quad 
 |\widehat{b}|\leq 2K\lambda_{m_1}+2\bar{z}, 
 \end{align*} 
 hold with the probability higher than $1-e^{-cm_1}$ for $m_1\geq{M}$. 
 By choosing an appropriate positive constant $C>0$, we obtain
 \eqref{eqn:bounds_estimators}. 

\subsection{Proof of Lemma \ref{lemma:uniform_convergence_loss}}


 Since $\|f\|_{\infty}\leq K\lambda_{m_1}$ holds for $f\in\Hcal$ such that
 $\|f\|_{\Hcal}\leq\lambda_{m_1}$, we have the following inequality 
 \begin{align*}
  &\phantom{\leq}
  \sup_{\substack{(x,y)\in\Xcal\times\Ybin\\ (f,b,\rho)\in\Gcal_{m_1}}}L(x,y;f,b,\rho)
  -\inf_{\substack{(x,y)\in\Xcal\times\Ybin\\ (f,b,\rho)\in\Gcal_{m_1}}}L(x,y;f,b,\rho)\\
  &\leq 
  2C\lambda_{m_1}+
  \sup_{\substack{(x,y)\in\Xcal\times\Ybin\\ (f,b,\rho)\in\Gcal_{m_1}}}\ell(\rho-y(f(x)+b))
  -(-2C\lambda_{m_1})\\
  &\leq   
  4C\lambda_{m_1}+\ell(C\lambda_{m_1}+K\lambda_{m_1}+C\lambda_{m_1})\\
  &=b_{m_1}. 
 \end{align*}
 In the same way as the proof of Lemma 3.4 in
 \cite{steinwart05:_consis_of_suppor_vector_machin}, 
 Hoeffding's inequality leads to the upper bound \eqref{eqn:uniform_bound_L}. 
 Eq.\,\eqref{eqn:uniform_bound_F} is the direct conclusion of
 \eqref{eqn:covering_bound_L_G} and 
 \eqref{eqn:covering-upper-bound_G_F}. 

\section{Proof of Theorem \ref{theorem:surrogate_risk-convergence}}
\label{appendix:proof_risk_convergence}

 Lemma \ref{lemma:convergence_regularized_exp_loss} assures that, 
 for any $\gamma>0$, there exists sufficiently large $M_1$ such that
 \begin{align*}
  |\inf\{\Rcal_{\lambda_{m_1}}(f+b,\rho):f\in\Hcal,\,b,\rho\in\Rbb\}-\Rcal^*|\leq\gamma
 \end{align*}
 holds for all $m_1\geq M_1$. 
 Thus, there exist $f_\gamma,b_\gamma$ and $\rho_\gamma$ such that
 \begin{align*}
  |\Rcal_{\lambda_{m_1}}(f_\gamma+b_\gamma,\rho_\gamma)-\Rcal^*|\leq 2\gamma 
 \end{align*}
 and $\|f_\gamma\|_{\Hcal}\leq\lambda_{m_1}$ hold for $m_1\geq{}M_1$. 
 Due to the law of large numbers, the inequality 
 \begin{align*}
  |\widehat{\Rcal}_{T_1}(f_\gamma+b_\gamma,\rho_\gamma)-\Rcal(f_\gamma+b_\gamma,\rho_\gamma)|\leq \gamma
 \end{align*}
 holds with high probability, say $1-\delta_{m_1}$, for $m_1\geq{M_2}$. 
 The boundedness property in Lemma \ref{lemma:estimator_bound} leads to 
 \begin{align*}
  P((\widehat{f},\widehat{b},\widehat{\rho})\in\Gcal_{m_1})\geq 1-e^{-cm_1}
 \end{align*}
 for $m_1\geq{M_3}$. In addition, by the uniform bound shown in Lemma 
 \ref{lemma:uniform_convergence_loss}, the inequality 
 \begin{align*}
  \sup_{(f,b,\rho)\in\Gcal_{m_1}}\!\!\!
  |\widehat{\Rcal}_{T_1}(f+b,\rho)-\Rcal(f+b,\rho)|\leq\gamma
 \end{align*}
 holds with probability $1-\delta'_{m_1}$. Hence, 
 the probability such that the inequality 
 \begin{align*}
  |\widehat{\Rcal}_{T_1}(\widehat{f}
  +\widehat{b},\widehat{\rho})-\Rcal(\widehat{f}+\widehat{b},\widehat{\rho})|
  \leq \gamma
 \end{align*}
 holds is higher than $1-e^{-cm_1}-\delta'_{m_1}$ for $m_1\geq{M_3}$. 
 Let $M_0$ be $M_0=\max\{M_1,M_2,M_3\}$. 
 Then, for any $\gamma>0$, 
 the following inequalities hold with probability higher than 
 $1-e^{-cm_1}-\delta'_{m_1}-\delta_{m_1}$ for $m_1\geq{M_0}$, 
 \begin{align}
  \Rcal(\widehat{f}+\widehat{b},\widehat{\rho})
  &\leq 
  \widehat{\Rcal}_{T_1}(\widehat{f}+\widehat{b},\widehat{\rho})+\gamma  \nonumber\\
  &\leq 
  \label{eqn:appendix_second_inequality_in_proof}
  \widehat{\Rcal}_{T_1}(f_\gamma+b_\gamma,\rho_\gamma)+\gamma\\
  &\leq 
  \Rcal(f_\gamma+b_\gamma,\rho_\gamma)+2\gamma\nonumber\\
  &=
  \Rcal_{\lambda_{m_1}}(f_\gamma+b_\gamma,\rho_\gamma)+2\gamma\nonumber\\
  &\leq\Rcal^*+4\gamma. \nonumber
 \end{align}
 The second inequality \eqref{eqn:appendix_second_inequality_in_proof} above is 
 given as 
 \begin{align*}
  \widehat{\Rcal}_{T_1}(\widehat{f}+\widehat{b},\widehat{\rho})
  =
  \widehat{\Rcal}_{{T_1},\lambda_{m_1}}(\widehat{f}+\widehat{b},\widehat{\rho})
  \leq 
  \widehat{\Rcal}_{T_1,\lambda_{m_1}}(f_\gamma+b_\gamma,\rho_\gamma)
  =
  \widehat{\Rcal}_{T_1}(f_\gamma+b_\gamma,\rho_\gamma). 
 \end{align*}

\section{Proof of Theorem \ref{theorem:bayes_risk_consistency}}
\label{appendix:proof_bayeserror_convergence}
 For a fixed $\rho$ such that $\rho\geq-\ell(0)/2$, the loss function $\ell(\rho-z)$ 
 is classification-calibrated \citep{bartlett06:_convex_class_risk_bound}, since 
 $\ell'(\rho)>0$ holds. 
 Hence $\psi(\theta,\rho)$ in Assumption \ref{assumption:loss_BayesRisk_consistency}
 satisfies 
 $\psi(0,\rho)=0$, 
 $\psi(\theta,\rho)>0$ for $0<\theta\leq1$, and 
 $\psi(\theta,\rho)$ is continuous and strictly increasing in $\theta\in[0,1]$. 
 In addition, for all $f\in\Hcal$ and $b\in\Rbb$, the inequality 
 \begin{align*}
  \psi(\Ecal(f+b)-\Ecal^*,\rho)\leq 
  \Ebb[\ell(\rho-y(f(x)+b))]-
  \inf_{f\in\Hcal,b\in\Rbb}\Ebb[\ell(\rho-y(f(x)+b))]
 \end{align*}
 holds. Details are presented in Theorem 1 and Theorem 2 of
 \cite{bartlett06:_convex_class_risk_bound}. 
 Here we used the equality 
 \begin{align*}
  \inf\{\Ebb[\ell(\rho-y(f(x)+b))]:f\in\Hcal,b\in\Rbb\}
  =\inf\{\Ebb[\ell(\rho-y(f(x)+b))]:f\in{}L_0,b\in\Rbb\}, 
 \end{align*}
 which is shown in Corollary 5.29 of \cite{steinwart08:_suppor_vector_machin}. 
 Hence, we have
 \begin{align*}
  \psi(\Ecal(\widehat{f}+\widehat{b})-\Ecal^*,\widehat{\rho})
  &\leq 
  \Ebb[\ell(\widehat{\rho}-y(\widehat{f}(x)+\widehat{b}))]-
  \inf_{f\in\Hcal,b\in\Rbb}\Ebb[\ell(\widehat{\rho}-y(f(x)+b))]\\
  &=
  \Rcal(\widehat{f}+\widehat{b},\widehat{\rho})
  -
  \inf_{f\in\Hcal,b\in\Rbb}\Rcal(f+b,\widehat{\rho}), 
 \end{align*}
 since 
 $\widehat{\rho}\geq-\ell(0)/2$ holds due to \eqref{eqn:rho_lower-bound}. 
 We assumed that $\Rcal(\widehat{f}+\widehat{b},\widehat{\rho})$ converges to 
 $\Rcal^*$ in probability. Then, for any $\varepsilon>0$, the inequality
 \begin{align*}
 \Rcal^*
 \leq 
 \inf_{f\in\Hcal,b\in\Rbb}\Rcal(f+b,\widehat{\rho})
 \leq 
 \Rcal(\widehat{f}+\widehat{b},\widehat{\rho})
 \leq 
 \Rcal^*+\varepsilon
 \end{align*}
 holds with high probability for sufficiently large $m_1$. 
 Thus, $\psi(\Ecal(\widehat{f}+\widehat{b})-\Ecal^*,\widehat{\rho})$ converges to
 zero in probability. The inequality 
\begin{align*}
 0\leq \widetilde{\psi}(\Ecal(\widehat{f}+\widehat{b})-\Ecal^*)
 \leq 
 \psi(\Ecal(\widehat{f}+\widehat{b})-\Ecal^*,\widehat{\rho})
\end{align*}
 and the assumption on the function $\widetilde{\psi}$ 
 ensure that 
 $\Ecal(\widehat{f}+\widehat{b})$ converges to $\Ecal^*$ in probability, 
 when $m_1$ tends to infinity. 
 As a result, for any $\gamma>0$, 
 \begin{align}
  |\Ecal(\widehat{f}+\widehat{b})-\Ecal^*|\leq \gamma
  \label{eqn:convergence_E_f1}
 \end{align}
 holds with probability higher than $1-\delta_{m_1,\gamma}$ with respect to the probability
 distribution of $T_1$, where $\delta_{m_1,\gamma}$ satisfies
 $\lim_{m_1\rightarrow\infty}\delta_{m_1,\gamma}=0$ for any $\gamma>0$. 
 
 Next, we study the relation between $\widehat{f}+\widehat{b}$ and
 $\widehat{f}+\widetilde{b}$. The sample size of $T_2$ is $m_2$. 
 For any fixed $f\in\Hcal$, we define the set of 0-1 valued functions, 
 $\Scal_f=\{\ind{f(x)+b\geq0}:b\in\Rbb\}$. 
 The VC-dimension of $\Scal_f$ equals to one\footnote{See \cite{Vapnik98} for the
 definition of the VC dimension. }. 
 Indeed, for two distinct points $x,x'\in\Xcal$ such that
 $f(x)\geq{}f(x')$, the event such that $\ind{f(x)+b\geq0}=0$ and $\ind{f(x')+b\geq0}=1$ 
 is impossible. 
 Hence, for any $\varepsilon>0$ and any $f\in\Hcal$, the inequality 
 \begin{align}
  \sup_{b\in\Rbb}|\widehat{\Ecal}_{T_2}(f+b)-\Ecal(f+b)|\leq \gamma
  \label{eqn:uniform_convegence-error_rate_on_Sf}
 \end{align}
 holds with probability higher than $1-\delta_{m_2,\gamma}''$ with respect to
 the joint probability of training sample $T_2$. Note that $\delta_{m_2,\gamma}''$ depends
 only on $m_2$, $\gamma$ and the VC-dimension of $\Scal_f$. Thus, 
 $\delta_{m_2}''$ is independent of the choice of $f\in\Hcal$. 
 Remember that $\widehat{f}+\widehat{b}$ depends only on the data set $T_1$. 
 Due to the law of large numbers, the inequality 
 \begin{align*}
  |\widehat{\Ecal}_{T_2}(\widehat{f}+\widehat{b})-\Ecal(\widehat{f}+\widehat{b})|
  \leq\gamma
 \end{align*}
 holds with probability higher than $1-\delta_{m_2,\gamma}'$ with respect to the probability 
 distribution of $T_2$ conditioned on $T_1$. 
 Since the 0-1 loss is bounded, 
 it is possible to choose $\delta_{m_2,\gamma}'$ independent of $\widehat{f}$. 
 From the uniform convergence property \eqref{eqn:uniform_convegence-error_rate_on_Sf}, 
 the following inequality also holds
 \begin{align*}
  |\widehat{\Ecal}_{T_2}(\widehat{f}+\widetilde{b})-\Ecal(\widehat{f}+\widetilde{b})|
  \leq \gamma
 \end{align*}
 with probability higher than $1-\delta_{m_2,\gamma}''$ with respect to the probability 
 distribution of $T_2$ conditioned on the observation of $T_1$. 
 In addition, we have
 \begin{align*}
  \widehat{\Ecal}_{T_2}(\widehat{f}+\widetilde{b})
  \leq
  \widehat{\Ecal}_{T_2}(\widehat{f}+\widehat{b}). 
 \end{align*}
 Given the training samples $T_1$ satisfying \eqref{eqn:convergence_E_f1}, 
 the inequalities
 \begin{align*}
  \Ecal(\widehat{f}+\widetilde{b})
  \leq 
  \widehat{\Ecal}_{T_2}(\widehat{f}+\widetilde{b})+\gamma
  \leq   
  \widehat{\Ecal}_{T_2}(\widehat{f}+\widehat{b})+\gamma
  \leq     
  \Ecal(\widehat{f}+\widehat{b})+2\gamma
  \leq    
   \Ecal^*+3\gamma
  \end{align*}
 hold with probability higher than $1-\delta_{m_2,\gamma}'-\delta_{m_2,\gamma}''$
 with respect to the probability 
 distribution of $T_2$ conditioned on the observation of $T_1$. 
 Hence, as for the conditional probability, we have
 \begin{align*}
  P(\{T_2:\Ecal(\widehat{f}+\widetilde{b})\leq\Ecal^*+3\gamma\}\,|\,T_1)
  \geq 1-\delta_{m_2,\gamma}'-\delta_{m_2,\gamma}''. 
 \end{align*}
 Remember that $\delta_{m_2,\gamma}'$ and $\delta_{m_2,\gamma}''$ do not depend on $T_1$. 
 Hence, as for the joint probability of $T_1$ and $T_2$, we have
 \begin{align*}
  P(\{T_1,T_2:\Ecal(\widehat{f}+\widetilde{b})\leq\Ecal^*+3\gamma\})
  \geq (1-\delta_{m_2,\gamma}'-\delta_{m_2,\gamma}'')(1-\delta_{m_1,\gamma}). 
 \end{align*} 
 The above inequality implies that $\Ecal(\widehat{f}+\widetilde{b})$ converges to
 $\Ecal^*$ in probability, when $m_1$ and $m_2$ tend to infinity. 

\section{Proofs of 
Lemma \ref{lemma:existence_t_psi_reciplocal_l} and Lemma \ref{lemma:existence_psi_less_differentiable}}
\label{appendix:proof_sufficient_cond_psi} 

\subsection{Proof of Lemma \ref{lemma:existence_t_psi_reciplocal_l}}

 For $\theta=0$ and $\theta=1$, we can directly confirm that the lemma holds. 
 In the following, we assume $0<\theta<1$ and $\rho\geq-\ell(0)/2$. 
 We consider the following optimization problem involved in $\psi(\theta,\rho)$, 
 \begin{align}
  \inf_{z\in\Rbb}\,\frac{1+\theta}{2}\ell(\rho-z)+\frac{1-\theta}{2}\ell(\rho+z). 
  \label{eqn:opt-prob_in_psi}
 \end{align}
 The objective function is a finite-valued convex function on $\Rbb$, 
 and diverges to infinity when $z$ tends to $\pm\infty$. Hence, there exists an optimal
 solution. Let $z^*\in\Rbb$ be an optimal solution of \eqref{eqn:opt-prob_in_psi}. 
 The optimality condition is given as 
 \begin{align*}
  (1+\theta)\ell'(\rho-z^*)-(1-\theta)\ell'(\rho+z^*)=0. 
 \end{align*}
 We assumed that both $1+\theta$ and $1-\theta$ are positive and that
 $\rho\geq-\ell(0)/2>d$ holds. 
 Hence, both $\ell'(\rho-z^*)$ and $\ell'(\rho+z^*)$ should not be zero. 
 Indeed, if one of them is equal to zero, the other is also zero. 
 Hence, we have $\rho-z^*\leq d$ and $\rho+z^*\leq d$. These inequalities contradict 
 $\rho>d$. 
 Then, we have $\rho-z^*>d$ and $\rho+z^*>d$, i.e., $|z^*|<\rho-d$. 
 In addition, we have
 \begin{align*}
  \frac{1+\theta}{2}=\frac{\ell'(\rho+z^*)}{\ell'(\rho+z^*)+\ell'(\rho-z^*)}. 
 \end{align*} 
 Since $\ell''(z)>0$ holds on $(d,\infty)$, 
 the second derivative of the objective in 
 \eqref{eqn:opt-prob_in_psi} satisfies the positivity condition, 
 \begin{align*}
  (1+\theta)\ell''(\rho-z)+(1-\theta)\ell''(\rho+z)>0
 \end{align*}
 for all $z$ such that $\rho-z>d$ and $\rho+z>d$. 
 Therefore, $z^*$ is uniquely determined. 
 For a fixed $\theta\in(0,1)$, 
 the optimal solution can be described as the function of $\rho$, i.e.,
 $z^*=z(\rho)$. By the implicit function theorem, $z(\rho)$ is continuously
 differentiable with respect to $\rho$. 
 Then, 
 the derivative of $\psi(\theta,\rho)$ is given as
 \begin{align*}
  \frac{\partial}{\partial\rho}\psi(\theta,\rho)
  &=
  \frac{\partial}{\partial\rho}
  \left\{
  \ell(\rho)-\frac{1+\theta}{2}\ell(\rho-z(\rho))-\frac{1-\theta}{2}\ell(\rho+z(\rho))
  \right\}\\
  &=
  \ell'(\rho)
  -\frac{1+\theta}{2}\ell'(\rho-z(\rho))\left(1-\frac{\partial{z}}{\partial\rho}\right)
  -\frac{1-\theta}{2}\ell'(\rho+z(\rho))\left(1+\frac{\partial{z}}{\partial\rho}\right)  \\
  &=  
  \ell'(\rho)
  -
  \frac{\ell'(\rho+z(\rho))}{\ell'(\rho+z(\rho))+\ell'(\rho-z(\rho))}
  \ell'(\rho-z(\rho))\left(1-\frac{\partial{z}}{\partial\rho}\right)\\
  &\phantom{=}
  -
  \frac{\ell'(\rho-z(\rho))}{\ell'(\rho+z(\rho))+\ell'(\rho-z(\rho))}
  \ell'(\rho+z(\rho))\left(1+\frac{\partial{z}}{\partial\rho}\right)  \\
  &=
  \ell'(\rho)-
  \frac{2\ell'(\rho-z(\rho))\ell'(\rho+z(\rho))}{\ell'(\rho+z(\rho))+\ell'(\rho-z(\rho))}. 
 \end{align*}
 The convexity of $1/\ell'(z)$ for $z>d$ leads to
 \begin{align*}
  0<
  \frac{1}{\ell'(\rho)}
  \leq 
  \frac{1}{2\ell'(\rho+z(\rho))}
  +\frac{1}{2\ell'(\rho-z(\rho))}
  =
  \frac{\ell'(\rho+z(\rho))+\ell'(\rho-z(\rho))}{2\ell'(\rho-z(\rho))\ell'(\rho+z(\rho))}. 
 \end{align*}
 Hence, we have
 \begin{align*}
  \frac{\partial}{\partial\rho}\psi(\theta,\rho)\geq{0}
 \end{align*}
 for $\rho\geq-\ell(0)/2>d$ and $0<\theta<1$. As a result, we see that 
 $\psi(\theta,\rho)$ is non-decreasing as the function of $\rho$. 

\subsection{Proof of Lemma \ref{lemma:existence_psi_less_differentiable}}
 We use the result of \cite{bartlett06:_convex_class_risk_bound}. 
 For a fixed $\rho$, the function $\xi(z,\rho)$ is continuous for $z\geq0$, 
 and the convexity of $\ell$ leads to the non-negativity of $\xi(z,\rho)$. 
 Moreover, the convexity and the non-negativity of $\ell(z)$ lead to 
\begin{align*}
 \xi(z,\rho)
 \geq 
 \frac{\ell(\rho+z)-\ell(\rho)}{z\ell'(\rho)}-\frac{\ell(\rho)}{z\ell'(\rho)}
 \geq 
 1-\frac{\ell(\rho)}{z\ell'(\rho)}
\end{align*}
 for $z>0$ and $\rho\geq-\ell(0)/2$, where
 $\ell(\rho)$ and $\ell'(\rho)$ are positive for $\rho>-\ell(0)/2$. 
 The above inequality and the continuity of $\xi(\cdot,\rho)$ ensure that
 there exists $z$ satisfying $\xi(z,\rho)=\theta$ for all $\theta$ such that 
 $0\leq \theta<1$. 
 We define the inverse function $\xi_\rho^{-1}$ by 
 \begin{align*}
  \xi_\rho^{-1}(\theta)=\inf\{z\geq{0}:\xi(z,\rho)=\theta\}
 \end{align*}
 for $0\leq \theta<1$. 
 For a fixed $\rho\geq-\ell(0)/2$, the loss function $\ell(\rho-z)$ is
 classification-calibrated  \citep{bartlett06:_convex_class_risk_bound}. 
 Hence, Lemma 3 in \cite{bartlett06:_convex_class_risk_bound}
 leads to the inequality 
 \begin{align*}
  \psi(\theta,\rho)\geq
  \ell'(\rho)\frac{\theta}{2}\xi_\rho^{-1}\big(\frac{\theta}{2}\big), 
 \end{align*}
 for $0\leq \theta<1$. 
 Define $\bar{\xi}^{-1}$ by 
 \begin{align*}
  \bar{\xi}^{-1}(\theta)=\inf\{z\geq{0}:\bar{\xi}(z)=\theta\}. 
 \end{align*} 
 From the definition of $\bar{\xi}(z)$, 
 $\bar{\xi}^{-1}(\theta)$ is well-defined for all $\theta\in[0,1)$. 
 Since $\xi(z,\rho)\leq \bar{\xi}(z)$ holds, 
 we have $\xi_\rho^{-1}(\theta/2)\geq\bar{\xi}^{-1}(\theta/2)$. 
 In addition, $\ell'(\rho)$ is non-decreasing as the function of $\rho$. 
 Thus, we have
 \begin{align*}
  \psi(\theta,\rho)\geq
  \ell'(-\ell(0)/2)\frac{\theta}{2}\bar{\xi}^{-1}\big(\frac{\theta}{2}\big)
 \end{align*} 
 for all $\rho\geq-\ell(0)/2$ and $0\leq\theta<1$. 
 Then, we can choose 
 \begin{align*}
  \widetilde{\psi}(\theta)
  =
  \ell'(-\ell(0)/2)\frac{\theta}{2}\bar{\xi}^{-1}\big(\frac{\theta}{2}\big). 
 \end{align*}
 It is straightforward to confirm that the conditions of Assumption
 \ref{assumption:loss_BayesRisk_consistency} are satisfied. 

\bibliographystyle{plain}

\begin{thebibliography}{31}
\providecommand{\natexlab}[1]{#1}
\providecommand{\url}[1]{\texttt{#1}}
\expandafter\ifx\csname urlstyle\endcsname\relax
  \providecommand{\doi}[1]{doi: #1}\else
  \providecommand{\doi}{doi: \begingroup \urlstyle{rm}\Url}\fi

\bibitem[Arora et~al.(1997)Arora, Babai, Stern, and
  Sweedyk]{arora97:_hardn_approx_optim_lattic_codes}
S.~Arora, L.~Babai, J.~Stern, and Z.~Sweedyk.
\newblock The hardness of approximate optima in lattices, codes, and systems of
  linear equations.
\newblock \emph{J. Comput. Syst. Sci.}, 54\penalty0 (2):\penalty0 317--331,
  1997.

\bibitem[Bartlett et~al.(2006)Bartlett, Jordan, and
  McAuliffe]{bartlett06:_convex_class_risk_bound}
P.~L. Bartlett, M.~I. Jordan, and J.~D. McAuliffe.
\newblock Convexity, classification, and risk bounds.
\newblock \emph{Journal of the American Statistical Association}, 101:\penalty0
  138--156, 2006.

\bibitem[Ben-Tal and Nemirovski(2002)]{ben-tal02:_robus}
A.~Ben-Tal and A.~Nemirovski.
\newblock Robust optimization - methodology and applications.
\newblock \emph{Math. Program.}, 92\penalty0 (3):\penalty0 453--480, 2002.

\bibitem[Ben-Tal et~al.(2009)Ben-Tal, El-Ghaoui, and
  Nemirovski]{book:Ben-Tal+etal:2009}
A.~Ben-Tal, L.~El-Ghaoui, and A.~Nemirovski.
\newblock \emph{Robust Optimization}.
\newblock Princeton University Press, Princeton, 2009.

\bibitem[Bennett and Bredensteiner(2000)]{ICML:Bennett+Bredensteiner:2000}
K.~P. Bennett and E.~J. Bredensteiner.
\newblock Duality and geometry in {SVM} classifiers.
\newblock In \emph{Proceedings of International Conference on Machine
  Learning}, pages 57--64, 2000.

\bibitem[Berlinet and Thomas-Agnan(2004)]{berlinet04:_reprod_hilber}
A.~Berlinet and C.~Thomas-Agnan.
\newblock \emph{Reproducing kernel Hilbert spaces in probability and
  statistics}.
\newblock Kluwer Academic, 2004.

\bibitem[Bertsekas et~al.(2003)Bertsekas, Nedic, and
  Ozdaglar]{book:Bertsekas+etal:2003}
D.~Bertsekas, A.~Nedic, and A.~Ozdaglar.
\newblock \emph{Convex Analysis and Optimization}.
\newblock Athena Scientific, Belmont, MA, 2003.

\bibitem[Cortes and Vapnik(1995)]{cortes95:_suppor_vector_networ}
C.~Cortes and V.~Vapnik.
\newblock Support-vector networks.
\newblock \emph{Machine Learning}, 20:\penalty0 273--297, 1995.

\bibitem[Crisp and Burges(2000)]{CriBur00}
D.~J. Crisp and C.~J.~C. Burges.
\newblock A geometric interpretation of $\nu$-{SVM} classifiers.
\newblock In S.~A. Solla, T.~K. Leen, and K.-R. M{\"u}ller, editors,
  \emph{Advances in Neural Information Processing Systems 12}, pages 244--250.
  MIT Press, 2000.

\bibitem[Cucker and Smale(2002)]{cucker02}
F.~Cucker and S.~Smale.
\newblock On the mathematical foundations of learning.
\newblock \emph{Bulletin of the American Mathematical Society}, 39:\penalty0
  1--49, 2002.

\bibitem[Evgeniou et~al.(1999)Evgeniou, Pontil, and
  Poggio]{evgeniou99:_regul_networ_suppor_vector_machin}
T.~Evgeniou, M.~Pontil, and T.~Poggio.
\newblock A unified framework for regularization networks and support vector
  machines.
\newblock \emph{Laboratory, Massachusetts Institute of Technology}, 1999.

\bibitem[Freund and Schapire(1997)]{FreundSchapire97}
Y.~Freund and R.~E. Schapire.
\newblock A decision-theoretic generalization of on-line learning and an
  application to boosting.
\newblock \emph{Journal of Computer and System Sciences}, 55\penalty0
  (1):\penalty0 119--139, aug 1997.

\bibitem[Friedman et~al.(1998)Friedman, Hastie, and
  Tibshirani]{Friedman98additivelogistic}
J.~Friedman, T.~Hastie, and R.~Tibshirani.
\newblock Additive logistic regression: a statistical view of boosting.
\newblock \emph{Annals of Statistics}, 28:\penalty0 2000, 1998.

\bibitem[Hastie et~al.(2001)Hastie, Tibishirani, and Friedman]{Hastie_etal01}
T.~Hastie, R.~Tibishirani, and J.~Friedman.
\newblock \emph{The elements of statistical learning}.
\newblock Springer, New York, 2001.

\bibitem[Lanckriet et~al.(2003)Lanckriet, Ghaoui, Bhattacharyya, and
  Jordan]{Lanckriet:2003:RMA:944919.944934}
G.~R.~G. Lanckriet, L.~El Ghaoui, C.~Bhattacharyya, and M.~I. Jordan.
\newblock A robust minimax approach to classification.
\newblock \emph{Journal of Machine Learning Research}, 3:\penalty0 555--582,
  2003.

\bibitem[Luenberger(1997)]{luenberger97:_optim}
D.G. Luenberger.
\newblock \emph{Optimization by vector space methods}.
\newblock Series in decision and control. Wiley, 1997.

\bibitem[Mavroforakis and
  Theodoridis(2006)]{mavroforakis06:_suppor_vector_machin_svm}
M.~E. Mavroforakis and S.~Theodoridis.
\newblock A geometric approach to support vector machine (svm) classification.
\newblock \emph{IEEE Transactions on Neural Networks}, 17\penalty0
  (3):\penalty0 671--682, 2006.

\bibitem[Nath and Bhattacharyya(2007)]{Nath07}
J.~S. Nath and C.~Bhattacharyya.
\newblock Maximum margin classifiers with specified false positive and false
  negative error rates.
\newblock In C.~Apte, B.~Liu, S.~Parthasarathy, and D.~Skillicorn, editors,
  \emph{Proceedings of the seventh SIAM International Conference on Data
  mining}, pages 35--46. SIAM, 2007.

\bibitem[R\"{a}tsch et~al.(2000)R\"{a}tsch, Sch\"{o}lkopf, Smola, Mika, Onoda,
  and M\"{u}ller]{ratsch00:_robus}
G.~R\"{a}tsch, B.~Sch\"{o}lkopf, A.J. Smola, S.~Mika, T.~Onoda, and K.-R.
  M\"{u}ller.
\newblock \emph{Robust ensemble learning.}, pages 207--220.
\newblock MIT Press, Cambridge, MA, 2000.

\bibitem[R\"{a}tsch et~al.(2001)R\"{a}tsch, Onoda, and
  M\"{u}ller]{mach:raetsch+onoda+mueller:2001}
G.~R\"{a}tsch, T.~Onoda, and K.-R. M\"{u}ller.
\newblock Soft margins for adaboost.
\newblock \emph{Machine Learning}, 42\penalty0 (3):\penalty0 287--320, 2001.

\bibitem[Rockafellar(1970)]{book:Rockafellar:1970}
R.~T. Rockafellar.
\newblock \emph{Convex Analysis}.
\newblock Princeton University Press, Princeton, NJ, USA, 1970.

\bibitem[Schapire et~al.(1998)Schapire, Freund, Bartlett, and
  Lee]{Schapire_etal98}
R.~E. Schapire, Y.~Freund, P.~Bartlett, and W.~S. Lee.
\newblock Boosting the margin: A new explanation for the effectiveness of
  voting methods.
\newblock \emph{The Annals of Statistics}, 26\penalty0 (5):\penalty0
  1651--1686, 1998.

\bibitem[Sch{\"o}lkopf and Smola(2002)]{book:Schoelkopf+Smola:2002}
B.~Sch{\"o}lkopf and A.~J. Smola.
\newblock \emph{Learning with Kernels}.
\newblock {MIT} Press, Cambridge, MA, 2002.

\bibitem[Sch\"olkopf et~al.(2000)Sch\"olkopf, Smola, Williamson, and
  Bartlett]{nc:Schoelkopf+Smola+Williamson:2000}
B.~Sch\"olkopf, A.~Smola, R.~Williamson, and P.~Bartlett.
\newblock New support vector algorithms.
\newblock \emph{Neural Computation}, 12\penalty0 (5):\penalty0 1207--1245,
  2000.

\bibitem[Steinwart(2003)]{steinwart03:_optim_param_choic_suppor_vector_machin}
I.~Steinwart.
\newblock On the optimal parameter choice for v-support vector machines.
\newblock \emph{IEEE Trans. Pattern Anal. Mach. Intell.}, 25\penalty0
  (10):\penalty0 1274--1284, 2003.

\bibitem[Steinwart(2005)]{steinwart05:_consis_of_suppor_vector_machin}
I.~Steinwart.
\newblock Consistency of support vector machines and other regularized kernel
  classifiers.
\newblock \emph{IEEE Transactions on Information Theory}, 51\penalty0
  (1):\penalty0 128--142, 2005.

\bibitem[Steinwart and Christmann(2008)]{steinwart08:_suppor_vector_machin}
I.~Steinwart and A.~Christmann.
\newblock \emph{Support Vector Machines}.
\newblock Springer Publishing Company, Incorporated, 1st edition, 2008.

\bibitem[Takeda et~al.(2012)Takeda, Mitsugi, and
  Kanamori]{takeda12:_unified_robus_class_model}
A.~Takeda, H.~Mitsugi, and T.~Kanamori.
\newblock A unified robust classification model, 2012.
\newblock submitted.

\bibitem[Vapnik(1998)]{Vapnik98}
V.~Vapnik.
\newblock \emph{Statistical Learning Theory}.
\newblock Wiley, 1998.

\bibitem[Zhang(2004)]{zhang04:_statis}
T.~Zhang.
\newblock Statistical behavior and consistency of classification methods based
  on convex risk minimization.
\newblock \emph{Annals of Statitics}, 32\penalty0 (1):\penalty0 56--85, 2004.

\bibitem[Zhou(2002)]{zhou02:_cover_number_in_learn_theor}
D.-X. Zhou.
\newblock The covering number in learning theory.
\newblock \emph{Journal of Complexity}, 18\penalty0 (3):\penalty0 739--767,
  2002.

\end{thebibliography}

\end{document}